\def\eqref#1{equation~\ref{#1}}
\def\1{\bm{1}}
\def\rvx{{\mathbf{x}}}
\def\rvz{{\mathbf{z}}}
\DeclareMathAlphabet{\mathsfit}{\encodingdefault}{\sfdefault}{m}{sl}
\SetMathAlphabet{\mathsfit}{bold}{\encodingdefault}{\sfdefault}{bx}{n}
\newcommand{\cmark}{\ding{51}}%
\newcommand{\xmark}{\ding{55}}%
\crefname{algorithm}{Algorithm}{Algorithms}
\Crefname{algorithm}{Algorithm}{Algorithms}
\def\rvx{{\mathbf{x}}}
\def\rvz{{\mathbf{z}}}
\title{Diffusion Bridge AutoEncoders \\for Unsupervised Representation Learning}
\author{Yeongmin Kim$^1$,\,  Kwanghyeon Lee$^1$,\,  Minsang Park$^1$,\, Byeonghu Na$^1$,\, Il-Chul Moon$^{1,2}$ \\
$^1$Korea Advanced Institute of Science and Technology (KAIST), $^2$summary.ai\\
\texttt{\{alsdudrla10,rhkdgus0414,pagemu,byeonghu.na,icmoon\}@kaist.ac.kr}
}
\begin{document}

\maketitle

\begin{abstract}
Diffusion-based representation learning has achieved substantial attention due to its promising capabilities in latent representation and sample generation. Recent studies have employed an auxiliary encoder to extract a corresponding representation from data and adjust the dimensionality of a latent variable $\rvz$. Meanwhile, this auxiliary structure invokes an \textit{information split problem}; the information of each data instance $\rvx_0$ is divided into diffusion endpoint $\rvx_T$ and encoded $\rvz$ because there exist two inference paths starting from the data. The latent variable modeled by the diffusion endpoint $\rvx_T$ has several disadvantages. The diffusion endpoint $\rvx_T$ is computationally expensive to obtain and inflexible in terms of dimensionality. To address this problem, we introduce Diffusion Bridge AutoEncoders (DBAE), which enables $\rvz$-dependent endpoint $\rvx_T$ inference through a feed-forward architecture. This structure creates an information bottleneck at $\rvz$, ensuring that $\rvx_T$ depends on $\rvz$ during its generation. This results in $\rvz$ holding the full information of the data. We propose an objective function for DBAE to enable both reconstruction and generative modeling, with theoretical justification. Empirical evidence demonstrates the effectiveness of the intended design in DBAE, which notably enhances downstream inference quality, reconstruction, and disentanglement. Additionally, DBAE generates high-fidelity samples in an unconditional generation. Our code is available at \url{https://github.com/aailab-kaist/DBAE}.

\end{abstract}

\section{Introduction}
Unsupervised representation learning is a fundamental topic within the latent variable generative models~\citep{hinton2006fast,KingmaW13,higgins2017betavae,chen2016infogan,jeff2017adversarial,alemi2018fixing}. Effective representation supports better downstream inference as well as realistic data synthesis. 
Variational autoencoders (VAEs)~\citep{KingmaW13} are frequently used because they inherently include latent representations with flexible dimensionality. Generative adversarial networks (GANs)~\citep{goodfellow2014generative} with inversion~\citep{abdal2019image2stylegan,abdal2020image2stylegan++} are another method to find latent representations. Additionally, diffusion probabilistic models (DPMs)~\citep{ho2020denoising,song2021scorebased} have achieved state-of-the-art performance in terms of generation quality~\citep{dhariwal2021diffusion}, naturally prompting efforts to explore unsupervised representation learning within the DPM framework~\citep{preechakul2022diffusion,zhang2022unsupervised,yue2024exploring}, which have recently dominated generative representation learning studies.

DPMs are a type of latent variable generative model, but inference on latent variables is not straightforward. DPMs progressively map from data $\rvx_0$ to a latent endpoint $\rvx_T$ via a predefined noise injection schedule, which does not facilitate learnable encoding. DDIM~\citep{song2021denoising} introduces an ODE-based deterministic encoding from the data $\rvx_0$ to the endpoint $\rvx_T$. However, this encoding is determined by the choice of the forward process~\citep{song2021scorebased}. Since the forward process with fixed noise injection is difficult to interpret as having semantic meaning, the ODE-based encoding remains challenging to consider as an effective semantic representation. Moreover, the encoding $\rvx_0$ into $\rvx_T$ is expensive because it requires solving the ODE, and its inflexible dimensionality poses disadvantages for downstream applications~\citep{sinha2021d2c}.

To tackle this issue, recent DPM-based representation learning studies~\citep{preechakul2022diffusion,zhang2022unsupervised,kim2022unsupervised,wang2023infodiffusion,yang2023disdiff,yue2024exploring,Wu_Zheng_2024} suggest an auxiliary latent variable $\rvz$ with an encoder used in VAEs, to combine the generation performance of diffusion models and the representation learning capabilities of VAEs. The encoder-generated latent variable $\rvz$ is obtained without solving the ODE, and the encoder also facilitates the learning of semantic representations with dimensionality reduction. The reconstruction capability from the extracted latent representation $\rvz$ is the primary focus of these studies, facilitating downstream inference, attribute manipulation, and interpolation. 

This paper points out the remaining problem in auxiliary encoder models, which we refer to as the \textit{information split problem}, hindering reconstruction capability. The information is not solely retained in the latent variable $\rvz$; rather, a portion is also distributed into the latent variable $\rvx_T$ as evidenced by \Cref{fig:1b}. If the auxiliary encoder models only infer $\rvz$ and reconstruct using a random $\rvx_T$, the facial details of the original image are not properly reconstructed, indicating that the missing information is contained within $\rvx_T$. Furthermore, the inference of $\rvx_T$ is computationally expensive and inflexible in dimensionality. To address this issue, we introduce Diffusion Bridge AutoEncoders (DBAE), which incorporate $\rvz$-dependent endpoint $\rvx_T$ inference using a feed-forward architecture.

The proposed model DBAE systematically resolves the \textit{information split problem}. Unlike the two split inference paths in the previous approach in \Cref{fig:1a}, DBAE encourages $\rvz$ to become an information bottleneck during inference (dotted line in \Cref{fig:1c}), making $\rvz$ more informative. DBAE establishes this bottleneck structure by defining a learnable forward process that starts from the data $\rvx_0$ and ends at the encoded endpoint $\rvx_T$ by utilizing Doob's $h$-transform.  Moreover, DBAE does not require solving an ODE to infer endpoint $\rvx_T$, thereby making endpoint inference more efficient, as shown in \Cref{fig:1d}. This efficient inference of $\rvx_T$ benefits interpolation and attribute manipulation tasks. In experiments, DBAE outperforms the previous works in downstream inference quality, reconstruction, disentanglement, and unconditional generation. DBAE also demonstrates satisfactory results in interpolation and attribute manipulation with its qualitative advantages.

\begin{figure}[t]
     \centering
     \begin{subfigure}[b]{0.33\textwidth}
         \centering
         \includegraphics[width=\textwidth, height=0.75in]{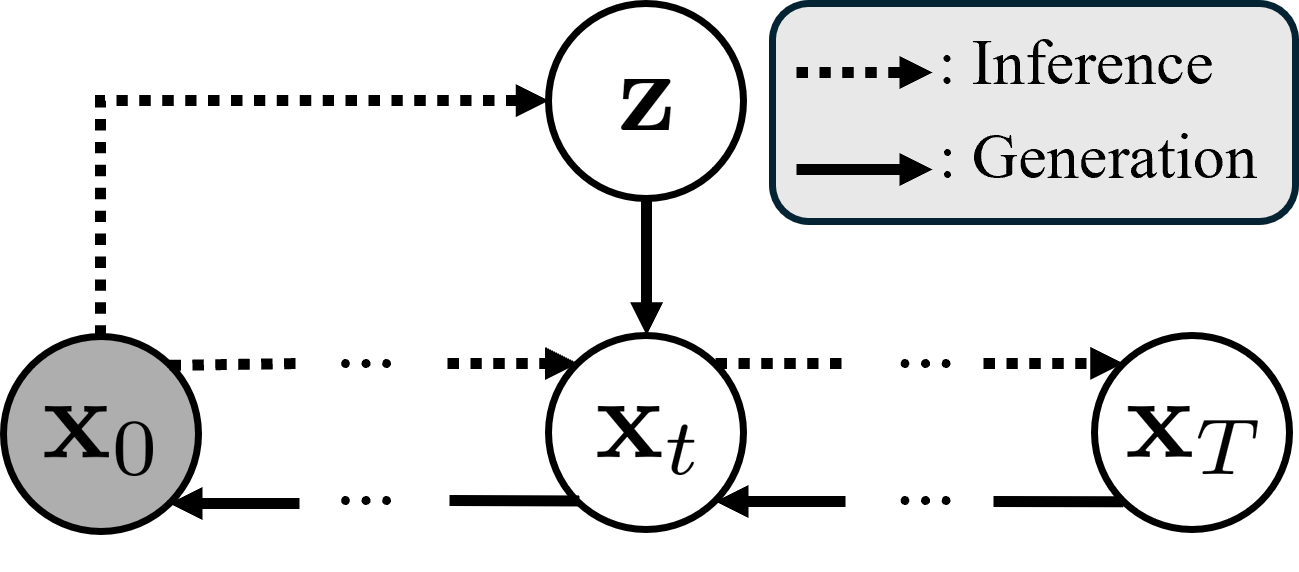}
         \caption{Bayesian network of DiffAE}
         \label{fig:1a}
     \end{subfigure}
     \hfill
     \begin{subfigure}[b]{0.65\textwidth}
         \centering
         \includegraphics[width=\textwidth, height=0.8in]{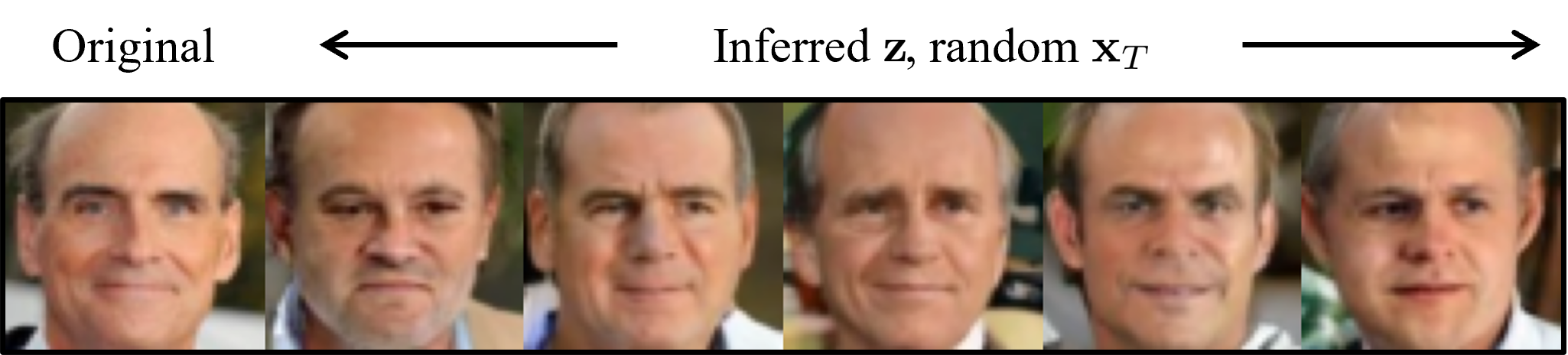}
         \caption{Reconstruction varying $\rvx_T\sim \mathcal{N}(\mathbf{0},\mathbf{I})$ in DiffAE} 
          \label{fig:1b}
     \end{subfigure}
        \label{fig:moredre}
    \begin{subfigure}[b]{0.33\textwidth}
         \centering
         \includegraphics[width=\textwidth, height=0.85in]{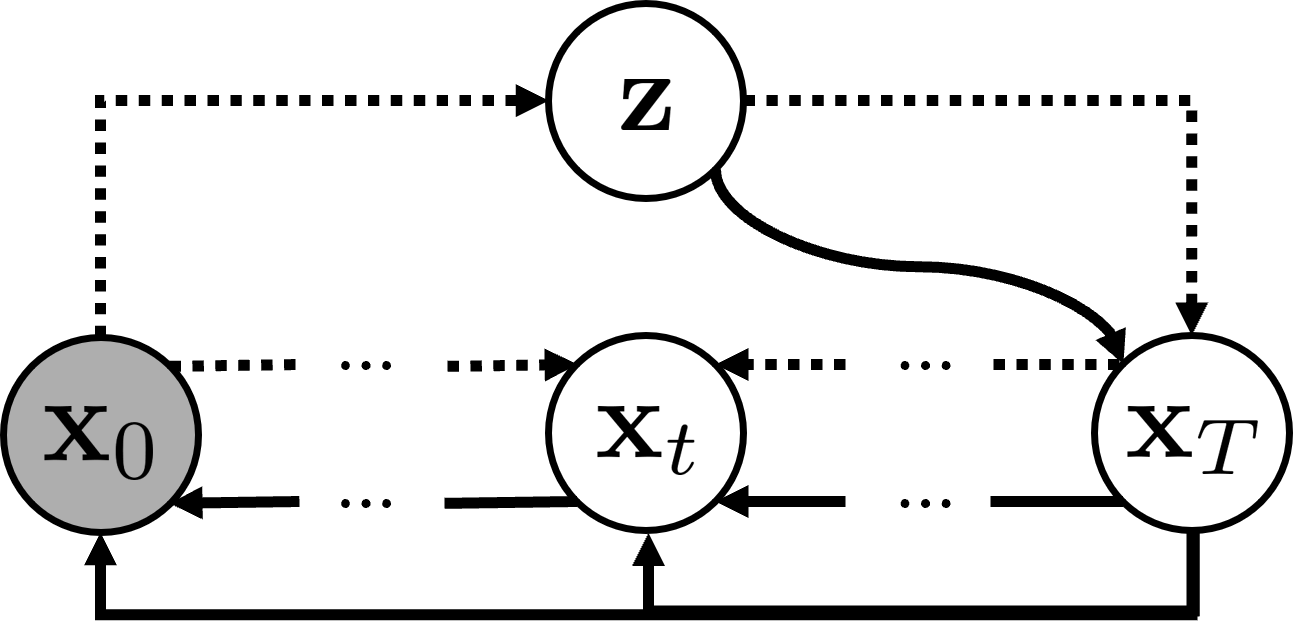}
         \caption{Bayesian network of DBAE}
          \label{fig:1c}
     \end{subfigure}
     \hfill
     \begin{subfigure}[b]{0.65\textwidth}
         \centering
         \includegraphics[width=\textwidth, height=0.8in]{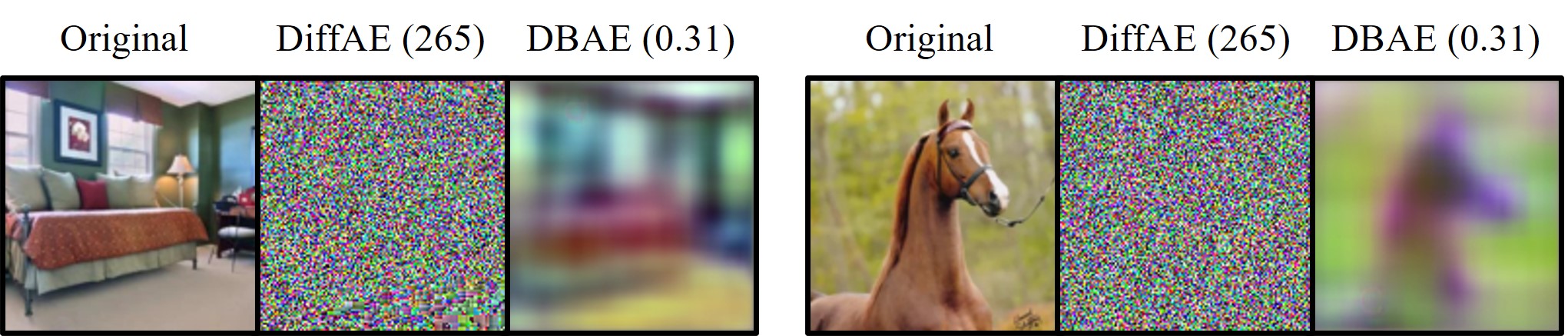}
         \caption{Inferred $\rvx_T$ from DiffAE and DBAE with inference time (ms).}
          \label{fig:1d}
     \end{subfigure}
     %\vspace{-1mm}
         \caption{Comparison between DiffAE~\citep{preechakul2022diffusion} and DBAE. (a) depicts the simplified Bayesian network of DiffAE, illustrating two inference paths for the distinct latent variables $\rvx_T$ and $\rvz$. (b) shows the reconstruction using the inferred $\rvz$ in DiffAE on CelebA, where the reconstruction results perceptually vary depending on the selection of $\rvx_T$. (c) shows the simplified Bayesian network of DBAE with $\rvz$-dependent $\rvx_T$ inference. (d) shows the inferred $\rvx_T$ from DiffAE and DBAE.} 
        \label{fig:1}
        \vspace{-5mm}
\end{figure}

\section{Preliminaries}
\subsection{Diffusion Models}
\label{sec:2.1}
Diffusion probabilistic models (DPMs)~\citep{sohl2015deep,ho2020denoising} with a continuous time formulation~\citep{song2021scorebased} define a forward stochastic differential equation (SDE)
\begin{align}
& \mathrm{d}\rvx_t = \mathbf{f}(\rvx_t, t)\mathrm{d}t + g(t)\mathrm{d}\mathbf{w}_t, \quad \rvx_0 \sim q_{\textrm{data}}(\rvx_0),  \label{eq:for}
\end{align}
 where $\mathbf{w}_t$ denotes a standard Wiener process, $\mathbf{f}:\mathbb{R}^{d}\times [0,T] \rightarrow \mathbb{R}^{d}$ is a drift term, and $g:[0,T] \rightarrow \mathbb{R}$ is a volatility term. \cref{eq:for} starts from data distribution $q_{\textrm{data}}(\rvx_0)$ and gradually perturbs it into noise $\rvx_T$. Let the marginal distribution of \cref{eq:for} at time $t$ be denoted as $\tilde{q}_t(\rvx_t)$. There exists a unique reverse-time SDE~\citep{anderson1982reverse}
 \begin{align}
& \mathrm{d}\rvx_t = [\mathbf{f}(\rvx_t, t)-g^2(t)\nabla_{\rvx_t} \log \tilde{q}_t(\rvx_t)] \mathrm{d}t + g(t)\mathrm{d}\mathbf{\bar{w}}_t, \quad \rvx_T \sim p_{\textrm{prior}}(\rvx_T),   \label{eq:back}
\end{align}
where $\mathbf{\bar{w}}_t$ denotes a reverse-time Wiener process, $\nabla_{\rvx_t} \log \tilde{q}_t(\rvx_t)$ is the time-dependent score function, and $p_{\textrm{prior}}(\rvx_T)$ stands for the prior distribution, which closely resembles a Gaussian distribution with the specific form of $\mathbf{f}$ and $g$~\citep{song2021scorebased,ho2020denoising}. \cref{eq:back} traces back from noise $\rvx_T$ to data $\rvx_0$. The reverse-time ordinary differential equation (ODE) 
\begin{align}
& \mathrm{d}\rvx_t = [\mathbf{f}(\rvx_t, t)-\frac{1}{2}g^2(t)\nabla_{\rvx_t} \log \tilde{q}_t(\rvx_t)] \mathrm{d}t, \quad \rvx_T \sim p_{\textrm{prior}}(\rvx_T),\label{eq:backode}
\end{align}
produces a marginal distribution identical to \cref{eq:back} for all $t$, offering an alternative generative process while confining the stochasticity of the trajectory solely to $\rvx_T$.  
 To construct both reverse SDE and ODE, the diffusion model estimates a time-dependent score function $\nabla_{\rvx_t} \log \tilde{q}_t(\rvx_t) \approx \mathbf{s}_{\boldsymbol{\theta}}(\rvx_t,t)$ using a neural network and the score-matching objective~\citep{vincent2011connection,song2019generative}.  

\subsection{Latent Representation Learning with Diffusion Models}
\label{sec:2.2}
From the perspective of representation learning, the ODE in \cref{eq:backode} (a.k.a DDIM~\citep{song2021denoising} in discrete time diffusion formulation) provides a deterministic encoding from the data $\rvx_0$ to the latent $\rvx_T$. However, the latent representation $\rvx_T$ has some disadvantages. First, it is hard to learn its semantic meaning. This encoding is determined by the forward process $(\mathbf{f}, g)$ given a data distribution and assuming perfect optimization~\citep{song2021scorebased}. The forward process $(\mathbf{f}, g)$ is set to a fixed noise injection process, but the noise is hard to consider as a semantically meaningful encoding. Second, the dimension cannot be reduced. According to the definition of the diffusion process in \cref{eq:for}, the dimension of $\rvx_T$ must be the same as the data dimension. This hinders learning a compact representation, making it hard to facilitate downstream inference or attribute manipulation~\citep{sinha2021d2c}. Finally, $\rvx_T$ is computationally expensive to obtain. To infer $\rvx_T$ from the data point $\rvx_0$, it is necessary to numerically solve the ODE in \cref{eq:backode}. This results in high time complexity for inferring $\rvx_T$, which makes it inefficient to exploit latent representations.

To resolve the problem in the latent endpoint $\rvx_T$, some previous literature, e.g., DiffAE~\citep{preechakul2022diffusion}, proposes an auxiliary latent space utilizing a learnable encoder Enc$_{\boldsymbol{\phi}}:\mathbb{R}^{d} \rightarrow \mathbb{R}^{l}$, which maps from data $\rvx_0$ to an auxiliary latent variable $\rvz$. Unlike DDIM, these approaches tractably obtain $\rvz$ from $\rvx_0$ without solving the ODE, and the encoder can directly learn the latent space in reduced dimensionality. Consequently, the generative ODE 
\begin{align}
& \mathrm{d}\rvx_t = [\mathbf{f}(\rvx_t, t)-\frac{1}{2}g^2(t) \mathbf{s}_{\boldsymbol{\theta}}(\rvx_t,\rvz,t)] \mathrm{d}t,\label{eq:zbackode}
\end{align}
becomes associated with the $\rvz$-conditional score function $\mathbf{s}_{\boldsymbol{\theta}}(\rvx_t,\rvz,t)$, which approximates $\nabla_{\rvx_t} \log q^t_{\boldsymbol{\phi}}(\rvx_t|\rvz)$. The generation starts from two distinct latent variables $\rvz$ and $\rvx_T$, and defines the conditional probability $p^{\text{ODE}}_{\boldsymbol{\theta}}(\rvx_0|\rvz,\rvx_T)$. The ODE also provides an encoding from $\rvx_0$ and $\rvz$ to $\rvx_T$, which defines the conditional probability $q^{\text{ODE}}_{\boldsymbol{\theta}}(\rvx_T|\rvz,\rvx_0)$. However, the auxiliary encoder framework encounters an \textit{information split problem} which this paper raises in \Cref{sec:10}. This paper proposes a method to mitigate this problem.

\subsection{Diffusion Process with Fixed Endpoints}
\label{sec:2.4}
To control the information regarding the diffusion endpoint $\rvx_T$, it is imperative to specify a forward SDE that terminates at the desired endpoint. We employ Doob's $h$-transform~\citep{doob1984classical}, which facilitates the conversion of the original forward SDE in \cref{eq:for} into 
\begin{align}
& \mathrm{d}\rvx_t = [\mathbf{f}(\rvx_t, t)+g^2(t)\mathbf{h}(\rvx_t,t,\mathbf{y},T)]\mathrm{d}t + g(t)\mathrm{d}\mathbf{w}_t, \quad \rvx_0 \sim q_{\textrm{data}}(\rvx_0), \quad \rvx_T=\mathbf{y}  \label{eq:doobfor},
\end{align}
where $\mathbf{h}(\rvx_t,t,\mathbf{y},T):=\nabla_{\rvx_t}\log{\tilde{q}_t(\rvx_T|\rvx_t)}|_{\rvx_T=\mathbf{y}}$ is the score function of the perturbation kernel from the original forward SDE, and $\mathbf{y}$ denotes the desired endpoint. Let $q_t(\rvx_t)$ denote the marginal distribution of \cref{eq:doobfor} at $t$. It is noteworthy that when both $\rvx_0$ and $\rvx_T$ are given, the conditional probability of $\rvx_t$ becomes identical to that of the original forward SDE, i.e., $q_t(\rvx_t|\rvx_T,\rvx_0)=\tilde{q}_t(\rvx_t|\rvx_T,\rvx_0)$. If the original forward SDE in \cref{eq:for} is set to be a specific form (e.g., variance preserving SDE~\citep{ho2020denoising}), then $q_t(\rvx_t|\rvx_T,\rvx_0)$ follows a Gaussian distribution. This means that sampling of $\rvx_t \sim q_t(\rvx_t|\rvx_T,\rvx_0)$ at any time $t$ is tractable with an exact density function.

Corresponding to the $h$-transformed forward SDE of \cref{eq:doobfor}, there also exist unique reverse-time SDE and ODE~\citep{anderson1982reverse,zhou2024denoising} 
\begin{align}
& \mathrm{d}\rvx_t = [\mathbf{f}(\rvx_t, t)-g^2(t)\nabla_{\rvx_t} \log q_t(\rvx_t|\rvx_T)+g^2(t)\mathbf{h}(\rvx_t,t,\mathbf{y},T)] \mathrm{d}t + g(t)\mathrm{d}\mathbf{\bar{w}}_t, \rvx_T =\mathbf{y},   \label{eq:doobback}\\ 
& \mathrm{d}\rvx_t = [\mathbf{f}(\rvx_t, t)-\frac{1}{2}g^2(t)\nabla_{\rvx_t} \log q_t(\rvx_t|\rvx_T)+g^2(t)\mathbf{h}(\rvx_t,t,\mathbf{y},T)] \mathrm{d}t, \quad \rvx_T =\mathbf{y},\label{eq:doobbackode}
\end{align} 
 where $q_t(\rvx_t|\rvx_T)$ is the conditional probability defined by \cref{eq:doobfor}. To construct the reverse SDE and ODE, it is necessary to estimate $\nabla_{\rvx_t} \log q_t(\rvx_t|\rvx_T) \approx \mathbf{s}_{\boldsymbol{\theta}}(\rvx_t,t,\rvx_T)$ through a neural network with a score matching objective~\citep{zhou2024denoising}
\begin{align}
& \frac{1}{2}\int_0^T \mathbb{E}_{q_{t}(\rvx_t,\rvx_T)}[g^2(t)||\mathbf{s}_{\boldsymbol{\theta}}(\rvx_t,t,\rvx_T)-\nabla_{\rvx_t} \log q_{t}(\rvx_t|\rvx_T)||_2^2] \mathrm{d}t.\label{eq:bridge_sm} 
\end{align}

\section{Motivation: Information Split Problem}
\label{sec:10}
This paper raises a problem in diffusion-based representation learning with auxiliary encoders~\citep{preechakul2022diffusion,zhang2022unsupervised,wang2023infodiffusion,yang2023disdiff,yue2024exploring,Wu_Zheng_2024} introduced in \Cref{sec:2.2}. The latent variable $\rvz$ from the encoder has benefits compared to the latent endpoint $\rvx_T$, but the auxiliary encoder framework encounters an \textit{information split problem}: the information of the data is split into two latent variables $\rvz$ and $\rvx_T$. The generative process in \cref{eq:zbackode} initiates with two latent variables $\rvz$ and $\rvx_T$. If the framework only relies on the tractably inferred latent variable $\rvz$, the reconstruction outcomes depicted in \Cref{fig:1b} appear to fluctuate depending on the choice of $\rvx_T$. This implies that $\rvx_T$ encompasses crucial information necessary for reconstructing $\rvx_0$. To represent all the information of $\rvx_0$, it is necessary to infer $\rvx_T$ by solving the ODE in \cref{eq:zbackode} from input $\rvx_0$ to endpoint $\rvx_T$, enduring its computational costs. Consequently, the persisting issue within the latent variable $\rvx_T$ remains unresolved in this framework.

To learn an informative latent representation, the mutual information between the data and the latent variable needs to be maximized~\citep{alemi2018fixing}. The \textit{information split problem} hinders the maximization of the mutual information between the data $\rvx_0$ and the latent variable $\rvz$. The variational lower bound of the mutual information in the auxiliary encoder framework is
\begin{align}
\mathbb{E}_{q_{\text{data}}(\rvx_0),q_{\boldsymbol{\phi}}(\rvz|\rvx_0)}[-CE(q^{\text{ODE}}_{\boldsymbol{\theta}}(\rvx_T|\rvz,\rvx_0) || p_{\text{prior}}(\rvx_T))] + H \leq MI(\rvx_0,\rvz), \label{eq:mi_diffae}
\end{align}
where $MI(\rvx_0,\rvz):=\mathbb{E}_{q_{\boldsymbol{\phi}}(\rvx_0,\rvz)}[\log{\frac{q_{\boldsymbol{\phi}}(\rvx_0,\rvz)}{q_{\text{data}}(\rvx_0)q_{\boldsymbol{\phi}}(\rvz)}}]$ represents the mutual information, $H:=\mathcal{H}(q_{\text{data}}(\rvx_0))$ denotes the data entropy, and $CE(q^{\text{ODE}}_{\boldsymbol{\theta}}(\rvx_T|\rvz,\rvx_0) || p_{\text{prior}}(\rvx_T)):=\mathbb{E}_{q^{\text{ODE}}_{\boldsymbol{\theta}}(\rvx_T|\rvz,\rvx_0)}[-\log{p_{\text{prior}}(\rvx_T)}]$ is the cross-entropy. The cross-entropy term increases as the discrepancy between $q^{\text{ODE}}_{\boldsymbol{\theta}}(\rvx_T|\rvz,\rvx_0)$ and $p_{\text{prior}}(\rvx_T)$ increases, resulting in a looser lower bound on the mutual information. Since $q^{\text{ODE}}_{\boldsymbol{\theta}}(\rvx_T|\rvz,\rvx_0)$ inherently forms a Dirac delta distribution due to the nature of ODEs, the discrepancy between $q^{\text{ODE}}_{\boldsymbol{\theta}}(\rvx_T|\rvz,\rvx_0)$ and $p_{\text{prior}}(\rvx_T)$ is inevitable in this framework.  For more details, please refer to \Cref{sec:A1.4.1}.

% Here, the non-negative term $D_{KL}(q^{\text{ODE}}_{\boldsymbol{\theta}}(\rvx_T|\rvx_0,\rvz)||p_{\text{prior}}(\rvx_T))$ explains the \textit{information split problem}. Since each data $\rvx_0$ maps to unique $\rvx_T$ with the ODE, the probability $q^{\text{ODE}}_{\boldsymbol{\theta}}(\rvx_T|\rvx_0,\rvz)$ deviates from $p_{\text{prior}}(\rvx_T)$. This means inferred $\rvx_T$ contains the information from $\rvx_0$. This discrepancy makes a loose lower bound on the mutual information. Please see \Cref{sec:A1.4.1} for details.

\section{Method: Diffusion Bridge AutoEncoders}
\label{sec:3}
To resolve the \textit{information split problem} in auxiliary encoder models, we introduce Diffusion Bridge AutoEncoders (DBAE) featuring $\rvz$-dependent endpoint $\rvx_T$ inference using a single network propagation. The endpoint $\rvx_T$ in DBAE only depends on $\rvz$, making $\rvz$ an information bottleneck. \Cref{fig:2} illustrates the overall schematic for DBAE. \Cref{sec:3.1} explains the latent variable inference with the encoder-decoder structure and a learnable forward SDE utilizing Doob's $h$-transform. \Cref{sec:3.2} delineates the generative process from the information bottleneck $\rvz$ to data $\rvx_0$. \Cref{sec:3.10} analyzes the benefit of DBAE for mutual information maximization between $\rvx_0$ and $\rvz$. \Cref{sec:3.3} elaborates on the objective function for reconstruction, unconditional generation, and its theoretical justifications. 

\subsection{Encoding from $\rvx_0$ to $\rvx_T$ conditioned on $\rvz$}
\label{sec:3.1}
\begin{figure}[t]
    \centerline{\includegraphics[width=\columnwidth]{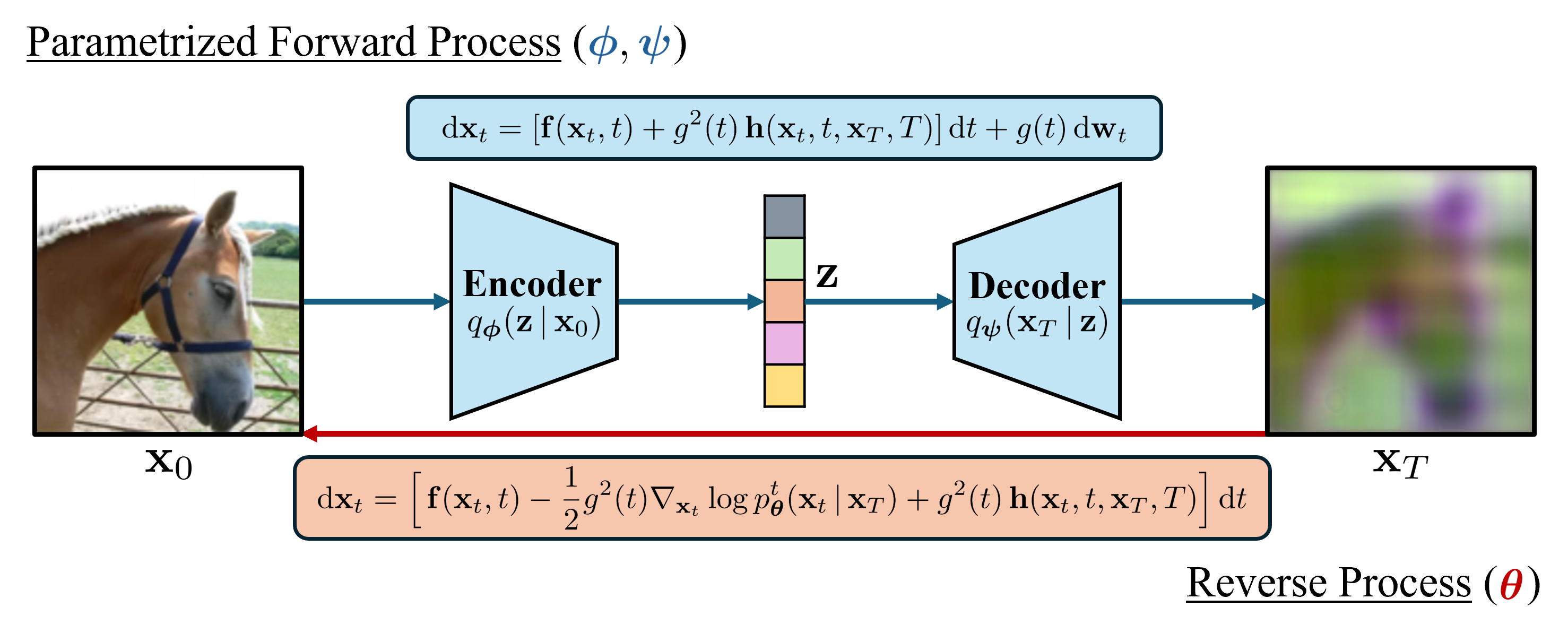}}
    \vspace{-3mm}
    \caption{A schematic for Diffusion Bridge AutoEncoders. The blue line shows the latent variable inference. DBAE infers the $\rvz$-dependent endpoint $\rvx_T$ to make $\rvx_T$ tractable and to establish $\rvz$ as an information bottleneck. The paired $\rvx_0$ and $\rvx_T$ define a new forward SDE utilizing the $h$-transform. The decoder and the red line show the generative process. The generation starts from the bottleneck latent variable $\rvz$ and decodes it to the endpoint $\rvx_T$. The reverse process generates $\rvx_0$ from $\rvx_T$.}
    \label{fig:2}
    \vspace{-5mm}
\end{figure}
We can access i.i.d. samples from $q_{\textrm{data}}(\rvx_0)$. The encoder Enc$_{\boldsymbol{\phi}}:\mathbb{R}^d\rightarrow\mathbb{R}^l$ maps data $\rvx_0$ to the latent variable $\rvz$, defining the conditional probability $q_{\boldsymbol{\phi}}(\rvz|\rvx_0)$. To condense the high-level representation of $\rvx_0$, the latent dimension $l$ is set to be lower than the data dimension $d$. The decoder Dec$_{\boldsymbol{\psi}}:\mathbb{R}^l\rightarrow\mathbb{R}^d$ maps from the latent variable $\rvz$ to the endpoint $\rvx_T$, defining the conditional probability $q_{\boldsymbol{\psi}}(\rvx_T|\rvz)$. The encoder and decoder can be deterministic (i.e., Dirac delta distribution) or stochastic (i.e., Gaussian distribution) depending on the experimental choice. Since the decoder generates the endpoint $\rvx_T$ solely based on the latent variable $\rvz$, $\rvz$ becomes a bottleneck for all the information in $\rvx_0$. The encoder-decoder structure provides the endpoint distribution $q_{\boldsymbol{\phi},\boldsymbol{\psi}}(\rvx_T|\rvx_0)=\int{q_{\boldsymbol{\psi}}(\rvx_T|\rvz)q_{\boldsymbol{\phi}}(\rvz|\rvx_0)}\mathrm{d}\rvz$ for a given starting point $\rvx_0$. We now discuss a new diffusion process $\left\{\rvx_t\right\}_{t=0}^{T}$ with a given starting point and endpoint pair. 

 To establish the relationship between the starting point and endpoint given by the encoder-decoder, we utilize Doob's $h$-transform to define a new forward SDE 
 \begin{align}
& \mathrm{d}\rvx_t = [\mathbf{f}(\rvx_t, t)+g^2(t)\mathbf{h}(\rvx_t,t,\rvx_T,T)]\mathrm{d}t + g(t)\mathrm{d}\mathbf{w}_t,\; \rvx_0 \sim q_{\textrm{data}}(\rvx_0),\, \rvx_T \sim q_{\boldsymbol{\phi},\boldsymbol{\psi}}(\rvx_T|\rvx_0)  \label{eq:doobours},
\end{align}
 where $\mathbf{h}(\rvx_t,t,\rvx_T, T):=\nabla_{\rvx_t}\log{\tilde{q}_t(\rvx_T|\rvx_t)}$ is the score function of the perturbation kernel in the original forward SDE in \cref{eq:for}. The forward SDE in \cref{eq:doobours} determines the distribution of $\rvx_{t}$, where $t\in(0,T)$. Let us denote the marginal distribution of \cref{eq:doobours} at time $t$ as $q^{t}_{\boldsymbol{\phi},\boldsymbol{\psi}}(\rvx_t)$.

% We define a forward SDE  in \cref{eq:doobours} utilizing Doob's $h$-transform~\citep{doob1984classical,zhou2024denoising} to establish the relationship between the starting point and endpoint given by the encoder-decoder. The forward SDE in \cref{eq:doobours} determines the distribution of $\rvx_{t}$, where $t\in(0,T)$. Let the marginal distribution of \cref{eq:doobours} at time $t$ as $q^{t}_{\boldsymbol{\phi},\boldsymbol{\psi}}(\rvx_t)$.
% \begin{align}
% & \mathrm{d}\rvx_t = [\mathbf{f}(\rvx_t, t)+g^2(t)\mathbf{h}(\rvx_t,t,\rvx_T,T)]\mathrm{d}t + g(t)\mathrm{d}\mathbf{w}_t,\; \rvx_0 \sim q_{\textrm{data}}(\rvx_0),\, \rvx_T \sim q_{\boldsymbol{\phi},\boldsymbol{\psi}}(\rvx_T|\rvx_0)  \label{eq:doobours},
% \end{align}
% where $\mathbf{h}(\rvx_t,t,\rvx_T, T):=\nabla_{\rvx_t}\log{\tilde{q}_t(\rvx_T|\rvx_t)}$ is the score function of perturbation kernel in the original forward SDE in \cref{eq:for}.

\subsection{Generative Process}
\label{sec:3.2}
The generative process begins with the bottleneck latent variable $\rvz$, which can be inferred from the input data $\rvx_0$ or is randomly drawn from the prior distribution $p_{\text{prior}}(\rvz)$. The decoder Dec$_{\boldsymbol{\psi}}:\mathbb{R}^l\rightarrow\mathbb{R}^d$ maps from the latent variable $\rvz$ to the endpoint $\rvx_T$ with the probability $p_{\boldsymbol{\psi}}(\rvx_T|\rvz)$.\footnote{The two distributions $p_{\boldsymbol{\psi}}(\rvx_T|\rvz)$ and $q_{\boldsymbol{\psi}}(\rvx_T|\rvz)$ are the same. However, to distinguish between inference and generation, they are respectively denoted as $p$ and $q$.} Corresponding to a new forward SDE in \cref{eq:doobours}, there exists a reverse ODE 
\begin{align}
& \mathrm{d}\rvx_t = [\mathbf{f}(\rvx_t, t)-\frac{1}{2}g^2(t)\nabla_{\rvx_t} \log q^{t}_{\boldsymbol{\phi},\boldsymbol{\psi}}(\rvx_t|\rvx_T)+g^2(t)\mathbf{h}(\rvx_t,t,\rvx_T,T)] \mathrm{d}t,\label{eq:backodeours}
\end{align}
where the conditional probability $q^{t}_{\boldsymbol{\phi},\boldsymbol{\psi}}(\rvx_t|\rvx_T)$ is defined by \cref{eq:doobours}. However, computing the conditional probability   $q^{t}_{\boldsymbol{\phi},\boldsymbol{\psi}}(\rvx_t|\rvx_T)$ is intractable, so we parameterize our score model $\mathbf{s}_{\boldsymbol{\theta}}(\rvx_t,t,\rvx_T):=\nabla_{\rvx_t}\log p^{t}_{\boldsymbol{\theta}}(\rvx_t|\rvx_T)$ to approximate $\nabla_{\rvx_t}\log q^{t}_{\boldsymbol{\phi},\boldsymbol{\psi}}(\rvx_t|\rvx_T)$.
Our parametrized generative process becomes
\begin{align}
& \mathrm{d}\rvx_t = [\mathbf{f}(\rvx_t, t)-\frac{1}{2}g^2(t)\nabla_{\rvx_t}\log p^{t}_{\boldsymbol{\theta}}(\rvx_t|\rvx_T)+g^2(t)\mathbf{h}(\rvx_t,t,\rvx_T,T)] \mathrm{d}t.\label{eq:backodeoursmodel}
\end{align}
Stochastic sampling with an SDE is also naturally possible as shown in \Cref{sec:2.4}, but we describe only the ODE for convenience.

\begin{minipage}[t]{0.65\textwidth}
 \vspace{-6mm}
\scriptsize 
    \begin{algorithm}[H]
        % \caption{DBAE Training Algorithm for Reconstruction}  
        % \label{alg:1}
        \SetCustomAlgoRuledWidth{\linewidth}
        \DontPrintSemicolon
        \KwInput{data distribution $q_{\text{data}}(\rvx_0)$, drift term $\mathbf{f}$, volatility term $g$}
        \While{not converges}{%
            Sample time $t$ from $[0, T]$ \\
            $\rvx_0 \sim q_{\text{data}}(\rvx_0)$,  \\
            $\rvz =$ Enc$_{\boldsymbol{\phi}}(\rvx_0)$ and $\rvx_T =$ Dec$_{\boldsymbol{\psi}}(\rvz)$ \\
            $\rvx_t \sim \Tilde{q}_t(\rvx_t\vert\rvx_0,\rvx_T)$ \\
            $\mathcal{L}_{\textrm{AE}}\leftarrow\frac{1}{2}{g^2(t)|| \mathbf{s}_{\boldsymbol{\theta}}(\rvx_t,t,\rvx_T) - \nabla_{\rvx_t}\log \tilde{q}_{t}(\rvx_t|\rvx_0,\rvx_T)||_2^2}$ \\
            Update $\boldsymbol{\phi}$, $\boldsymbol{\psi}$, $\boldsymbol{\theta}$ by $\mathcal{L}_{\textrm{AE}}$ using the gradient descent method
        }
        \KwOutput{Enc$_{\boldsymbol{\phi}}$, Dec$_{\boldsymbol{\psi}}$, score network $\mathbf{s}_{\boldsymbol{\theta}}$}
 \caption{DBAE Training Algorithm for Reconstruction}     
 \label{alg:1}
\end{algorithm}
\end{minipage}
\begin{minipage}[t]{0.35\textwidth}
 \vspace{-6mm}
\scriptsize
\begin{algorithm}[H]
    \SetCustomAlgoRuledWidth{\linewidth}
    \DontPrintSemicolon
    \KwInput{Enc$_{\boldsymbol{\phi}}$, Dec$_{\boldsymbol{\psi}}$, score network $\mathbf{s}_{\boldsymbol{\theta}}$, sample $\rvx_0$, discretized time steps $\{t_i\}_{i=0}^N$}
    $\rvz$ = Enc$_{\boldsymbol{\phi}}(\rvx_0)$ \\
    $\rvx_T$ = Dec$_{\boldsymbol{\psi}}(\rvz)$ \\
    \For{$i=N,...,1$}{
        Update $\rvx_{t_i}$ using \cref{eq:backodeoursmodel} \\
    }
    \KwOutput{Reconstructed sample ${\hat{\rvx}_0}$}
    \caption{Reconstruction}     
    \label{alg:2}
\end{algorithm}
\end{minipage}
\vspace{-5mm}
\subsection{Mutual Information Analysis}
\label{sec:3.10}
From the definition of inference and generation of DBAE in \Cref{sec:3.1,sec:3.2}, the variational lower bound on the mutual information between $\rvx_0$ and $\rvz$ is
\begin{align}
\mathbb{E}_{q_{\boldsymbol{\phi}}(\rvx_0,\rvz)}[\mathbb{E}_{q_{\boldsymbol{\psi}}(\rvx_T|\rvz)}[\log{p_{\boldsymbol{\theta}}(\rvx_0|\rvx_T)}]-D_{KL}(q_{\boldsymbol{\psi}}(\rvx_T|\rvz)||p_{\boldsymbol{\psi}}(\rvx_T|\rvz))] + H \leq MI(\rvx_0,\rvz), \label{eq:mi_dbae}
\end{align}
where $p_{\boldsymbol{\theta}}(\rvx_0|\rvx_T)$ is defined by the generative process in \Cref{sec:3.2}. Please see \Cref{sec:A1.4.2} for a detailed derivation.  Here, the term $D_{KL}(q_{\boldsymbol{\psi}}(\rvx_T|\rvz)||p_{\boldsymbol{\psi}}(\rvx_T|\rvz))$ becomes zero because both conditional probabilities of $\rvx_T$ given $\rvz$ are the same in the inference and the generation. The remaining term $\mathbb{E}_{q_{\boldsymbol{\phi}}(\rvx_0,\rvz)}[\mathbb{E}_{q_{\boldsymbol{\psi}}(\rvx_T|\rvz)}[\log{p_{\boldsymbol{\theta}}(\rvx_0|\rvx_T)}]$ can be controlled by the optimization of $\boldsymbol{\phi}, \boldsymbol{\psi},$ and $ \boldsymbol{\theta}$. The relation between an objective function and mutual information is declared in \Cref{thm:3}.

\subsection{Objective Function}
\label{sec:3.3}
The objective function bifurcates depending on the specific tasks. The model requires a reconstruction capability for downstream inference, attribute manipulation, and interpolation. To achieve reconstruction capability, the model needs 1) an encoding capability $(\rvx_0 \rightarrow \rvz \rightarrow \rvx_T)$ and 2) a regeneration capability $(\rvx_T \rightarrow \rvx_0)$. The encoding process should infer a distinct latent variable for each data point $\rvx_0$ to ensure that the original information is preserved during reconstruction. The regeneration capability needs to estimate the reverse process by approximating $\mathbf{s}_{\boldsymbol{\theta}}(\rvx_t,t,\rvx_T) \approx \nabla_{\rvx_t} \log q^{t}_{\boldsymbol{\phi},\boldsymbol{\psi}}(\rvx_t|\rvx_T)$. For an unconditional generation, the model must possess the ability to generate random samples from the endpoint $\rvx_T$, which implies that the generative endpoint distribution $p_{\boldsymbol{\psi}}(\rvx_T)=\int{p_{\boldsymbol{\psi}}(\rvx_T|\rvz)p_{\text{prior}}(\rvz)}\mathrm{d}\rvz$ should closely match the aggregated inferred distribution $q_{\boldsymbol{\phi},\boldsymbol{\psi}}(\rvx_T)=\int{q_{\boldsymbol{\psi}}(\rvx_T|\rvz)q_{\boldsymbol{\phi}}(\rvz|\rvx_0)q_{\text{data}}(\rvx_0)}\mathrm{d}\rvx_0\mathrm{d}\rvz$.
\subsubsection{Reconstruction} 
\label{sec:3.3.1}
For successful reconstruction, the model needs to fulfill two criteria: 1) encoding the latent variable $\rvx_T$ uniquely depending on the data point $\rvx_0$, and 2) regenerating from $\rvx_T$ to $\rvx_0$. The inferred latent distribution $q_{\boldsymbol{\phi},\boldsymbol{\psi}}(\rvx_T|\rvx_0)$ should provide unique information for each $\rvx_0$. To achieve this, we aim to minimize the entropy $\mathcal{H}(q_{\boldsymbol{\phi},\boldsymbol{\psi}}(\rvx_T|\rvx_0))$ to embed $\rvx_0$-dependent $\rvx_T$ with minimum uncertainty. On the other hand, we maximize the entropy $\mathcal{H}(q_{\boldsymbol{\phi},\boldsymbol{\psi}}(\rvx_T))$ to embed different $\rvx_T$ for each $\rvx_0$. Since the posterior entropy $\mathcal{H}(q_{\boldsymbol{\phi},\boldsymbol{\psi}}(\rvx_0|\rvx_T))=\mathcal{H}(q_{\boldsymbol{\phi},\boldsymbol{\psi}}(\rvx_T|\rvx_0))-\mathcal{H}(q_{\boldsymbol{\phi},\boldsymbol{\psi}}(\rvx_T))+\mathcal{H}(q_{\text{data}}(\rvx_0))$ naturally includes the aforementioned terms, we use this term as a regularization. Minimizing the gap between \cref{eq:backodeours,eq:backodeoursmodel} is necessary for regenerating from $\rvx_T$ to $\rvx_0$. This requires alignment between the inferred score function $\nabla_{\rvx_t}\log q^{t}_{\boldsymbol{\phi},\boldsymbol{\psi}}(\rvx_t|\rvx_T)$ and the model score function $\mathbf{s}_{\boldsymbol{\theta}}(\rvx_t,t,\rvx_T)$. Similarly to \cref{eq:bridge_sm}, we propose the score-matching objective function $\mathcal{L}_{\text{SM}}$ described as 
\begin{align}
&\mathcal{L}_{\textrm{SM}}:= \frac{1}{2}\int_0^T{\mathbb{E}_{q^t_{\boldsymbol{\phi},\boldsymbol{\psi}}(\rvx_t,\rvx_T)}[g^2(t)|| \mathbf{s}_{\boldsymbol{\theta}}(\rvx_t,t,\rvx_T) - \nabla_{\rvx_t}\log q^t_{\boldsymbol{\phi},\boldsymbol{\psi}}(\rvx_t|\rvx_T)||_2^2]}\mathrm{d}t.  \label{eq:obj_ae} 
\end{align}
We train DBAE with the entropy-regularized score matching objective $\mathcal{L}_{\text{AE}}$ described as
\begin{align}
&\mathcal{L}_{\textrm{AE}}:= \mathcal{L}_{\textrm{SM}} + \mathcal{H}(q_{\boldsymbol{\phi},\boldsymbol{\psi}}(\rvx_0|\rvx_T)). \label{eq:obj_dae}
\end{align}
The detailed training and testing procedures are outlined in algorithms \ref{alg:1} and \ref{alg:2}, respectively.
\Cref{thm:1} demonstrates that the entropy-regularized score matching objective in $\mathcal{L}_{\textrm{AE}}$ becomes a tractable form of objective, and it is equivalent to the reconstruction formulation. The inference distribution $q_{\boldsymbol{\phi},\boldsymbol{\psi}}(\rvx_t,\rvx_T|\rvx_0)$ is optimized to provide the best information about $\rvx_0$ for easy reconstruction.
\begin{restatable}{theorem}{thma}
   For the objective function $\mathcal{L}_{\textrm{AE}}$, the following equality holds.
   \begin{equation}
       \mathcal{L}_{\textrm{AE}}=\frac{1}{2}\int_0^T{\mathbb{E}_{q^t_{\boldsymbol{\phi},\boldsymbol{\psi}}(\rvx_0,\rvx_t,\rvx_T)}[g^2(t)|| \mathbf{s}_{\boldsymbol{\theta}}(\rvx_t,t,\rvx_T) - \nabla_{\rvx_t}\log \tilde{q}_{t}(\rvx_t|\rvx_0,\rvx_T)||_2^2]}\mathrm{d}t
   \end{equation}
   Moreover, if \cref{eq:for} is a linear SDE.\footnote{\cref{eq:for} is a linear SDE when the drift function $\mathbf{f}$ is linear with respect to $\rvx_t$.}, there exists $\alpha(t)$, $\beta(t)$, $\gamma(t)$, $\lambda(t)$, such that
   \begin{equation}
   \mathcal{L}_{\textrm{AE}}=\frac{1}{2}\int_0^T{\mathbb{E}_{q^t_{\boldsymbol{\phi},\boldsymbol{\psi}}(\rvx_0,\rvx_t,\rvx_T)}[\lambda(t)|| \mathbf{x}^{0}_{\boldsymbol{\theta}}(\rvx_t,t,\rvx_T) - \rvx_0||_2^2]}\mathrm{d}t,
   \end{equation}
   where $\mathbf{x}^{0}_{\boldsymbol{\theta}}(\rvx_t,t,\rvx_T):=\alpha(t)\rvx_t+\beta(t)\rvx_T+\gamma(t)\mathbf{s}_{\boldsymbol{\theta}}(\rvx_t,t,\rvx_T)$, and $ q^{t}_{\boldsymbol{\phi},\boldsymbol{\psi}}(\rvx_0, \rvx_t, \rvx_T) =\int q_{\textrm{data}}(\rvx_0)q_{\boldsymbol{\phi}}(\rvz|\rvx_0)q_{\boldsymbol{\psi}}(\rvx_T|\rvz) q_t(\rvx_t|\rvx_T, \rvx_0) d\rvz$, following the graphical model in~\cref{fig:1c}.
\label{thm:1}
\end{restatable}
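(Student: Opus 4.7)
The plan is to establish the first equality by a Vincent-style denoising score matching (DSM) manipulation, identify the Fisher-information surplus it introduces with $\mathcal{H}(q_{\boldsymbol{\phi},\boldsymbol{\psi}}(\rvx_0|\rvx_T))$ via a De Bruijn / I-MMSE style identity on the bridge diffusion, and finally specialize to linear SDEs using the Gaussian form of the bridge kernel. A structural ingredient recalled from \Cref{sec:2.2} is the Doob $h$-transform invariance $q^{t}_{\boldsymbol{\phi},\boldsymbol{\psi}}(\rvx_t|\rvx_0,\rvx_T)=\tilde{q}_t(\rvx_t|\rvx_0,\rvx_T)$, which is what lets the final right-hand side be written with $\tilde q_t$ rather than with the bridge marginal.

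First I would apply the standard DSM manipulation conditionally on $\rvx_T$: the tower law yields $\nabla_{\rvx_t}\log q^{t}_{\boldsymbol{\phi},\boldsymbol{\psi}}(\rvx_t|\rvx_T)=\mathbb{E}[\nabla_{\rvx_t}\log q^{t}_{\boldsymbol{\phi},\boldsymbol{\psi}}(\rvx_t|\rvx_0,\rvx_T)\mid \rvx_t,\rvx_T]$, and expanding the two squared losses shows that the cross terms involving $\mathbf{s}_{\boldsymbol{\theta}}$ agree, which produces
\[
\mathcal{L}_{\textrm{DSM}}-\mathcal{L}_{\textrm{SM}}=\tfrac{1}{2}\int_0^T g^2(t)\,\mathbb{E}\big[\text{Var}\big(\nabla_{\rvx_t}\log q^{t}_{\boldsymbol{\phi},\boldsymbol{\psi}}(\rvx_t|\rvx_0,\rvx_T)\,\big|\,\rvx_t,\rvx_T\big)\big]\,\mathrm{d}t,
\]
where $\mathcal{L}_{\textrm{DSM}}$ denotes the right-hand side of the claimed identity. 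It then suffices to show this Fisher gap equals $\mathcal{H}(q_{\boldsymbol{\phi},\boldsymbol{\psi}}(\rvx_0|\rvx_T))$.

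For this I would work with the conditional mutual information $I(\rvx_0;\rvx_t|\rvx_T)=\mathcal{H}(q(\rvx_t|\rvx_T))-\mathcal{H}(q(\rvx_t|\rvx_0,\rvx_T))$, which stays finite even where the individual entropies degenerate at the endpoints. Applying the Fokker--Planck entropy-production formula $\tfrac{d}{dt}\mathcal{H}(q_t)=\mathbb{E}[\nabla\cdot b]+\tfrac{g^2}{2}\mathbb{E}\|\nabla\log q_t\|^2$ to both conditional marginals of the bridge SDE \cref{eq:doobours} with drift $b=\mathbf{f}+g^2\mathbf{h}$, the divergence terms cancel (because $b$ is $\rvx_0$-free, the two expectations of $\nabla\cdot b$ coincide after integrating out $\rvx_0$ under $q(\rvx_t,\rvx_T)$), leaving $\tfrac{d}{dt}I(\rvx_0;\rvx_t|\rvx_T)=-\tfrac{g^2(t)}{2}[I_F(q(\rvx_t|\rvx_0,\rvx_T))-I_F(q(\rvx_t|\rvx_T))]$. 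Integrating from $0$ to $T$ and invoking the boundary values $I(\rvx_0;\rvx_T|\rvx_T)=0$ and $\lim_{t\to 0^+}I(\rvx_0;\rvx_t|\rvx_T)=\mathcal{H}(q_{\boldsymbol{\phi},\boldsymbol{\psi}}(\rvx_0|\rvx_T))$ identifies the Fisher gap with the desired entropy and finishes the first claim.

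For the second part, linearity of $\mathbf{f}$ makes $\tilde q_t(\rvx_t|\rvx_0)$, and hence the bridge kernel $\tilde q_t(\rvx_t|\rvx_0,\rvx_T)$, Gaussian with mean $m(t)\rvx_0+n(t)\rvx_T$ and isotropic variance $\sigma^2(t)I$ in closed form. The score is then $-(\rvx_t-m(t)\rvx_0-n(t)\rvx_T)/\sigma^2(t)$, so algebraically solving $\mathbf{s}_{\boldsymbol{\theta}}=\nabla_{\rvx_t}\log\tilde q_t(\rvx_t|\rvx_0,\rvx_T)$ for $\rvx_0$ yields $\mathbf{x}^{0}_{\boldsymbol{\theta}}=\alpha(t)\rvx_t+\beta(t)\rvx_T+\gamma(t)\mathbf{s}_{\boldsymbol{\theta}}$ with $\alpha=1/m$, $\beta=-n/m$, $\gamma=\sigma^2/m$; pulling the scalar prefactor out of the squared norm and absorbing it into the weight produces $\lambda(t)=g^2(t)\,m(t)^2/\sigma^4(t)$. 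The main obstacle is the middle step: one must both justify the cancellation of the divergence-of-drift terms despite the two marginals being different, and rigorously control the boundary limit $\lim_{t\to 0^+}I(\rvx_0;\rvx_t|\rvx_T)$ under minimal regularity of the encoder and decoder, since the two component entropies diverge there while their difference remains finite.
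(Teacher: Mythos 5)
Your proposal is correct and follows essentially the same route as the paper's proof: a Vincent-style denoising reduction to handle the cross term, a Fokker--Planck entropy-production (De Bruijn) argument that identifies the Fisher-information gap with $\mathcal{H}(q_{\boldsymbol{\phi},\boldsymbol{\psi}}(\rvx_0|\rvx_T))$ via cancellation of the $\nabla\cdot(\mathbf{f}+g^2\mathbf{h})$ terms, and specialization to the Gaussian bridge kernel for the linear-SDE case. The only cosmetic difference is that you package the middle step through the conditional mutual information $I(\rvx_0;\rvx_t|\rvx_T)$ and its boundary values at $t=0,T$, whereas the paper integrates $\partial_t\mathcal{H}$ separately for $\mathcal{H}(q(\rvx_t,\rvx_T))$ and $\mathcal{H}(q(\rvx_0,\rvx_t,\rvx_T))$ and subtracts; these are the same computation viewed through two equivalent lenses, and the paper handles your flagged boundary-regularity concern by adopting the smoothness and moment assumptions of Theorems 2 and 4 in Song et al.\ (2021).
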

The assumptions and proof of \Cref{thm:1} are in \Cref{sec:A1.1}. Moreover, \Cref{thm:3} shows the objective functions $\mathcal{L}_{\textrm{AE}}$ is the upper bound of the negative mutual information between $\rvx_0$ and $\rvz$ up to a constant. Since the optimization direction of $\mathcal{L}_{\textrm{AE}}$ is aligned with maximizing the mutual information, our objective function makes the mutual information higher, which can make $\rvz$ informative. The proof of \Cref{thm:3} is in \Cref{sec:A1.5}.
\begin{restatable}{theorem}{thmc}
  $- MI(\rvx_0,\rvz) \leq \mathcal{L}_{\textrm{AE}}-H$, where $H=\mathcal{H}(q_{\text{data}}(\rvx_0))$ is a constant w.r.t. {$\boldsymbol{\phi},\boldsymbol{\psi},\boldsymbol{\theta}$}.
\label{thm:3}
\end{restatable}

\subsubsection{Generative Modeling}
\label{sec:3.3.2}
In \Cref{sec:3.3.1}, the discussion focused on the objective function for reconstruction. The distribution of $\rvx_T$ should be considered for generative modeling. This section addresses the discrepancy between the inferred distribution $q_{\boldsymbol{\phi},\boldsymbol{\psi}}(\rvx_T)$ and the generative prior distribution $p_{\boldsymbol{\psi}}(\rvx_T)$. To address this, we propose the objective $\mathcal{L}_{\text{PR}}$ related to the generative prior.  
\begin{align}
\mathcal{L}_{\textrm{PR}}:=&\mathbb{E}_{q_{\text{data}}(\rvx_0)}[D_{\text{KL}}(q_{\boldsymbol{\phi},\boldsymbol{\psi}}(\rvx_T|\rvx_0)||p_{\boldsymbol{\psi}}(\rvx_T))] \label{eq:prior}
\end{align}
\Cref{thm:2} demonstrates that the autoencoding objective $\mathcal{L}_{\text{AE}}$ and prior objective $\mathcal{L}_{\text{PR}}$ bound the  Kullback-Leibler divergence between data distribution $q_{\text{data}}(\rvx_0)$ and the generative model distribution $p_{\boldsymbol{\psi},\boldsymbol{\theta}}(\rvx_0)=\int{p_{\boldsymbol{\theta}}(\rvx_0|\rvx_T)p_{\boldsymbol{\psi}}(\rvx_T|\rvz)p_{\text{prior}}(\rvz)}\mathrm{d}\rvz\mathrm{d}\rvx_T$ up to a constant. The proof is in \Cref{sec:A1.2}.
\begin{restatable}{theorem}{thmb}  
$D_{\textrm{KL}}(q_{\text{data}}(\rvx_0)||p_{\boldsymbol{\psi},\boldsymbol{\theta}}(\rvx_0))\leq \mathcal{L}_{\textrm{AE}}+\mathcal{L}_{\textrm{PR}}-H$, where $H=\mathcal{H}(q_{\text{data}}(\rvx_0))$ is a constant w.r.t. {$\boldsymbol{\phi},\boldsymbol{\psi},\boldsymbol{\theta}$}.
\label{thm:2}
\end{restatable}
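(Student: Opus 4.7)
The plan is to combine the standard ELBO decomposition with the Girsanov/data-processing likelihood bound that Song et al.\ use for score-based diffusions, adapted here to the $h$-transformed bridge. First I would expand the left-hand side as
\begin{equation*}
D_{\textrm{KL}}(q_{\textrm{data}}(\rvx_0)\,\|\,p_{\boldsymbol{\psi},\boldsymbol{\theta}}(\rvx_0))=-\mathcal{H}(q_{\textrm{data}}(\rvx_0))-\mathbb{E}_{q_{\textrm{data}}(\rvx_0)}[\log p_{\boldsymbol{\psi},\boldsymbol{\theta}}(\rvx_0)],
\end{equation*}
so the target inequality reduces to bounding the cross-entropy $-\mathbb{E}_{q_{\textrm{data}}}[\log p_{\boldsymbol{\psi},\boldsymbol{\theta}}(\rvx_0)]$ by $\mathcal{L}_{\textrm{AE}}+\mathcal{L}_{\textrm{PR}}$. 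Inserting the inference distribution $q_{\boldsymbol{\phi},\boldsymbol{\psi}}(\rvx_T|\rvx_0)$ into $-\log \int p_{\boldsymbol{\theta}}(\rvx_0|\rvx_T)\,p_{\boldsymbol{\psi}}(\rvx_T)\,d\rvx_T$ and applying Jensen's inequality yields the familiar ELBO
\begin{equation*}
-\log p_{\boldsymbol{\psi},\boldsymbol{\theta}}(\rvx_0)\le \mathbb{E}_{q_{\boldsymbol{\phi},\boldsymbol{\psi}}(\rvx_T|\rvx_0)}[-\log p_{\boldsymbol{\theta}}(\rvx_0|\rvx_T)]+D_{\textrm{KL}}(q_{\boldsymbol{\phi},\boldsymbol{\psi}}(\rvx_T|\rvx_0)\,\|\,p_{\boldsymbol{\psi}}(\rvx_T)),
\end{equation*}
whose expectation under $q_{\textrm{data}}$ makes the second term exactly $\mathcal{L}_{\textrm{PR}}$.

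Next I would handle the reconstruction term. Since by construction the inference marginal at $t=0$ equals the data, i.e.\ $q_{\boldsymbol{\phi},\boldsymbol{\psi}}(\rvx_0)=q_{\textrm{data}}(\rvx_0)$, the joint satisfies $q_{\boldsymbol{\phi},\boldsymbol{\psi}}(\rvx_0,\rvx_T)=q_{\textrm{data}}(\rvx_0)\,q_{\boldsymbol{\phi},\boldsymbol{\psi}}(\rvx_T|\rvx_0)$, and the expected negative log-density decomposes as
\begin{equation*}
\mathbb{E}_{q_{\boldsymbol{\phi},\boldsymbol{\psi}}(\rvx_0,\rvx_T)}[-\log p_{\boldsymbol{\theta}}(\rvx_0|\rvx_T)]=\mathbb{E}_{q_{\boldsymbol{\phi},\boldsymbol{\psi}}(\rvx_T)}\bigl[D_{\textrm{KL}}(q_{\boldsymbol{\phi},\boldsymbol{\psi}}(\rvx_0|\rvx_T)\,\|\,p_{\boldsymbol{\theta}}(\rvx_0|\rvx_T))\bigr]+\mathcal{H}(q_{\boldsymbol{\phi},\boldsymbol{\psi}}(\rvx_0|\rvx_T)).
\end{equation*}
The entropy piece is already the regularizer in $\mathcal{L}_{\textrm{AE}}$, so the theorem will follow once I establish the single inequality $\mathbb{E}_{q_{\boldsymbol{\phi},\boldsymbol{\psi}}(\rvx_T)}[D_{\textrm{KL}}(q_{\boldsymbol{\phi},\boldsymbol{\psi}}(\rvx_0|\rvx_T)\,\|\,p_{\boldsymbol{\theta}}(\rvx_0|\rvx_T))]\le \mathcal{L}_{\textrm{SM}}$.

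For that final inequality I would invoke the data processing inequality followed by Girsanov's theorem applied to the bridge SDEs. Conditioned on a fixed $\rvx_T$, the inference trajectory is governed by the reverse-time SDE associated with \cref{eq:doobours} (whose ODE form is \cref{eq:backodeours}), whereas $p_{\boldsymbol{\theta}}(\rvx_0|\rvx_T)$ is the law at $t=0$ of \cref{eq:backodeoursmodel}; the two SDEs share the drift $\mathbf{f}(\rvx_t,t)+g^2(t)\mathbf{h}(\rvx_t,t,\rvx_T,T)$ (the $h$-transform pieces cancel) and differ only in the score term $\mathbf{s}_{\boldsymbol{\theta}}(\rvx_t,t,\rvx_T)$ versus $\nabla_{\rvx_t}\log q^{t}_{\boldsymbol{\phi},\boldsymbol{\psi}}(\rvx_t|\rvx_T)$. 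Data processing lifts the marginal KL at $\rvx_0$ to the path-measure KL conditional on $\rvx_T$, and Girsanov evaluates that path-measure KL as $\tfrac{1}{2}\int_0^T g^2(t)\,\mathbb{E}\|\mathbf{s}_{\boldsymbol{\theta}}(\rvx_t,t,\rvx_T)-\nabla_{\rvx_t}\log q^{t}_{\boldsymbol{\phi},\boldsymbol{\psi}}(\rvx_t|\rvx_T)\|_2^2\,dt$, which, after averaging over $q_{\boldsymbol{\phi},\boldsymbol{\psi}}(\rvx_T)$, is exactly $\mathcal{L}_{\textrm{SM}}$. Chaining the three bounds yields the stated inequality.

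The hard part will be making the Girsanov step rigorous on the bridge. I expect three sub-difficulties: (i) verifying a Novikov-type integrability condition on the score gap so the likelihood ratio between path measures is a genuine martingale; (ii) controlling the singularity of $\mathbf{h}(\rvx_t,t,\rvx_T,T)$ as $t\uparrow T$, which typically requires truncating the interval to $[0,T-\epsilon]$ and taking $\epsilon\downarrow 0$ with a continuity argument; and (iii) well-posedness of the reverse bridge SDE with the learned score $\mathbf{s}_{\boldsymbol{\theta}}$. These are precisely the regularity hypotheses already used in the cited bridge-diffusion works, so under the same assumptions the argument carries through and the theorem follows.
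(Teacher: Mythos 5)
Your proof is correct and reaches the same bound as the paper, but by a genuinely different decomposition. The paper applies the data processing inequality \emph{once}, to the full joint including $\rvz$, giving $D_{\textrm{KL}}(q_{\text{data}}(\rvx_0)\|p_{\boldsymbol{\psi},\boldsymbol{\theta}}(\rvx_0))\le D_{\textrm{KL}}(q_{\boldsymbol{\phi},\boldsymbol{\psi}}(\rvx_{0:T},\rvz)\|p_{\boldsymbol{\psi},\boldsymbol{\theta}}(\rvx_{0:T},\rvz))$; then it splits the right-hand side by the KL chain rule into $D_{\textrm{KL}}(q(\rvx_T,\rvz)\|p(\rvx_T,\rvz))$ plus a conditional path-measure KL, uses the conditional independence of $\rvz$ and the trajectory given $\rvx_T$ to drop $\rvz$ from the latter, identifies the path KL with $\mathcal{L}_{\textrm{SM}}$ via Girsanov, and finally carries out a half-page algebraic manipulation to prove the exact identity $D_{\textrm{KL}}(q(\rvx_T,\rvz)\|p(\rvx_T,\rvz))-\mathcal{H}(q_{\boldsymbol{\phi},\boldsymbol{\psi}}(\rvx_0|\rvx_T))=\mathcal{L}_{\textrm{PR}}-\mathcal{H}(q_{\text{data}}(\rvx_0))$. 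You instead marginalize $\rvz$ out immediately (working with $p_{\boldsymbol{\psi}}(\rvx_T)$), peel off $\mathcal{L}_{\textrm{PR}}$ by Jensen's inequality on the cross-entropy with variational posterior $q_{\boldsymbol{\phi},\boldsymbol{\psi}}(\rvx_T|\rvx_0)$, split the reconstruction term into a conditional KL plus $\mathcal{H}(q_{\boldsymbol{\phi},\boldsymbol{\psi}}(\rvx_0|\rvx_T))$, and invoke data processing and Girsanov only at the conditional level. Your Jensen step with that variational posterior is in fact exactly the data processing inequality from $\rvx_0$ to $(\rvx_0,\rvx_T)$, and your second data-processing step lifts $\rvx_0\mid\rvx_T$ to the whole path given $\rvx_T$, so the two arguments contract the same marginal KL to the same path KL, just broken into different stages. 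What your route buys: it never needs $\rvz$ in the joint, so the conditional-independence step and the algebraic identity relating the $(\rvx_T,\rvz)$-KL to $\mathcal{L}_{\textrm{PR}}$ are replaced by the one-line VAE cross-entropy decomposition — the structure is closer to standard ELBO bookkeeping. The paper's route keeps $\rvz$ explicit and hews closer to the Song et al.\ (2021) likelihood-bound template. Your closing caveats on the Girsanov step (Novikov integrability, the singularity of $\mathbf{h}$ as $t\uparrow T$, well-posedness of the bridge SDE with the learned score) are legitimate; the paper addresses these only implicitly, by adopting the regularity assumptions of Song et al.\ and the Doob $h$-transform literature, so your proof operates under the same hypotheses.
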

For generative modeling, we separately minimize the terms $\mathcal{L}_{\textrm{AE}}$ and $\mathcal{L}_{\textrm{PR}}$, following \citep{esser2021taming,preechakul2022diffusion,zhang2022unsupervised}. The separate training of the generative prior distribution with a powerful generative model effectively reduces the mismatch between the prior and the aggregated posterior~\citep{sinha2021d2c,aneja2021contrastive}. Initially, we optimize $\mathcal{L}_{\textrm{AE}}$ with respect to the parameters of encoder ($\boldsymbol{\phi}$), decoder ($\boldsymbol{\psi}$), and score network ($\boldsymbol{\theta}$), and fix the parameters $\boldsymbol{\theta},\boldsymbol{\phi},\boldsymbol{\psi}$. Subsequently, we newly parameterize the generative prior $p_{\text{prior}}(\rvz):=p_{\boldsymbol{\omega}}(\rvz)$ using a shallow latent diffusion models, and optimize $\mathcal{L}_{\textrm{PR}}$ w.r.t $\boldsymbol{\omega}$. See \Cref{sec:A1.3} for further details.

\section{Experiment}
\label{sec:4}
This section empirically validates the effectiveness of the intended design of the proposed model, DBAE. We utilize the U-Net architecture for the score network $(\boldsymbol{\theta})$, as shown in \cref{fig:architecture_2}. Since our score network needs to account for the additional input $\rvx_T$, we concatenate $\rvx_t$ and $\rvx_T$ as the U-Net input. We employ half of the U-Net architecture as the encoder $(\boldsymbol{\phi})$ and use a CNN-based upsampler as the decoder $(\boldsymbol{\psi})$, adopted from~\citep{liu2021towards}. The encoder and decoder architectures are detailed in \cref{fig:architecture_1}. To compare DBAE with previous diffusion-based representation learning approaches, we adopt the remaining experimental configurations (e.g., batch size, learning rate) from DiffAE~\citep{preechakul2022diffusion} as closely as possible. Detailed experimental configurations are provided in \Cref{sec:A3}. We evaluate both latent encoding capability and generation quality across various tasks. We quantitatively assess the performance of downstream inference, reconstruction, disentanglement, and unconditional generation. Additionally, we qualitatively demonstrate interpolation and attribute manipulation capabilities. Finally, we conduct experiments with two variations of the proposed model’s encoder: 1) a Gaussian stochastic encoder (DBAE) and 2) a deterministic encoder (DBAE-d) for ablation studies. We use a deterministic structure for the decoder.

% This section empirically validates the effectiveness of the intended design of the proposed model, DBAE. To compare DBAE with previous diffusion-based representation learning, we follow the previous experimental settings of DiffAE~\citep{preechakul2022diffusion} as much as possible. We utilize the same encoder as well as score model architecture as described in \Cref{sec:A3}. We quantify both latent inference quality and generation quality through various tasks. We quantitatively evaluate the performance of downstream inference, reconstruction, disentanglement, and unconditional generation. We also qualitatively show interpolation and attribute manipulation ability. We conducted experiments by dividing the encoder of the proposed model into two variations: 1) using a Gaussian stochastic encoder (DBAE) and 2) using a deterministic encoder (DBAE-d) for ablation studies. Task-specific detailed configurations and metric descriptions are elaborated in \Cref{sec:A3}.

\subsection{Downstream Inference}
\label{sec:4.1}
\begin{table}[h]
 \vspace{-2mm}
    \centering 
    \caption{Linear-probe attribute prediction quality comparison for models trained on CelebA and FFHQ with dim$(\rvz)=512$. `Gen' indicates the generation capability. The best and second-best results are highlighted in \textbf{bold} and \underline{underline}, respectively. We evaluate 5 times and report the average.}
     \vspace{-2mm}
    \adjustbox{max width=\textwidth}{%
    \begin{tabular}{lc ccc p{0.01\textwidth}  ccc p{0.01\textwidth}|}
        \toprule
              && \multicolumn{3}{c}{\textbf{CelebA}}  && \multicolumn{3}{c}{\textbf{FFHQ}}                       \\
             \cmidrule{3-5} \cmidrule{7-9}
             Method &Gen& AP ($\uparrow$) &Pearson's r ($\uparrow$)& MSE ($\downarrow$)      && AP ($\uparrow$)&Pearson's r ($\uparrow$)& MSE ($\downarrow$) \\
            \midrule
            \midrule
          SimCLR~\citep{chen2020simple}  & \xmark  & 0.597&0.474&0.603&&0.608& 0.481&0.638\\
          $\beta$-TCVAE~\citep{chen2018isolating}  & \cmark  & 0.450&0.378&0.573&&0.432& 0.335& 0.608\\
          IB-GAN~\citep{jeon2021ib}  & \cmark  & 0.442&0.307&0.597&&0.428& 0.260&0.644\\
        \midrule
          DiffAE~\citep{preechakul2022diffusion}  & \cmark  & 0.603&0.598&0.421&&0.605& 0.606& 0.410\\
          PDAE~\citep{zhang2022unsupervised}  & \cmark  & 0.602&0.596&0.410&&0.597& 0.603& 0.416\\
          DiTi~\citep{yue2024exploring}  & \cmark  & 0.623&0.617&\underline{0.392}&&0.614& 0.622& \underline{0.384}\\
          \rowcolor{gray!25} DBAE-d  & \cmark  & \underline{0.650}  &\underline{0.635}&0.413&&\underline{0.656}& \underline{0.638}& 0.404\\
          \rowcolor{gray!25}DBAE & \cmark  & \textbf{0.655}&\textbf{0.643}&\textbf{0.369}&&\textbf{0.664}& \textbf{0.675}& \textbf{0.332}\\
        \bottomrule
    \end{tabular}
    }
\label{tab:main1}
 \vspace{-5mm}
\end{table}

To examine the learned latent representation capability of Enc$_{\boldsymbol{\phi}}$, we perform a linear-probe attribute prediction following DiTi~\citep{yue2024exploring}. We train a linear classifier with parameters ($\mathbf{w}, b$) using data-attribute pairs ($\rvx_0, y$). The attribute prediction $\hat{y}=\mathbf{w}^{T}\rvz+b$ is based on the learned latent representation $\rvz= $ Enc$_{\boldsymbol{\phi}}$($\rvx_0$), which is fitted to predict the ground-truth label $y$. An informative latent representation allows the linear classifier to predict the ground-truth label $y$ more effectively. We evaluate Enc$_{\boldsymbol{\phi}}$($\rvx_0$) trained on CelebA~\citep{liu2015deep} and FFHQ~\citep{karras2019style}. We train a linear classifier on 1) CelebA with 40 binary labels, measuring accuracy as AP, and 2) LFW~\citep{kumar2009attribute} for attribute regression, measuring accuracy using Pearson's $r$ and MSE. \Cref{tab:main1} shows that DBAE outperforms other diffusion-based representation learning baselines. Since DiffAE, PDAE, and DiTi suffer from the \textit{information split problem}, they produce a $\rvz$ that is less informative than DBAE. \Cref{fig:3} presents statistics for 100 reconstructions of the same image with inferred $\rvz$. Because PDAE's reconstruction varies depending on the selection of $\rvx_T$, it suggests that intricate details, such as hair and facial features, are contained in $\rvx_T$, which $\rvz$ fails to capture. This observation aligns with \Cref{fig:app_0}, where significant performance gains are observed for attributes related to facial details, such as shadows and hair. A comparison between DBAE-d and DBAE reveals that the stochastic encoder performs slightly better. We conjecture that the stochastic encoder leverages a broader latent space, which benefits discriminative downstream inference.

\subsection{Reconstruction}

\label{sec:4.2}
\begin{table}[h]
 \vspace{-2mm}
    \begin{minipage}[b]{0.77\textwidth}
    \centering
    \caption{Autoencoding reconstruction quality comparison. Among tractable and 512-dimensional latent variable models, the one yielding the best performance is highlighted in \textbf{bold}, \underline{underline} for the next best performer.}
    \vspace{-1.5mm}
     \adjustbox{max width=\textwidth}{%
     {%\footnotesize
        \begin{tabular}{lcc|ccc }
        \toprule
             Method &Tractability& Latent dim ($\downarrow$) &SSIM ($\uparrow$)& LPIPS ($\downarrow$)      & MSE ($\downarrow$) \\
            \midrule
            \midrule
          StyleGAN2 ($\mathcal{W}$)~\citep{karras2020analyzing}  & \xmark  & 512 &0.677&0.168&0.016 \\
          StyleGAN2 ($\mathcal{W+}$)~\citep{abdal2019image2stylegan}  & \xmark  & 7,168 &0.827&0.114&0.006 \\
          VQ-GAN~\citep{esser2021taming}  & \cmark  & 65,536 &0.782&0.109&3.61e-3 \\
          VQ-VAE2~\citep{razavi2019generating}  & \cmark  & 327,680 &0.947&0.012&4.87e-4 \\
          NVAE~\citep{vahdat2020nvae}  & \cmark  & 6,005,760 &0.984&0.001&4.85e-5 \\
          \midrule
          DDIM (Inferred $\mathbf{x}_T$)~\citep{song2021denoising}  & \xmark  & 49,152 &0.917&0.063&0.002 \\
          DiffAE (Inferred $\mathbf{x}_T$)~\citep{preechakul2022diffusion}  & \xmark  & 49,664 &0.991&0.011&6.07e-5 \\
          PDAE (Inferred $\mathbf{x}_T$)~\citep{zhang2022unsupervised}  & \xmark  & 49,664 &0.994&0.007&3.84e-5 \\
          \midrule
          \midrule
          DiffAE (Random $\mathbf{x}_T$)~\citep{preechakul2022diffusion}  & \cmark  & 512 &0.677&\underline{0.073}&0.007 \\
          PDAE (Random $\mathbf{x}_T$)~\citep{zhang2022unsupervised}  & \cmark  & 512 &0.689&0.098&5.01e-3 \\
           \rowcolor{gray!25} DBAE  & \cmark  & 512 &\underline{0.920}&0.094&\underline{4.81e-3} \\
           \rowcolor{gray!25} DBAE-d  & \cmark  & 512 &\textbf{0.953}&\textbf{0.072}&\textbf{2.49e-3} \\
        \bottomrule
    \end{tabular}
    }
    \label{tab:main2}
    }
    \end{minipage}
    \hfill
    \begin{minipage}[t]{0.22\textwidth}
    \setlength{\abovecaptionskip}{+3pt}
    \centering
    \raisebox{-11mm}{\includegraphics[width=0.95\textwidth]{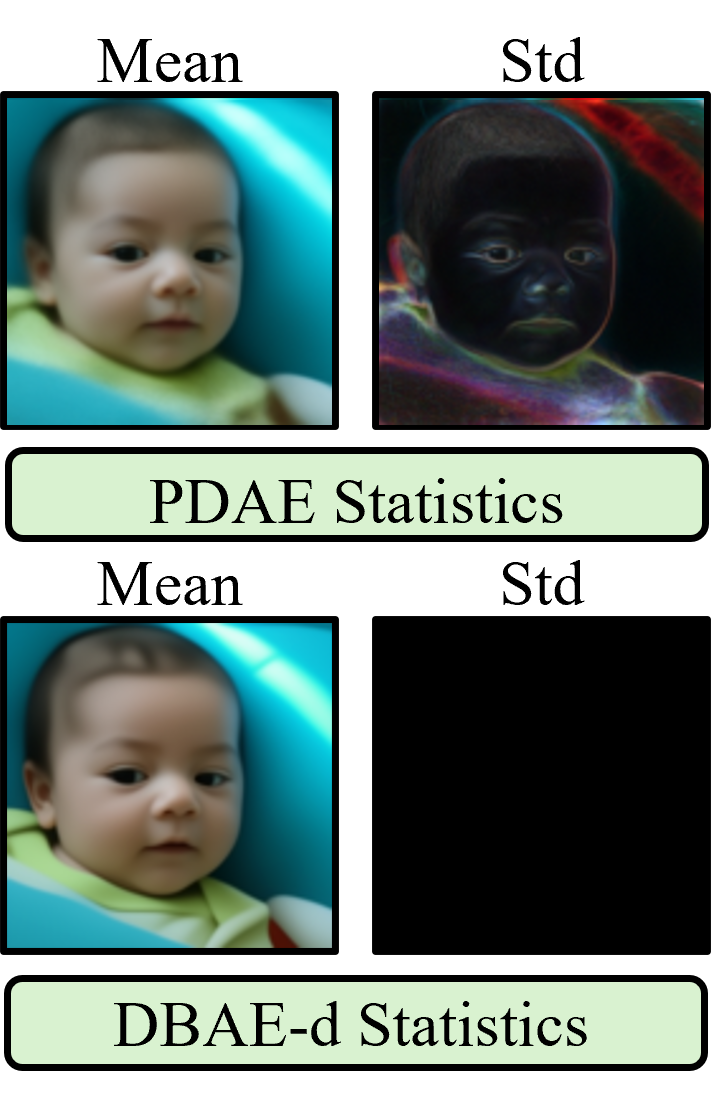}}
    \captionof{figure}{Reconstruction w/ inferred $\rvz$.}
    \label{fig:3}
    \end{minipage}
    \vspace{-5mm}
\end{table}
We examine the reconstruction quality following DiffAE~\citep{preechakul2022diffusion} to quantify information loss in the latent variable. For a test sample $\rvx_0$, the procedure in algorithm \ref{alg:2} provides a reconstructed sample $\hat{\rvx}_0$. The reconstruction error is the distance $d(\rvx_0, \hat{\rvx}_0)$, where the distance function can be SSIM~\citep{wang2003multiscale}, LPIPS~\citep{zhang2018unreasonable}, or MSE. \Cref{tab:main2} reports the averaged reconstruction error over the test dataset $\mathbb{E}_{p_{\textrm{test}}(\rvx_0)}[d(\rvx_0, \hat{\rvx}_0)]$. We trained DBAE on FFHQ and evaluated it on CelebA-HQ~\citep{karras2018progressive}. Tractability refers to the ability to perform inference on latent variables without repeated neural network evaluations. Tractability is crucial for regularizing the latent variable to achieve specific goals (e.g., disentanglement) during the training phase. The latent dimension refers to the dimension of the bottleneck latent variable during inference. A lower dimension is advantageous for applications such as downstream inference or attribute manipulation. The third block in \Cref{tab:main2} compares performance under the same qualitative conditions. DBAE-d exhibits performance that surpasses both DiffAE and PDAE. Naturally, DiffAE and PDAE exhibit worse performance because the information is split between $\rvx_T$ and $\rvz$. Unlike the downstream inference experiments in \Cref{sec:4.1}, the deterministic encoder performs better. 

%DBAE infers $\rvx_T$ solely based on $\rvz$, allowing for the maintenance of a reduced bottleneck dimension even when inferring $\rvx_T$. 

\subsection{Disentanglement}
\label{sec:4.3}
\begin{table}[h]
 \vspace{-2mm}
    \begin{minipage}[h]{0.69\textwidth}
    \centering
    \caption{Disentanglment and sample quality comparisons on CelebA.}
    \vspace{-2mm}
     \adjustbox{max width=\textwidth}{%
     {\footnotesize
     \begin{tabular}{lc|ccc }
        \toprule
             Method  & Reg $\rvz$ &TAD ($\uparrow$)& ATTRS ($\uparrow$)      & FID ($\downarrow$) \\
            \midrule
            \midrule
          AE   &\xmark& 0.042$\pm{0.004}$&1.0$\pm{0.0}$&90.4$\pm{1.8}$ \\
          DiffAE~\citep{preechakul2022diffusion}&\xmark& \textbf{0.155}$\pm{0.010}$&2.0$\pm{0.0}$&22.7$\pm{2.1}$ \\
           \rowcolor{gray!25}DBAE& \xmark&0.124$\pm{0.078}$&\textbf{2.2}$\pm{1.3}$&\textbf{11.8}$\pm{0.2}$ \\
          \midrule
          \midrule
          VAE ~\citep{KingmaW13}  &\cmark& 0.000$\pm{0.000}$&0.0$\pm{0.0}$&94.3$\pm{2.8}$ \\
          $\beta$-VAE~\citep{higgins2017betavae}& \cmark&0.088$\pm{0.051}$&1.6$\pm{0.8}$&99.8$\pm{2.4}$ \\
          InfoVAE~\citep{zhao2019infovae}&\cmark &0.000$\pm{0.000}$&0.0$\pm{0.0}$&77.8$\pm{1.6}$ \\     
          InfoDiffusion~\citep{wang2023infodiffusion}  & \cmark&0.299$\pm{0.006}$&3.0$\pm{0.0}$&22.3$\pm{1.2}$ \\
          DisDiff~\citep{yang2023disdiff}&\cmark &0.305$\pm{0.010}$&-&18.3$\pm{2.1}$ \\
          %\rowcolor{gray!25}DBAE+reg& \cmark&0.555$\pm{0.066}$&5.0$\pm{0.7}$&- \\
          \rowcolor{gray!25}DBAE+TC& \cmark&\textbf{0.362}$\pm{0.036}$&\textbf{3.8}$\pm{0.8}$&\textbf{13.4}$\pm{0.2}$ \\
        \bottomrule
    \end{tabular}
    %       \begin{tabular}{lc|ccc }
    %     \toprule
    %          Method  & Reg $\rvz$ &TAD ($\uparrow$)& ATTRS ($\uparrow$)      & FID ($\downarrow$) \\
    %         \midrule
    %         \midrule
    %       AE   &\xmark& 0.042$\pm{0.004}$&1.0$\pm{0.0}$&90.4$\pm{1.8}$ \\
    %       DiffAE~\citep{preechakul2022diffusion}&\xmark& 0.155$\pm{0.010}$&2.0$\pm{0.0}$&22.7$\pm{2.1}$ \\
    %        \rowcolor{gray!25}DBAE& \xmark&\textbf{0.165}$\pm{0.096}$&\textbf{3.6}$\pm{0.5}$&\textbf{11.8}$\pm{0.2}$ \\
    %       \midrule
    %       \midrule
    %       VAE ~\citep{KingmaW13}  &\cmark& 0.000$\pm{0.000}$&0.0$\pm{0.0}$&94.3$\pm{2.8}$ \\
    %       $\beta$-VAE~\citep{higgins2017betavae}& \cmark&0.088$\pm{0.051}$&1.6$\pm{0.8}$&99.8$\pm{2.4}$ \\
    %       InfoVAE~\citep{zhao2019infovae}&\cmark &0.000$\pm{0.000}$&0.0$\pm{0.0}$&77.8$\pm{1.6}$ \\     
    %       InfoDiffusion~\citep{wang2023infodiffusion}  & \cmark&0.299$\pm{0.006}$&3.0$\pm{0.0}$&22.3$\pm{1.2}$ \\
    %       DisDiff~\citep{yang2023disdiff}&\cmark &0.305$\pm{0.010}$&-&18.3$\pm{2.1}$ \\
    %       %\rowcolor{gray!25}DBAE+reg& \cmark&0.555$\pm{0.066}$&5.0$\pm{0.7}$&- \\
    %       \rowcolor{gray!25}DBAE+TC& \cmark&\textbf{0.417}$\pm{0.066}$&\textbf{4.6}$\pm{1.1}$&\textbf{13.4}$\pm{0.2}$ \\
    %     \bottomrule
    % \end{tabular}
    }
    \label{tab:main3}
    }
    \end{minipage}
    \hfill
    \begin{minipage}[t]{0.30\textwidth}
    \setlength{\abovecaptionskip}{5pt}
    \centering
    \raisebox{-10mm}{\includegraphics[width=0.98\textwidth, height=2.5cm]{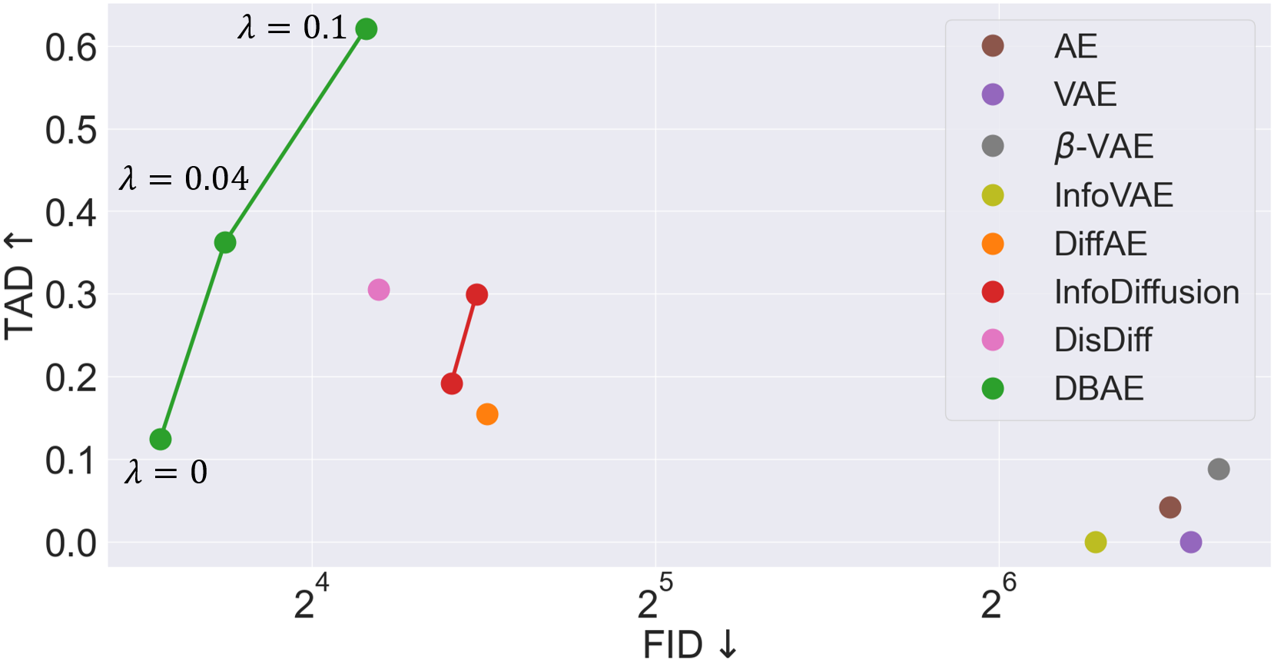}}
    \captionof{figure}{TAD-FID tradeoffs compared to the baselines.}
    \label{fig:4}
    \end{minipage}
    \vspace{-5mm}
\end{table}
Unsupervised disentanglement of the latent variable $\rvz$ is an important application of generative representation learning, as it enables controllable generation without supervision. The goal of disentanglement is to ensure that each dimension of the latent variable captures distinct information.  To achieve this, we apply regularization to minimize total correlation (TC), i.e., $D_{\text{KL}}(q_{\boldsymbol{\phi}}(\rvz)||\Pi_{i=1}^{l} q_{\boldsymbol{\phi}}(\rvz_i))$, adopted from~\citep{chen2018isolating}. TC regularization decouples the correlation between the dimensions of $\rvz$, allowing different information to be captured in each dimension. %Disentangling regularization~\citep{higgins2017betavae,kim2018disentangling} often requires obtaining the latent variable in a single feed-forward; we again highlight the importance of tractability here (i.e., $\rvx_T$ is intractable to regularize in this scheme). 
Following InfoDiffusion~\citep{wang2023infodiffusion}, we measure TAD and ATTRS~\citep{yeats2022nashae} to quantify disentanglement in $\rvz$. Since sample quality and disentanglement often involve a trade-off, we also measure FID~\citep{heusel2017gans} between 10k samples. \Cref{tab:main3} shows the performance comparison, where DBAE outperforms all the baselines. \Cref{fig:4} demonstrates the effects of coefficients on TC regularization, showing that DBAE envelops all the baselines. To disentangle information, a well-encoded representation must first be achieved. The informative representation capability of DBAE supports this application.

% During the exchange between FID and TAD

\subsection{Unconditional Generation}
\label{sec:4.4}
\begin{table}[h]
 \vspace{-3mm}
    \begin{minipage}[h]{0.67\textwidth}
    \centering
    \caption{Unconditional generation on FFHQ. `+AE' indicates the use of the inferred distribution $q_{\boldsymbol{\phi}}(\rvz)$ instead of $p_{\boldsymbol{\omega}}(\rvz)$.}
     \adjustbox{max width=\textwidth}{%
        \begin{tabular}{l|cccc }
        \toprule
             Method  & Prec ($\uparrow$) & IS ($\uparrow$)& FID 50k ($\downarrow$)      & Rec ($\uparrow$) \\
            \midrule
            \midrule
          DDIM~\citep{song2021denoising} & 0.697&3.14&11.27&0.451 \\
          DDPM~\citep{ho2020denoising} & 0.768&3.11&\textbf{9.14}&0.335 \\
          DiffAE ~\citep{preechakul2022diffusion}  & 0.762&2.98&9.40&\textbf{0.458} \\
          PDAE~\citep{zhang2022unsupervised} &0.695&2.23&47.42&0.153 \\   
          \rowcolor{gray!25}DBAE &\textbf{0.780}&\textbf{3.87}&11.25&0.392 \\
           \midrule
            \midrule
             DiffAE+AE&0.750&\textbf{3.63}&2.84&0.685 \\
            PDAE+AE  & 0.709&3.55&7.42&0.602 \\
             \rowcolor{gray!25}DBAE+AE& \textbf{0.751}&3.57&\textbf{1.77}&\textbf{0.687} \\
        \bottomrule
     \end{tabular}
    \label{tab:main4}
    }
    \end{minipage}
    \hfill
    \begin{minipage}[t]{0.32\textwidth}
    \setlength{\abovecaptionskip}{1pt}
    \centering
    \raisebox{-6mm}{\includegraphics[width=0.99\textwidth]{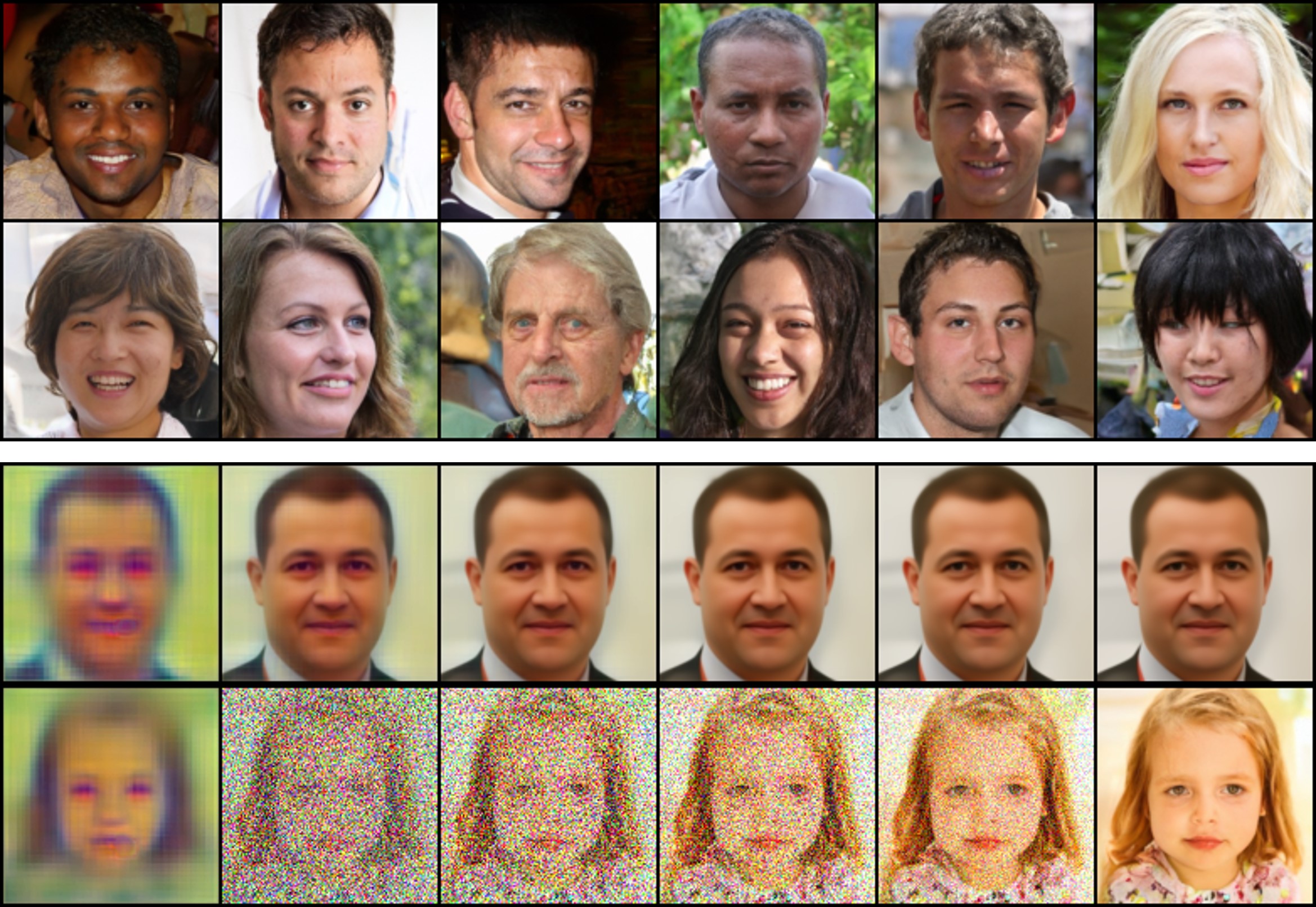}}
    \captionof{figure}{Top two rows: uncurated samples. Bottom two rows: the sampling trajectory with ODE and SDE.}
    \label{fig:5}
    \end{minipage}
    \vspace{-5mm}
\end{table}
To generate a sample unconditionally, the generation starts from the learned prior distribution $\rvz \sim p_{\boldsymbol{\omega}}(\rvz)$. The latent variable $\rvz$ is decoded into $\rvx_T=$ Dec$_{\boldsymbol{\psi}}(\rvz)$, and the sample $\rvx_0$ is finally obtained through the generative process described in \cref{eq:backodeoursmodel}. For CelebA, a comparison with DiffAE in \Cref{tab:main3} shows that DBAE surpasses DiffAE by a large margin in FID~\citep{heusel2017gans} (22.7 vs. 11.8). \Cref{tab:main4} shows the performance on FFHQ, which is known to be more diverse than CelebA. DBAE still performs the best among the baselines in terms of Precision~\citep{kynkaanniemi2019improved} and Inception Score (IS)~\citep{salimans2016improved}, both of which are highly influenced by image fidelity. However, DBAE shows slightly worse FID~\citep{heusel2017gans} and Recall~\citep{kynkaanniemi2019improved}, which are more affected by sample diversity. To analyze this, we alter the learned generative prior $p_{\boldsymbol{\omega}}(\rvz)$ to the inferred distribution $q_{\boldsymbol{\phi}}(\rvz)$ as shown in the second block of \Cref{tab:main4}. In this autoencoding case, DBAE captures both image fidelity and diversity. We speculate that it is more sensitive to the gap between $q_{\boldsymbol{\phi}}(\rvz)$ and $p_{\boldsymbol{\omega}}(\rvz)$ since the information depends solely on $\rvz$, not on the joint condition of $\rvx_T$ and $\rvz$. A complex generative prior model $\boldsymbol{\omega}$ could potentially solve this issue~\citep{esser2021taming,vahdat2021score}. \Cref{fig:5} shows the randomly generated samples and sampling trajectories on FFHQ from DBAE.

%Algorithm \ref{alg:4} shows the procedure ($\rvz \rightarrow \rvx_T \rightarrow \rvx_0$) to obtain $\rvx_0$.
\subsection{Interpolation}
\label{sec:4.5}
For the two images $\rvx_0^1$ and $\rvx_0^2$, DBAE can mix the styles by exploring the intermediate points in the latent space. We encode images into $\rvz^1=$Enc$_{\boldsymbol{\phi}}(\rvx_0^1)$ and $\rvz^2=$Enc$_{\boldsymbol{\phi}}(\rvx_0^2)$. We then regenerate from $\rvz^{\lambda}= \lambda\rvz^1 + (1-\lambda)\rvz^2$ to data $\rvx_0$ using the generative process specified in \cref{eq:backodeoursmodel}. The unique properties of DBAE offer distinct benefits here: 1) DiffAE~\citep{preechakul2022diffusion} and PDAE~\citep{zhang2022unsupervised} need to infer $\rvx_T^1$, $\rvx_T^2$ by solving the ODE in \cref{eq:zbackode} with hundreds of score function evaluations~\citep{preechakul2022diffusion,zhang2022unsupervised}. They then geometrically interpolate between $\rvx_T^1$ and $\rvx_T^2$ to obtain $\rvx_T^{\lambda}$, regardless of the correspondence between $\rvz^{\lambda}$ and $\rvx_T^{\lambda}$. 2) DBAE directly obtains an intermediate value of $\rvx_T^{\lambda}=$Dec$_{\boldsymbol{\psi}}(\rvz^{\lambda})$. This does not require solving the ODE, and the correspondence between $\rvx_T^{\lambda}$ and $\rvz^{\lambda}$ is also naturally determined by the decoder ($\boldsymbol{\psi}$). \Cref{fig:6} shows the interpolation results on the LSUN Horse, Bedroom~\citep{yu15lsun} and FFHQ datasets. The top row shows the corresponding endpoints $\rvx_T^{\lambda}$ in the interpolation, which changes smoothly between $\rvx_T^{1}$ and $\rvx_T^{2}$. The bottom row shows the interpolation results on FFHQ, which smoothly changes semantic information such as gender, glasses, and hair color.

%\newpage
\subsection{Attribute Manipulation}
\label{sec:4.6}
\begin{figure}[t]
%\vspace{-6mm}
     \centering
     \begin{subfigure}[b]{\textwidth}
         \centering
         \includegraphics[width=1.0\textwidth, height=1.8in]{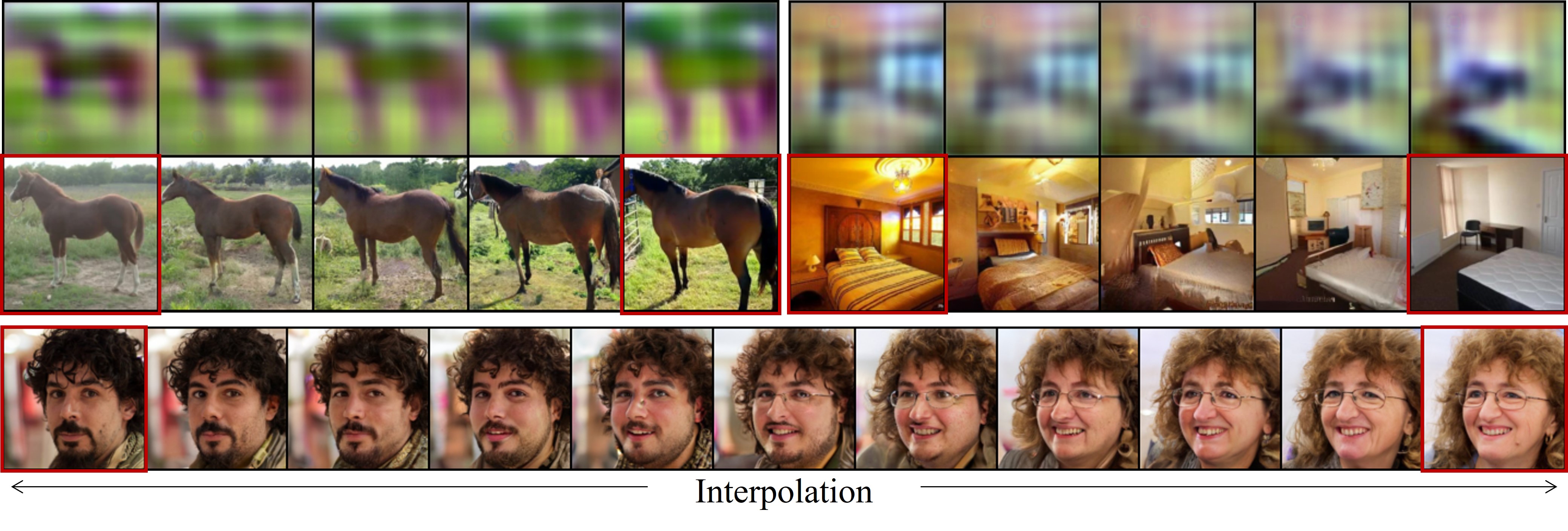}
         \vspace{-5mm}
        % \caption{Interpolations on Horse, Bedroom with endpoints (top), and  FFHQ (Bottom).}
         %\label{fig:6a}
     \end{subfigure}
     \vspace{1mm}
     \begin{subfigure}[b]{\textwidth}
         \centering
         \includegraphics[width=1.0\textwidth, height=1.4in]{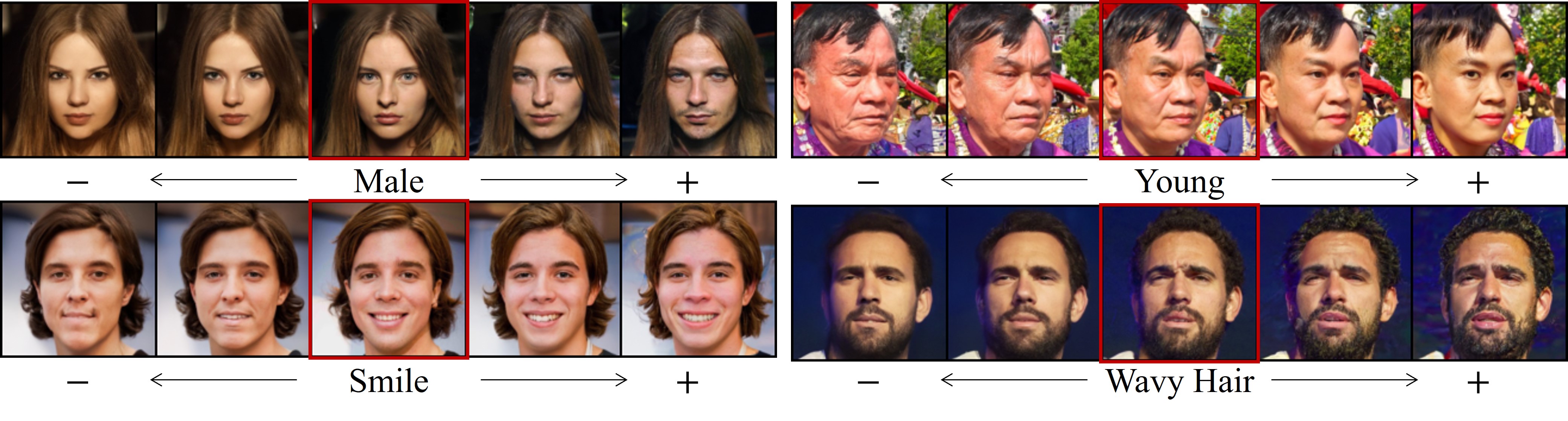}
         \vspace{-7mm}
         %\caption{Atrribute manipulation on CelebA-HQ (left) and FFHQ (right) utilizing a linear classifier.} 
          %\label{fig:6b}
     \end{subfigure}
    \vspace{-6mm}
    \caption{Interpolation (top) and attribute manipulation (bottom) with DBAE. (Red box: input image)}
    \label{fig:6}
\vspace{-6mm}
\end{figure}

The linear classifier used in \cref{sec:4.1} can also be utilized to identify the manipulation direction of $\rvz$. From the prediction of a linear classifier $\hat{y}=\mathbf{w}^{T}\rvz+b$, traversing in the direction $\frac{dy}{d\rvz}=\mathbf{w}$ increases or decreases the logit. For a image $\rvx_0$, this is encoded as $\rvz=$ Enc$_{\boldsymbol{\phi}}(\rvx_0)$. The encoded representation $\rvz$ is manipulated as $\rvz^{\textrm{new}} = \rvz +\lambda \mathbf{w}$. The manipulated image $\rvx^\textrm{new}_0$ is obtained by decoding $\rvx^\textrm{new}_T=$ Dec$_{\boldsymbol{\psi}}(\rvz^{\textrm{new}})$, and the reverse process in \cref{eq:backodeoursmodel}. DiffAE and PDAE additionally infer from $\rvx_0$ to $\rvx_T$ by solving \cref{eq:zbackode} with hundreds of score function evaluations, fixing $\rvx_T$ to prevent undesirable variations in $\rvx_T$. \Cref{tab:comput2} describes the long inference time for $\rvx_T$ in previous approaches. Moreover, if some information is split into $\rvx_T$, these methods cannot handle this information. On the other hand, DBAE infers $\rvx_T$ directly from manipulated $\rvz$, ensuring that the endpoint $\rvx_T$  is also controlled through the decoder ($\boldsymbol{\psi}$). \Cref{fig:6} shows the manipulation results for both CelebA-HQ images and FFHQ images with various attributes.

% The latent space learned from unsupervised representation learning is useful for image manipulation. If we can access supervised datasets during the testing phase, it becomes possible to identify semantically meaningful directions in the learned latent space. To simulate this scenario, we experiment with DBAE trained on FFHQ and supervised dataset CelebA-HQ with 40 attributes. Following~\citep{preechakul2022diffusion}, we train a linear classifier and identify the direction of manipulation in the space of $\rvz$. DiffAE and PDAE additionally infer from $\rvx_0$ to $\rvx_T$ by solving \cref{eq:zbackode} with hundreds of score function evaluation and fix $\rvx_T$ to prevent undesirable variations in $\rvx_T$. \Cref{tab:comput2} describes the long inference time of $\rvx_T$ in previous approaches. Moreover, if some information is split into $\rvx_T$, they can not manage such information. On the other hand, DBAE infers $\rvx_T$ directly from manipulated $\rvz$, accordingly, the endpoint $\rvx_T$ is also controlled through the decoder ($\boldsymbol{\psi}$). \Cref{fig:6} shows the manipulation results both on CelebA-HQ images and FFHQ images with various attributes.

\section{Conclusion}
\label{sec:5}
This paper identifies the \textit{information split problem} in diffusion-based representation learning, stemming from separate inferences of the forward process and the auxiliary encoder. This issue hinders the representation capabilities of the tractable latent variable $\rvz$. The proposed method, Diffusion Bridge AutoEncoders, systematically addresses these challenges by constructing $\rvz$-dependent endpoint $\rvx_T$ inference. By transforming $\rvz$ into an information bottleneck, DBAE extracts more meaningful representations within the tractable latent space. The notable enhancements in the latent quality of DBAE improve downstream inference and image manipulation applications. This work lays a solid foundation for further exploration of effective representation in learnable diffusion inference. 

\subsubsection*{Acknowledgments}
This work was supported by Institute of Information \& Communication Technology Planning \& Evaluation (IITP) grant funded by the Korea government (MSIT) (No. RS-2024-00361319, Development of technology to generate synthetic data in exceptional situations and advance artificial intelligence prediction models). This research was supported in part by the NAVER-Intel Co-Lab. The work was conducted by KAIST and reviewed by both NAVER and Intel.

\bibliography{iclr2025_conference}
\bibliographystyle{iclr2025_conference}

\appendix
\newpage
\tableofcontents
\newpage
\section{Proofs and Mathematical Explanations}
\label{sec:A1}

In this section, we follow the assumptions from Appendix A in \citep{song2021maximum}, and we also assume that both $\mathbf{s}_{\boldsymbol{\theta}}$ and $q_{\boldsymbol{\phi},\boldsymbol{\psi}}^t$ have continuous second-order derivatives and finite second moments, which are the same assumptions of Theorems 2 and 4 in \citep{song2021maximum}.

\subsection{Proof of Theorem \ref{thm:1}}
\label{sec:A1.1}
\thma*
\begin{proof}

Note that the definitions of the objective functions are 
\begin{align}
&\mathcal{L}_{\text{SM}}:= \frac{1}{2}\int_0^T{\mathbb{E}_{q^t_{\boldsymbol{\phi},\boldsymbol{\psi}}(\rvx_t,\rvx_T)}[g^2(t)|| \mathbf{s}_{\boldsymbol{\theta}}(\rvx_t,t,\rvx_T) - \nabla_{\rvx_t}\log q^t_{\boldsymbol{\phi},\boldsymbol{\psi}}(\rvx_t|\rvx_T)||_2^2]}\mathrm{d}t,\\
&\mathcal{L}_{\text{AE}}:=\mathcal{L}_{\text{SM}} + \mathcal{H}(q_{\boldsymbol{\phi},\boldsymbol{\psi}}(\rvx_0|\rvx_T)).
\end{align}

We derive the score-matching objective $\mathcal{L}_{\text{SM}}$ with the denoising version for tractability. First, $\mathcal{L}_{\text{SM}}$ is derived as follows.
\begin{align}
\label{eq:thm1-1}
\mathcal{L}_{\text{SM}}=  \frac{1}{2}\int_0^T\mathbb{E}_{q^t_{\boldsymbol{\phi},\boldsymbol{\psi}}(\rvx_t,\rvx_T)}&[g^2(t) || \mathbf{s}_{\boldsymbol{\theta}}(\rvx_t,t,\rvx_T)||_2^2  + g^2(t) ||\nabla_{\rvx_t}\log q^t_{\boldsymbol{\phi},\boldsymbol{\psi}}(\rvx_t|\rvx_T)||_2^2 
\nonumber \\
& - 2g^2(t)  \mathbf{s}_{\boldsymbol{\theta}}(\rvx_t,t,\rvx_T)^T \nabla_{\rvx_t}\log q^t_{\boldsymbol{\phi},\boldsymbol{\psi}}(\rvx_t|\rvx_T) ]\mathrm{d}t.
\end{align}

Then, the last inner product term of \cref{eq:thm1-1} can be deduced in a similar approach to \citep{vincent2011connection}:
\begin{align}
&\mathbb{E}_{q^t_{\boldsymbol{\phi},\boldsymbol{\psi}}(\rvx_t,\rvx_T)}[ \mathbf{s}_{\boldsymbol{\theta}}(\rvx_t,t,\rvx_T)^T \nabla_{\rvx_t}\log q^t_{\boldsymbol{\phi},\boldsymbol{\psi}}(\rvx_t|\rvx_T)] \\
&= \int q^t_{\boldsymbol{\phi},\boldsymbol{\psi}}(\rvx_t,\rvx_T)\mathbf{s}_{\boldsymbol{\theta}}(\rvx_t,t,\rvx_T)^T \nabla_{\rvx_t}\log q^t_{\boldsymbol{\phi},\boldsymbol{\psi}}(\rvx_t|\rvx_T) \mathrm{d}\rvx_t \mathrm{d}\rvx_T \\
&= \int q^t_{\boldsymbol{\phi},\boldsymbol{\psi}}(\rvx_T)q^t_{\boldsymbol{\phi},\boldsymbol{\psi}}(\rvx_t|\rvx_T)\mathbf{s}_{\boldsymbol{\theta}}(\rvx_t,t,\rvx_T)^T \nabla_{\rvx_t}\log q^t_{\boldsymbol{\phi},\boldsymbol{\psi}}(\rvx_t|\rvx_T) \mathrm{d}\rvx_t \mathrm{d}\rvx_T \\
&= \mathbb{E}_{q^t_{\boldsymbol{\phi},\boldsymbol{\psi}}(\rvx_T)} \Big [ \int q^t_{\boldsymbol{\phi},\boldsymbol{\psi}}(\rvx_t|\rvx_T)\mathbf{s}_{\boldsymbol{\theta}}(\rvx_t,t,\rvx_T)^T \nabla_{\rvx_t}\log q^t_{\boldsymbol{\phi},\boldsymbol{\psi}}(\rvx_t|\rvx_T) \mathrm{d}\rvx_t \Big ] \\
&= \mathbb{E}_{q^t_{\boldsymbol{\phi},\boldsymbol{\psi}}(\rvx_T)} \Big [ \int \mathbf{s}_{\boldsymbol{\theta}}(\rvx_t,t,\rvx_T)^T \nabla_{\rvx_t}q^t_{\boldsymbol{\phi},\boldsymbol{\psi}}(\rvx_t|\rvx_T) \mathrm{d}\rvx_t \Big ] \\
&= \mathbb{E}_{q^t_{\boldsymbol{\phi},\boldsymbol{\psi}}(\rvx_T)} \Big [ \int \mathbf{s}_{\boldsymbol{\theta}}(\rvx_t,t,\rvx_T)^T \Big \{ \nabla_{\rvx_t} \int q^t_{\boldsymbol{\phi},\boldsymbol{\psi}}(\rvx_0|\rvx_T)q^t_{\boldsymbol{\phi},\boldsymbol{\psi}}(\rvx_t|\rvx_T,\rvx_0) \mathrm{d}\rvx_0 \Big \} \mathrm{d}\rvx_t \Big ] \\
&= \mathbb{E}_{q^t_{\boldsymbol{\phi},\boldsymbol{\psi}}(\rvx_T)} \Big [ \int \mathbf{s}_{\boldsymbol{\theta}}(\rvx_t,t,\rvx_T)^T \Big \{ \int q^t_{\boldsymbol{\phi},\boldsymbol{\psi}}(\rvx_0|\rvx_T)\nabla_{\rvx_t}q^t_{\boldsymbol{\phi},\boldsymbol{\psi}}(\rvx_t|\rvx_T,\rvx_0) \mathrm{d}\rvx_0 \Big \} \mathrm{d}\rvx_t \Big ] \\
&= \mathbb{E}_{q^t_{\boldsymbol{\phi},\boldsymbol{\psi}}(\rvx_T)} \Big [ \int \mathbf{s}_{\boldsymbol{\theta}}(\rvx_t,t,\rvx_T)^T \Big \{ \int q^t_{\boldsymbol{\phi},\boldsymbol{\psi}}(\rvx_0|\rvx_T)q^t_{\boldsymbol{\phi},\boldsymbol{\psi}}(\rvx_t|\rvx_T,\rvx_0)\nabla_{\rvx_t}\log q^t_{\boldsymbol{\phi},\boldsymbol{\psi}}(\rvx_t|\rvx_T,\rvx_0) \mathrm{d}\rvx_0 \Big \} \mathrm{d}\rvx_t \Big ] \\
&= \mathbb{E}_{q^t_{\boldsymbol{\phi},\boldsymbol{\psi}}(\rvx_T)} \Big [ \int \int q^t_{\boldsymbol{\phi},\boldsymbol{\psi}}(\rvx_0|\rvx_T)q^t_{\boldsymbol{\phi},\boldsymbol{\psi}}(\rvx_t|\rvx_T,\rvx_0)\mathbf{s}_{\boldsymbol{\theta}}(\rvx_t,t,\rvx_T)^T  \nabla_{\rvx_t}\log q^t_{\boldsymbol{\phi},\boldsymbol{\psi}}(\rvx_t|\rvx_T,\rvx_0) \mathrm{d}\rvx_0 \mathrm{d}\rvx_t \Big ] \\
&= \mathbb{E}_{q^t_{\boldsymbol{\phi},\boldsymbol{\psi}}(\rvx_0,\rvx_t,\rvx_T)}  [\mathbf{s}_{\boldsymbol{\theta}}(\rvx_t,t,\rvx_T)^T  \nabla_{\rvx_t}\log q^t_{\boldsymbol{\phi},\boldsymbol{\psi}}(\rvx_t|\rvx_T,\rvx_0)  ]
\end{align}

%Step2. E[\nabla_{\rvx_t}\log q^t_{\boldsymbol{\phi},\boldsymbol{\psi}}(\rvx_t|\rvx_T)] 와 E[\nabla_{\rvx_t}\log q^t_{\boldsymbol{\phi},\boldsymbol{\psi}}(\rvx_t|\rvx_T, \rvx_0)] 관계를 entropy intergration + Bridge SDE에서의 Fokker-Plank Equation 이용해서 전개.

Next, we rewrite the second term of \cref{eq:thm1-1}. To begin, we express the entropy $\mathcal{H}(q_{\boldsymbol{\phi},\boldsymbol{\psi}}(\rvx_0|\rvx_T))$ with $\nabla_{\rvx_t}\log q^t_{\boldsymbol{\phi},\boldsymbol{\psi}}(\rvx_t|\rvx_T)$, which is similar to the proof of Theorem 4 in \citep{song2021maximum}.
Let $\mathcal{H}(q_{\boldsymbol{\phi},\boldsymbol{\psi}}(\rvx_t,\rvx_T)):= - \int q_{\boldsymbol{\phi},\boldsymbol{\psi}}(\rvx_t,\rvx_T) \log q_{\boldsymbol{\phi},\boldsymbol{\psi}}(\rvx_t,\rvx_T) \mathrm{d}\rvx_t\mathrm{d}\rvx_T $ be the joint entropy function of $q_{\boldsymbol{\phi},\boldsymbol{\psi}}(\rvx_t,\rvx_T)$. Note that $\mathcal{H}(q_{\boldsymbol{\phi},\boldsymbol{\psi}}(\rvx_T,\rvx_T)) = \mathcal{H}(q_{\boldsymbol{\phi},\boldsymbol{\psi}}(\rvx_T))$. Then, we have
\begin{align}
\label{eq:thm1-2}
\mathcal{H}(q_{\boldsymbol{\phi},\boldsymbol{\psi}}(\rvx_0,\rvx_T)) =&\mathcal{H}(q_{\boldsymbol{\phi},\boldsymbol{\psi}}(\rvx_T,\rvx_T))+\int_{T}^{0} \frac{\partial \mathcal{H}_t(\rvx_t,\rvx_T)}{\partial t} \mathrm{d}t.
\end{align}

We can expand the integrand of \cref{eq:thm1-2} as follows.
\begin{align}
\frac{\partial \mathcal{H}_t(\rvx_t,\rvx_T)}{\partial t} &= \frac{\partial}{\partial t}\Big [ - \int q_{\boldsymbol{\phi},\boldsymbol{\psi}}(\rvx_t,\rvx_T) \log q_{\boldsymbol{\phi},\boldsymbol{\psi}}(\rvx_t,\rvx_T) \mathrm{d}\rvx_t \mathrm{d}\rvx_T \Big] \\
&= \frac{\partial}{\partial t}\Big [ - \int q_{\boldsymbol{\phi},\boldsymbol{\psi}}(\rvx_T)q_{\boldsymbol{\phi},\boldsymbol{\psi}}(\rvx_t|\rvx_T) [ \log q_{\boldsymbol{\phi},\boldsymbol{\psi}}(\rvx_T)+\log q_{\boldsymbol{\phi},\boldsymbol{\psi}}(\rvx_t|\rvx_T)] \mathrm{d}\rvx_t \mathrm{d}\rvx_T \Big]\\
&=  - \int q_{\boldsymbol{\phi},\boldsymbol{\psi}}(\rvx_T)\frac{\partial}{\partial t} \big \{ q_{\boldsymbol{\phi},\boldsymbol{\psi}}(\rvx_t|\rvx_T) [ \log q_{\boldsymbol{\phi},\boldsymbol{\psi}}(\rvx_T)+\log q_{\boldsymbol{\phi},\boldsymbol{\psi}}(\rvx_t|\rvx_T)] \big \} \mathrm{d}\rvx_t \mathrm{d}\rvx_T\\
&=  - \mathbb{E}_{q_{\boldsymbol{\phi},\boldsymbol{\psi}}(\rvx_T)} \Big [ \int \frac{\partial}{\partial t} \big \{ q_{\boldsymbol{\phi},\boldsymbol{\psi}}(\rvx_t|\rvx_T) [ \log q_{\boldsymbol{\phi},\boldsymbol{\psi}}(\rvx_T)+\log q_{\boldsymbol{\phi},\boldsymbol{\psi}}(\rvx_t|\rvx_T)] \big \} \mathrm{d}\rvx_t \Big  ] \label{eq:thm1-3}
\end{align}

We further expand the integration in the last term as follows.
\begin{align}
&\int \frac{\partial}{\partial t} \big \{ q_{\boldsymbol{\phi},\boldsymbol{\psi}}(\rvx_t|\rvx_T) [ \log q_{\boldsymbol{\phi},\boldsymbol{\psi}}(\rvx_T)+\log q_{\boldsymbol{\phi},\boldsymbol{\psi}}(\rvx_t|\rvx_T)] \big \} \mathrm{d}\rvx_t \\
&=\int \frac{\partial}{\partial t} \big \{ q_{\boldsymbol{\phi},\boldsymbol{\psi}}(\rvx_t|\rvx_T) \big \} [ \log q_{\boldsymbol{\phi},\boldsymbol{\psi}}(\rvx_T)+\log q_{\boldsymbol{\phi},\boldsymbol{\psi}}(\rvx_t|\rvx_T)]  +  q_{\boldsymbol{\phi},\boldsymbol{\psi}}(\rvx_t|\rvx_T) \frac{\partial \log q_{\boldsymbol{\phi},\boldsymbol{\psi}}(\rvx_t|\rvx_T)}{\partial t} \mathrm{d}\rvx_t \\
&=\int \frac{\partial}{\partial t} \big \{ q_{\boldsymbol{\phi},\boldsymbol{\psi}}(\rvx_t|\rvx_T) \big \} [ \log q_{\boldsymbol{\phi},\boldsymbol{\psi}}(\rvx_T)+\log q_{\boldsymbol{\phi},\boldsymbol{\psi}}(\rvx_t|\rvx_T)]  +  \frac{\partial q_{\boldsymbol{\phi},\boldsymbol{\psi}}(\rvx_t|\rvx_T)}{\partial t} \mathrm{d}\rvx_t \\
&=\int \frac{\partial}{\partial t} \big \{ q_{\boldsymbol{\phi},\boldsymbol{\psi}}(\rvx_t|\rvx_T) \big \} [ \log q_{\boldsymbol{\phi},\boldsymbol{\psi}}(\rvx_T)+\log q_{\boldsymbol{\phi},\boldsymbol{\psi}}(\rvx_t|\rvx_T)]\mathrm{d}\rvx_t   + \frac{\partial}{\partial t} \int  q_{\boldsymbol{\phi},\boldsymbol{\psi}}(\rvx_t|\rvx_T) \mathrm{d}\rvx_t\\
&=\int \frac{\partial}{\partial t} \big \{ q_{\boldsymbol{\phi},\boldsymbol{\psi}}(\rvx_t|\rvx_T) \big \} [ \log q_{\boldsymbol{\phi},\boldsymbol{\psi}}(\rvx_T)+\log q_{\boldsymbol{\phi},\boldsymbol{\psi}}(\rvx_t|\rvx_T)]\mathrm{d}\rvx_t  \label{eq:thm1-4} \\
&=\int \frac{\partial}{\partial t} \big \{ q_{\boldsymbol{\phi},\boldsymbol{\psi}}(\rvx_t|\rvx_T) \big \} \log q_{\boldsymbol{\phi},\boldsymbol{\psi}}(\rvx_T) \mathrm{d}\rvx_t + \int \frac{\partial}{\partial t} \big \{ q_{\boldsymbol{\phi},\boldsymbol{\psi}}(\rvx_t|\rvx_T) \big \} \log q_{\boldsymbol{\phi},\boldsymbol{\psi}}(\rvx_t|\rvx_T) \mathrm{d}\rvx_t \\
&= \log q_{\boldsymbol{\phi},\boldsymbol{\psi}}(\rvx_T) \frac{\partial}{\partial t} \int   q_{\boldsymbol{\phi},\boldsymbol{\psi}}(\rvx_t|\rvx_T)  \mathrm{d}\rvx_t + \int \frac{\partial}{\partial t} \big \{ q_{\boldsymbol{\phi},\boldsymbol{\psi}}(\rvx_t|\rvx_T) \big \} \log q_{\boldsymbol{\phi},\boldsymbol{\psi}}(\rvx_t|\rvx_T) \mathrm{d}\rvx_t \\
&= \int \frac{\partial}{\partial t} \big \{ q_{\boldsymbol{\phi},\boldsymbol{\psi}}(\rvx_t|\rvx_T) \big \} \log q_{\boldsymbol{\phi},\boldsymbol{\psi}}(\rvx_t|\rvx_T) \mathrm{d}\rvx_t \label{eq:thm1-5}
\end{align}

Note that we use $\int   q_{\boldsymbol{\phi},\boldsymbol{\psi}}(\rvx_t|\rvx_T)  \mathrm{d}\rvx_t = 1$ in \cref{eq:thm1-4,eq:thm1-5}.

By eq. (51) in \citep{zhou2024denoising}, the Fokker-Plank equation for $q_{\boldsymbol{\phi},\boldsymbol{\psi}}(\rvx_t|\rvx_T)$ follows
\begin{align}
\frac{\partial}{\partial t} q_{\boldsymbol{\phi},\boldsymbol{\psi}}(\rvx_t|\rvx_T) = & - \nabla_{\rvx_t} \cdot \Big [ ( \mathbf{f} (\rvx_t, t) + g^2(t) \mathbf{h}(\rvx_t, t, \rvx_T, T) ) q_{\boldsymbol{\phi},\boldsymbol{\psi}}(\rvx_t | \rvx_T)  \Big ] \nonumber \\
& + \frac{1}{2} g^2(t) \nabla_{\rvx_t} \cdot \nabla_{\rvx_t} q_{\boldsymbol{\phi},\boldsymbol{\psi}}(\rvx_t|\rvx_T) \\
= & - \nabla_{\rvx_t} \cdot  [\tilde{\mathbf{f}}_{\boldsymbol{\phi},\boldsymbol{\psi}}(\rvx_t,t) q_{\boldsymbol{\phi},\boldsymbol{\psi}} (\rvx_t | \rvx_T)  ], \label{eq:thm1-6}
\end{align}
where $\tilde{\mathbf{f}}_{\boldsymbol{\phi},\boldsymbol{\psi}}(\rvx_t,t) := \mathbf{f} (\rvx_t, t) + g^2(t) \mathbf{h}(\rvx_t, t, \rvx_T, T) - \frac{1}{2} g^2(t) \nabla_{\rvx_t} \log q_{\boldsymbol{\phi},\boldsymbol{\psi}} (\rvx_t|\rvx_T)$.

Combining \cref{eq:thm1-3,eq:thm1-5,eq:thm1-6}, we have
\begin{align}
&\frac{\partial \mathcal{H}_t(\rvx_t,\rvx_T)}{\partial t}\\
&=  - \mathbb{E}_{q_{\boldsymbol{\phi},\boldsymbol{\psi}}(\rvx_T)} \Big [ \int - \nabla_{\rvx_t} \cdot  [\tilde{\mathbf{f}}_{\boldsymbol{\phi},\boldsymbol{\psi}}(\rvx_t,t) q_{\boldsymbol{\phi},\boldsymbol{\psi}} (\rvx_t | \rvx_T)  ] \log q_{\boldsymbol{\phi},\boldsymbol{\psi}}(\rvx_t|\rvx_T) \mathrm{d}\rvx_t \Big ] \\
&= \mathbb{E}_{q_{\boldsymbol{\phi},\boldsymbol{\psi}}(\rvx_T)} \Big [ \int \nabla_{\rvx_t} \cdot  [\tilde{\mathbf{f}}_{\boldsymbol{\phi},\boldsymbol{\psi}}(\rvx_t,t) q_{\boldsymbol{\phi},\boldsymbol{\psi}} (\rvx_t | \rvx_T)  ] \log q_{\boldsymbol{\phi},\boldsymbol{\psi}}(\rvx_t|\rvx_T) \mathrm{d}\rvx_t \Big ] \\
&= \mathbb{E}_{q_{\boldsymbol{\phi},\boldsymbol{\psi}}(\rvx_T)} \Big [ \tilde{\mathbf{f}}_{\boldsymbol{\phi},\boldsymbol{\psi}}(\rvx_t,t) q_{\boldsymbol{\phi},\boldsymbol{\psi}} (\rvx_t | \rvx_T)   \log q_{\boldsymbol{\phi},\boldsymbol{\psi}}(\rvx_t|\rvx_T) \nonumber\\
& \quad\quad\quad\quad\quad\quad - \int q_{\boldsymbol{\phi},\boldsymbol{\psi}} (\rvx_t | \rvx_T) \tilde{\mathbf{f}}_{\boldsymbol{\phi},\boldsymbol{\psi}}(\rvx_t,t)^T \nabla_{\rvx_t} \log q_{\boldsymbol{\phi},\boldsymbol{\psi}}(\rvx_t|\rvx_T)   \mathrm{d}\rvx_t \Big ] \\
&= \mathbb{E}_{q_{\boldsymbol{\phi},\boldsymbol{\psi}}(\rvx_T)} \Big [- \int q_{\boldsymbol{\phi},\boldsymbol{\psi}} (\rvx_t | \rvx_T) \tilde{\mathbf{f}}_{\boldsymbol{\phi},\boldsymbol{\psi}}(\rvx_t,t)^T  \nabla_{\rvx_t} \log q_{\boldsymbol{\phi},\boldsymbol{\psi}}(\rvx_t|\rvx_T)   \mathrm{d}\rvx_t \Big ] \\
&= \mathbb{E}_{q_{\boldsymbol{\phi},\boldsymbol{\psi}}(\rvx_T)} \Big [- \int\{ \mathbf{f} (\rvx_t, t) + g^2(t) \mathbf{h}(\rvx_t, t, \rvx_T, T) - \frac{1}{2} g^2(t) \nabla_{\rvx_t} \log q_{\boldsymbol{\phi},\boldsymbol{\psi}} (\rvx_t|\rvx_T) \}^T \nonumber\\
& \quad\quad\quad\quad\quad\quad \nabla_{\rvx_t} \log q_{\boldsymbol{\phi},\boldsymbol{\psi}}(\rvx_t|\rvx_T)  q_{\boldsymbol{\phi},\boldsymbol{\psi}} (\rvx_t | \rvx_T)  \mathrm{d}\rvx_t \Big ] \\
&= \mathbb{E}_{q_{\boldsymbol{\phi},\boldsymbol{\psi}}(\rvx_t,\rvx_T)} \Big [\{ -\mathbf{f} (\rvx_t, t) - g^2(t) \mathbf{h}(\rvx_t, t, \rvx_T, T) + \frac{1}{2} g^2(t) \nabla_{\rvx_t} \log q_{\boldsymbol{\phi},\boldsymbol{\psi}} (\rvx_t|\rvx_T) \}^T \nonumber\\
& \quad\quad\quad\quad\quad\quad\quad \nabla_{\rvx_t} \log q_{\boldsymbol{\phi},\boldsymbol{\psi}}(\rvx_t|\rvx_T)   \Big ] \\
&= \mathbb{E}_{q_{\boldsymbol{\phi},\boldsymbol{\psi}}(\rvx_t,\rvx_T)} \Big [ \frac{1}{2} g^2(t)||  \nabla_{\rvx_t} \log q_{\boldsymbol{\phi},\boldsymbol{\psi}} (\rvx_t|\rvx_T) ||_2^2 \nonumber \\
& \quad\quad\quad\quad\quad\quad\quad- \{ \mathbf{f} (\rvx_t, t) + g^2(t) \mathbf{h}(\rvx_t, t, \rvx_T, T)\}^T \nabla_{\rvx_t} \log q_{\boldsymbol{\phi},\boldsymbol{\psi}}(\rvx_t|\rvx_T)  \Big ].
\end{align}

Therefore, the joint entropy function $\mathcal{H}(q_{\boldsymbol{\phi},\boldsymbol{\psi}}(\rvx_0,\rvx_T))$ can be expressed as
\begin{align}
\mathcal{H}(q_{\boldsymbol{\phi},\boldsymbol{\psi}}&(\rvx_0,\rvx_T))= \mathcal{H}(q_{\boldsymbol{\phi},\boldsymbol{\psi}}(\rvx_T)) + \int_{T}^{0} \mathbb{E}_{q^t_{\boldsymbol{\phi},\boldsymbol{\psi}}(\rvx_t,\rvx_T)}\Big [ \frac{1}{2} g^2(t) ||\nabla_{\rvx_t}\log q^t_{\boldsymbol{\phi},\boldsymbol{\psi}}(\rvx_t|\rvx_T)||_2^2 \nonumber \\
& - \mathbf{f}(\rvx_t,t)^T \nabla_{\rvx_t}\log q^t_{\boldsymbol{\phi},\boldsymbol{\psi}}(\rvx_t|\rvx_T) - g^2(t)\mathbf{h}(\rvx_t,t,\rvx_T,T)^T \nabla_{\rvx_t}\log q^t_{\boldsymbol{\phi},\boldsymbol{\psi}}(\rvx_t|\rvx_T)  \Big ] \mathrm{d}t.
\end{align}

We can re-write the above equation as follows.
\begin{align}
&\int_{0}^{T} \mathbb{E}_{q^t_{\boldsymbol{\phi},\boldsymbol{\psi}}(\rvx_t,\rvx_T)} [ g^2(t) ||\nabla_{\rvx_t}\log q^t_{\boldsymbol{\phi},\boldsymbol{\psi}}(\rvx_t|\rvx_T)||_2^2] \mathrm{d}t\\
=& - 2 \mathcal{H}(q_{\boldsymbol{\phi},\boldsymbol{\psi}}(\rvx_0|\rvx_T))\\
&+ \int_{0}^{T} \mathbb{E}_{q^t_{\boldsymbol{\phi},\boldsymbol{\psi}}(\rvx_t,\rvx_T)}[  2\mathbf{f}(\rvx_t,t)^T \nabla_{\rvx_t}\log q^t_{\boldsymbol{\phi},\boldsymbol{\psi}}(\rvx_t|\rvx_T) + 2g^2(t)\mathbf{h}(\rvx_t,t,\rvx_T,T)^T \nabla_{\rvx_t}\log q^t_{\boldsymbol{\phi},\boldsymbol{\psi}}(\rvx_t|\rvx_T)  ] \mathrm{d}t \nonumber\\
=&- 2 \mathcal{H}(q_{\boldsymbol{\phi},\boldsymbol{\psi}}(\rvx_0|\rvx_T))+ 2\int_{0}^{T} \mathbb{E}_{q^t_{\boldsymbol{\phi},\boldsymbol{\psi}}(\rvx_t,\rvx_T)}[\nabla_{\rvx_t} \cdot \{ \mathbf{f}(\rvx_t,t) + g^2(t)\mathbf{h}(\rvx_t,t,\rvx_T,T) \} ] \mathrm{d}t \label{eq:app_ym0}
\end{align}

Similar to the process above, we can obtain the following results for the following joint entropy function $\mathcal{H}(q_{\boldsymbol{\phi},\boldsymbol{\psi}}(\rvx_0,\rvx_t,\rvx_T)):= - \int q_{\boldsymbol{\phi},\boldsymbol{\psi}}(\rvx_0,\rvx_t,\rvx_T) \log q_{\boldsymbol{\phi},\boldsymbol{\psi}}(\rvx_0,\rvx_t,\rvx_T) \mathrm{d}\rvx_0 \mathrm{d}\rvx_t\mathrm{d}\rvx_T $.
\begin{align}
\mathcal{H}(q_{\boldsymbol{\phi},\boldsymbol{\psi}}(\rvx_0,\rvx_0,\rvx_T)) =\mathcal{H}(q_{\boldsymbol{\phi},\boldsymbol{\psi}}(\rvx_0,\rvx_T,\rvx_T))+\int_{T}^{0} \frac{\partial \mathcal{H}(\rvx_0,\rvx_t,\rvx_T)}{\partial t} \mathrm{d}t
\end{align}

In the following results, we utilize the Fokker-Plank equation for $q_{\boldsymbol{\phi},\boldsymbol{\psi}}(\rvx_t|\rvx_0,\rvx_T)$, which comes from eq. (49) in \citep{zhou2024denoising}:
\begin{align}
\frac{\partial}{\partial t} q_{\boldsymbol{\phi},\boldsymbol{\psi}}(\rvx_t|\rvx_0,\rvx_T) = & - \nabla_{\rvx_t} \cdot \Big [ ( \mathbf{f} (\rvx_t, t) + g^2(t) \mathbf{h}(\rvx_t, t, \rvx_T, T) ) q_{\boldsymbol{\phi},\boldsymbol{\psi}}(\rvx_t | \rvx_0, \rvx_T)  \Big ] \nonumber \\
& + \frac{1}{2} g^2(t) \nabla_{\rvx_t} \cdot \nabla_{\rvx_t} q_{\boldsymbol{\phi},\boldsymbol{\psi}}(\rvx_t|\rvx_0,\rvx_T) \\
= & - \nabla_{\rvx_t} \cdot  [\hat{\mathbf{f}}_{\boldsymbol{\phi},\boldsymbol{\psi}}(\rvx_t,t) q_{\boldsymbol{\phi},\boldsymbol{\psi}} (\rvx_t |\rvx_0, \rvx_T)  ],
\end{align}
where $\hat{\mathbf{f}}_{\boldsymbol{\phi},\boldsymbol{\psi}}(\rvx_t,t) := \mathbf{f} (\rvx_t, t) + g^2(t) \mathbf{h}(\rvx_t, t, \rvx_T, T) - \frac{1}{2} g^2(t) \nabla_{\rvx_t} \log q_{\boldsymbol{\phi},\boldsymbol{\psi}} (\rvx_t|\rvx_0,\rvx_T)$.

Then, we have
\begin{align}
0 = \int_{T}^{0} \mathbb{E}_{q^t_{\boldsymbol{\phi},\boldsymbol{\psi}}(\rvx_0,\rvx_t,\rvx_T)}\Big [ &\frac{1}{2} g^2(t) ||\nabla_{\rvx_t}\log q^t_{\boldsymbol{\phi},\boldsymbol{\psi}}(\rvx_t|\rvx_T,\rvx_0)||_2^2- \mathbf{f}(\rvx_t,t) \nabla_{\rvx_t}\log q^t_{\boldsymbol{\phi},\boldsymbol{\psi}}(\rvx_t|\rvx_T,\rvx_0) \nonumber \\
&- g^2(t)\mathbf{h}(\rvx_t,t,\rvx_T,T) \nabla_{\rvx_t}\log q^t_{\boldsymbol{\phi},\boldsymbol{\psi}}(\rvx_t|\rvx_T,\rvx_0)  \Big ] \mathrm{d}t,
\end{align}
where the left hand side is from $0=\mathcal{H}(q_{\boldsymbol{\phi},\boldsymbol{\psi}}(\rvx_0,\rvx_0,\rvx_T)) -\mathcal{H}(q_{\boldsymbol{\phi},\boldsymbol{\psi}}(\rvx_0,\rvx_T,\rvx_T))$, and right hand side is from $\int_{T}^{0} \frac{\partial \mathcal{H}(\rvx_0,\rvx_t,\rvx_T)}{\partial t} \mathrm{d}t$. We can further derive as follows.
\begin{align}
\int_{0}^{T} \mathbb{E}_{q^t_{\boldsymbol{\phi},\boldsymbol{\psi}}(\rvx_0,\rvx_t,\rvx_T)} &[ g^2(t) ||\nabla_{\rvx_t}\log q^t_{\boldsymbol{\phi},\boldsymbol{\psi}}(\rvx_t|\rvx_T,\rvx_0)||_2^2] \mathrm{d}t\nonumber \\
&=2\int_{0}^{T} \mathbb{E}_{q^t_{\boldsymbol{\phi},\boldsymbol{\psi}}(\rvx_0,\rvx_t,\rvx_T)}[\nabla_{\rvx_t} \cdot \{ \mathbf{f}(\rvx_t,t) + g^2(t)\mathbf{h}(\rvx_t,t,\rvx_T,T) \} ] \mathrm{d}t \label{eq:app_ym1}
\end{align}

Combining \cref{eq:app_ym0,eq:app_ym1}, we have 
\begin{align}
\int_{0}^{T} &\mathbb{E}_{q^t_{\boldsymbol{\phi},\boldsymbol{\psi}}(\rvx_t,\rvx_T)} [ g^2(t) ||\nabla_{\rvx_t}\log q^t_{\boldsymbol{\phi},\boldsymbol{\psi}}(\rvx_t|\rvx_T)||_2^2] \mathrm{d}t \nonumber \\
&= - 2 \mathcal{H}(q_{\boldsymbol{\phi},\boldsymbol{\psi}}(\rvx_0|\rvx_T))+\int_{0}^{T} \mathbb{E}_{q^t_{\boldsymbol{\phi},\boldsymbol{\psi}}(\rvx_0,\rvx_t,\rvx_T)} [ g^2(t) ||\nabla_{\rvx_t}\log q^t_{\boldsymbol{\phi},\boldsymbol{\psi}}(\rvx_t|\rvx_T,\rvx_0)||_2^2]\mathrm{d}t 
\end{align}

Combining all results, the score-matching objective $\mathcal{L}_{\text{SM}}$ can be expressed as
\begin{align}
\mathcal{L}_{\text{SM}}&=  \frac{1}{2}\int_0^T\mathbb{E}_{q^t_{\boldsymbol{\phi},\boldsymbol{\psi}}(\rvx_0,\rvx_t,\rvx_T)}[g^2(t) || \mathbf{s}_{\boldsymbol{\theta}}(\rvx_t,t,\rvx_T)||_2^2  + g^2(t) ||\nabla_{\rvx_t}\log q^t_{\boldsymbol{\phi},\boldsymbol{\psi}}(\rvx_t|\rvx_T,\rvx_0)||_2^2 
\nonumber \\
& \quad\quad\quad\quad\quad\quad\quad - 2g^2(t)  \mathbf{s}_{\boldsymbol{\theta}}(\rvx_t,t,\rvx_T)^T \nabla_{\rvx_t}\log q^t_{\boldsymbol{\phi},\boldsymbol{\psi}}(\rvx_t|\rvx_T,\rvx_0) ]\mathrm{d}t- \mathcal{H}(q_{\boldsymbol{\phi},\boldsymbol{\psi}}(\rvx_0|\rvx_T))\\
& = \frac{1}{2}\int_0^T{\mathbb{E}_{q^t_{\boldsymbol{\phi},\boldsymbol{\psi}}(\rvx_0,\rvx_t,\rvx_T)}[g^2(t)|| \mathbf{s}_{\boldsymbol{\theta}}(\rvx_t,t,\rvx_T) - \nabla_{\rvx_t}\log \tilde{q}_{t}(\rvx_t|\rvx_0,\rvx_T)||_2^2]}\mathrm{d}t- \mathcal{H}(q_{\boldsymbol{\phi},\boldsymbol{\psi}}(\rvx_0|\rvx_T))
\end{align}
The last equality comes from $q^t_{\boldsymbol{\phi},\boldsymbol{\psi}}(\rvx_t|\rvx_0,\rvx_T)=\tilde{q}_t(\rvx_t|\rvx_0,\rvx_T)$, which is based on the Doob's $h$-transform~\citep{doob1984classical,rogers2000diffusions,zhou2024denoising}.
Finally, we have
\begin{align}
\mathcal{L}_{\text{AE}}&=\mathcal{L}_{\text{SM}} + \mathcal{H}(q_{\boldsymbol{\phi},\boldsymbol{\psi}}(\rvx_0|\rvx_T))\\
&= \frac{1}{2}\int_0^T{\mathbb{E}_{q^t_{\boldsymbol{\phi},\boldsymbol{\psi}}(\rvx_0,\rvx_t,\rvx_T)}[g^2(t)|| \mathbf{s}_{\boldsymbol{\theta}}(\rvx_t,t,\rvx_T) - \nabla_{\rvx_t}\log \tilde{q}_{t}(\rvx_t|\rvx_0,\rvx_T)||_2^2]}\mathrm{d}t.
\end{align}

%증명 끝!

%pred-x objective 와의 equivalence 는 YM가 작성.
From here, we show that the objective $\mathcal{L}_{\text{AE}}$ is equivalent to the reconstruction objective. Assume that the forward SDE in \cref{eq:for} is a linear SDE in terms of $\rvx_t$ (e.g. VP~\citep{ho2020denoising}, VE~\citep{song2021scorebased}). Then the transition kernel $\tilde{q}(\rvx_t|\rvx_0)$ becomes Gaussian distribution. Then, we can represent reparametrized form $\rvx_t=\alpha_t\rvx_0+\sigma_t\boldsymbol{\epsilon}$, where $\alpha_t$ and $\sigma_t$ are time-dependent constants determined by drift $\mathbf{f}$ and volatility $g$, and $\boldsymbol{\epsilon} \sim \mathcal{N}(\mathbf{0},\mathbf{I})$. The time-dependent constant signal-to-noise ratio $SNR(t):=\frac{\alpha_t^2}{\sigma_t^2}$ often define to discuss on diffusion process~\citep{kingma2021variational}. We define SNR ratio, $R(t):=\frac{SNR(T)}{SNR(t)}$ for convenient derivation.

\citet{zhou2024denoising} show the exact form of $\tilde{q}_{t}(\rvx_t|\rvx_0,\rvx_T):=\mathcal{N}(\hat{\mu}_t,\hat{\sigma}^2_t\mathbf{I})$, where $\hat{\mu}_t=R(t)\frac{\alpha_t}{\alpha_T}\rvx_T+\alpha_t\rvx_0(1-R(t))$ and $\hat{\sigma_t}=\sigma_t\sqrt{1-R(t)}$. This Gaussian form determines the exact analytic form of the score function $\nabla_{\rvx_t}\log \tilde{q}_{t}(\rvx_t|\rvx_0,\rvx_T)$. We plug this into our objective $\mathcal{L}_{\text{AE}}$.

\begin{align}
\mathcal{L}_{\textrm{AE}}&=\frac{1}{2}\int_0^T{\mathbb{E}_{q^t_{\boldsymbol{\phi},\boldsymbol{\psi}}(\rvx_0,\rvx_t,\rvx_T)}[g^2(t)|| \mathbf{s}_{\boldsymbol{\theta}}(\rvx_t,t,\rvx_T) - \nabla_{\rvx_t}\log \tilde{q}_{t}(\rvx_t|\rvx_0,\rvx_T)||_2^2]}\mathrm{d}t\\
&=\frac{1}{2}\int_0^T{\mathbb{E}_{q^t_{\boldsymbol{\phi},\boldsymbol{\psi}}(\rvx_0,\rvx_t,\rvx_T)}[g^2(t)|| \mathbf{s}_{\boldsymbol{\theta}}(\rvx_t,t,\rvx_T) - \frac{-\rvx_t+(R(t)\frac{\alpha_t}{\alpha_T}\rvx_T+\alpha_t\rvx_0(1-R(t)))}{\sigma^2_t (1-R(t))}||_2^2]}\mathrm{d}t\\
&=\frac{1}{2}\int_0^T{\mathbb{E}_{q^t_{\boldsymbol{\phi},\boldsymbol{\psi}}(\rvx_0,\rvx_t,\rvx_T)}[\lambda(t)|| \mathbf{x}^{0}_{\boldsymbol{\theta}}(\rvx_t,t,\rvx_T) - \rvx_0||_2^2]}\mathrm{d}t,
\end{align}
where 
\begin{align}
&\lambda(t)=\frac{\alpha_t}{\sigma^2_t} g^2(t),\\ 
&\mathbf{x}^{0}_{\boldsymbol{\theta}}(\rvx_t,t,\rvx_T):=\alpha(t)\rvx_t+\beta(t)\rvx_T+\gamma(t)\mathbf{s}_{\boldsymbol{\theta}}(\rvx_t,t,\rvx_T),\\ 
&\alpha(t)=\frac{1}{\alpha_t(1-R(t))},\quad \beta(t)=-\frac{R(t)}{\alpha_T(1-R(t))},\quad\gamma(t)=\frac{\sigma^2_t}{\alpha_t}.
\end{align}

\end{proof}
\subsection{Proof of Theorem \ref{thm:2}}

\label{sec:A1.2}
\thmb*
\begin{proof}

% Step1. Data Processing Inequality & Chain Rule of KL

From the data processing inequality with our graphical model, we have the following result, similar to eq. (14) in \citep{song2021denoising}.
\begin{align}
D_{\textrm{KL}}(q_{\text{data}}(\rvx_0)||p_{\boldsymbol{\psi},\boldsymbol{\theta}}(\rvx_0))\leq D_{\textrm{KL}}(q_{\boldsymbol{\phi},\boldsymbol{\psi}}(\rvx_{0:T},\rvz)||p_{\boldsymbol{\psi},\boldsymbol{\theta}}(\rvx_{0:T},\rvz))\label{eq:thm1-7}
\end{align}

Also, the chain rule of KL divergences, we have
\begin{align}
&D_{\textrm{KL}}(q_{\boldsymbol{\phi},\boldsymbol{\psi}}(\rvx_{0:T},\rvz)||p_{\boldsymbol{\psi},\boldsymbol{\theta}}(\rvx_{0:T},\rvz)) \\
&=  D_{\textrm{KL}}(q_{\boldsymbol{\phi},\boldsymbol{\psi}}(\rvx_{T},\rvz)||p_{\boldsymbol{\psi},\boldsymbol{\theta}}(\rvx_{T},\rvz)) + \mathbb{E}_{q_{\boldsymbol{\phi},\boldsymbol{\psi}}(\rvx_T,\rvz)} [ D_{\textrm{KL}}(\mu_{\boldsymbol{\phi},\boldsymbol{\psi}}(\cdot|\rvx_{T},\rvz)||\nu_{\boldsymbol{\theta},\boldsymbol{\psi}}(\cdot|\rvx_{T},\rvz)) ],\label{eq:thm1-8}
\end{align}
where $\mu_{\boldsymbol{\phi}, \boldsymbol{\psi}}$ and $\nu_{\boldsymbol{\theta},\boldsymbol{\psi}}$ are the path measures of the SDEs in \cref{eq:mu_sde,eq:nu_sde}, respectively:
\begin{align}
 \mathrm{d}\rvx_t &= [\mathbf{f}(\rvx_t, t)+g^2(t)\mathbf{h}(\rvx_t,t,\mathbf{y},T)]\mathrm{d}t + g(t)\mathrm{d}\mathbf{w}_t, \quad \rvx_0 \sim q_{\textrm{data}}(\rvx_0), \quad \rvx_T \sim q_{\boldsymbol{\phi},\boldsymbol{\psi}} (\rvx_T|\rvx_0), \label{eq:mu_sde} \\
 \mathrm{d}\rvx_t &= [\mathbf{f}(\rvx_t, t)-g^2(t) [ \nabla_{\rvx_t} \log p_{\boldsymbol{\theta}}(\rvx_t|\rvx_T) - \mathbf{h}(\rvx_t,t,\mathbf{y},T)] ]\mathrm{d}t + g(t)\mathrm{d}\bar{\mathbf{w}}_t, \quad \rvx_T \sim p_{\boldsymbol{\psi}} (\rvx_T). \label{eq:nu_sde}
\end{align}

%\begin{align}
%D_{\textrm{KL}}(q_{\boldsymbol{\phi},\boldsymbol{\psi}}(\rvx_{T},\rvz)||p_{\boldsymbol{\psi},\boldsymbol{\theta}}(\rvx_{T},\rvz)) & = \int q_{\boldsymbol{\phi},\boldsymbol{\psi}}(\rvx_{T},\rvz)\log \frac{q_{\boldsymbol{\phi},\boldsymbol{\psi}}(\rvx_{T},\rvz)}{p_{\boldsymbol{\psi},\boldsymbol{\theta}}(\rvx_{T},\rvz)} \mathrm{d}\rvx_T \mathrm{d}\rvz \\
%& = \int q_{\boldsymbol{\phi},\boldsymbol{\psi}}(\rvx_{T},\rvz)\log \frac{q_{\boldsymbol{\phi},\boldsymbol{\psi}}(\rvx_{T}|\rvz)q_{\boldsymbol{\phi}}(\rvz)}{p_{\boldsymbol{\psi},\boldsymbol{\theta}}(\rvx_{T}|\rvz)p(\rvz)} \mathrm{d}\rvx_T \mathrm{d}\rvz \\
%& = \int q_{\boldsymbol{\phi},\boldsymbol{\psi}}(\rvz)\log \frac{q_{\boldsymbol{\phi}}(\rvz)}{p(\rvz)} \mathrm{d}\rvz \\
%& = D_{\textrm{KL}}(q_{\boldsymbol{\phi}}(\rvz)||p(\rvz)) 
%\end{align}

% Step2. Applying Girsanov theorem & martingale property 

By our graphical modeling, $\rvz$ is independent of $\{  \rvx_t \}$ given $\rvx_T$. Therefore, we have
\begin{align}
& \mathbb{E}_{q_{\boldsymbol{\phi},\boldsymbol{\psi}}(\rvx_T,\rvz)} [ D_{\textrm{KL}}(\mu_{\boldsymbol{\phi},\boldsymbol{\psi}}(\cdot|\rvx_{T},\rvz)||\nu_{\boldsymbol{\theta}}(\cdot|\rvx_{T},\rvz)) ]=\mathbb{E}_{q_{\boldsymbol{\phi},\boldsymbol{\psi}}(\rvx_T)} [ D_{\textrm{KL}}(\mu_{\boldsymbol{\phi},\boldsymbol{\psi}}(\cdot|\rvx_{T})||\nu_{\boldsymbol{\theta}}(\cdot|\rvx_{T})) ], 
\end{align}
where $\mu_{\boldsymbol{\phi}, \boldsymbol{\psi}}(\cdot|\rvx_T)$ and $\nu_{\boldsymbol{\theta}}(\cdot|\rvx_T)$ are the path measures of the SDEs in \cref{eq:mu_sde2,eq:nu_sde2}, respectively:
\begin{align}
 \mathrm{d}\rvx_t &= [\mathbf{f}(\rvx_t, t)-g^2(t) [ \nabla_{\rvx_t} \log q_{\boldsymbol{\phi},\boldsymbol{\psi}}(\rvx_t|\rvx_T) - \mathbf{h}(\rvx_t,t,\mathbf{y},T)] ]\mathrm{d}t + g(t)\mathrm{d}\bar{\mathbf{w}}_t, \quad \rvx(T) = \rvx_T, \label{eq:mu_sde2} \\
 \mathrm{d}\rvx_t &= [\mathbf{f}(\rvx_t, t)-g^2(t) [ \nabla_{\rvx_t} \log p_{\boldsymbol{\theta}}(\rvx_t|\rvx_T) - \mathbf{h}(\rvx_t,t,\mathbf{y},T)] ]\mathrm{d}t + g(t)\mathrm{d}\bar{\mathbf{w}}_t, \quad \rvx(T) = \rvx_T \label{eq:nu_sde2}
\end{align}

Similar to eq. (17) in \citep{song2021denoising}, this KL divergence can be expressed using the Girsanov theorem~\citep{oksendal2013stochastic} and martingale property.
\begin{align}
&D_{\textrm{KL}}(\mu_{\boldsymbol{\phi},\boldsymbol{\psi}}(\cdot|\rvx_{T})||\nu_{\boldsymbol{\theta}}(\cdot|\rvx_{T})) =\frac{1}{2} \int_0^T \mathbb{E}_{q_{\boldsymbol{\phi},\boldsymbol{\psi}}(\rvx_t| \rvx_T)}[g^2(t)|| \mathbf{s}_{\boldsymbol{\theta}}(\rvx_t,t,\rvx_T) - \nabla_{\rvx_t}\log q^t_{\boldsymbol{\phi},\boldsymbol{\psi}}(\rvx_t|\rvx_T)||_2^2]\mathrm{d}t \label{eq:thm1-9}
\end{align}

% Step3. Using Theorem1 and combine with the prior term

From \cref{eq:thm1-7,eq:thm1-8,eq:thm1-9} and \cref{thm:1}, we have:
\begin{align}
&D_{\textrm{KL}}(q_{\text{data}}(\rvx_0)||p_{\boldsymbol{\psi},\boldsymbol{\theta}}(\rvx_0))\leq  D_{\textrm{KL}}(q_{\boldsymbol{\phi},\boldsymbol{\psi}}(\rvx_{T},\rvz)||p_{\boldsymbol{\psi},\boldsymbol{\theta}}(\rvx_{T},\rvz)) + \mathcal{L}_{\textrm{AE}} - \mathcal{H}(q_{\boldsymbol{\phi},\boldsymbol{\psi}}(\rvx_0|\rvx_T)) \label{eq:thm1-10}
\end{align}

Furthermore, the first and third terms of RHS in \cref{eq:thm1-10} can be expressed as follows.
\begin{align}
&D_{\textrm{KL}}(q_{\boldsymbol{\phi},\boldsymbol{\psi}}(\rvx_{T},\rvz)||p_{\boldsymbol{\psi},\boldsymbol{\theta}}(\rvx_{T},\rvz))- \mathcal{H}(q_{\boldsymbol{\phi},\boldsymbol{\psi}}(\rvx_0|\rvx_T)) \\
&= \int q_{\boldsymbol{\phi},\boldsymbol{\psi}}(\rvx_{T},\rvz)) \log \frac{q_{\boldsymbol{\phi},\boldsymbol{\psi}}(\rvx_{T},\rvz)}{p_{\boldsymbol{\psi},\boldsymbol{\theta}}(\rvx_{T},\rvz)}\mathrm{d}\rvx_T\mathrm{d}\rvz +\int q_{\boldsymbol{\phi},\boldsymbol{\psi}}(\rvx_0,\rvx_{T}) \log q_{\boldsymbol{\phi},\boldsymbol{\psi}}(\rvx_0|\rvx_{T}) \mathrm{d}\rvx_0 \mathrm{d}\rvx_T\\
&= \int q_{\boldsymbol{\phi},\boldsymbol{\psi}}(\rvx_0,\rvx_{T},\rvz) \Big [ \log \frac{q_{\boldsymbol{\phi},\boldsymbol{\psi}}(\rvx_{T},\rvz)}{p_{\boldsymbol{\psi},\boldsymbol{\theta}}(\rvx_{T},\rvz)} + \log q_{\boldsymbol{\phi},\boldsymbol{\psi}}(\rvx_0|\rvx_{T}) \Big ] \mathrm{d}\rvx_0 \mathrm{d}\rvx_T \mathrm{d}\rvz\\
&= \int q_{\boldsymbol{\phi},\boldsymbol{\psi}}(\rvx_0,\rvx_{T},\rvz) \Big [ \log \frac{q_{\boldsymbol{\phi},\boldsymbol{\psi}}(\rvx_{T})q_{\boldsymbol{\psi}}(\rvz|\rvx_{T})}{p_{\boldsymbol{\psi}}(\rvx_{T})p_{\boldsymbol{\psi}}(\rvz|\rvx_{T})} + \log q_{\boldsymbol{\phi},\boldsymbol{\psi}}(\rvx_0|\rvx_{T}) \Big ] \mathrm{d}\rvx_0 \mathrm{d}\rvx_T \mathrm{d}\rvz\\
&= \int q_{\boldsymbol{\phi},\boldsymbol{\psi}}(\rvx_0,\rvx_{T},\rvz) \Big [ \log \frac{q_{\boldsymbol{\phi},\boldsymbol{\psi}}(\rvx_{T})}{p_{\boldsymbol{\psi}}(\rvx_{T})} + \log q_{\boldsymbol{\phi},\boldsymbol{\psi}}(\rvx_0|\rvx_{T}) \Big ] \mathrm{d}\rvx_0 \mathrm{d}\rvx_T \mathrm{d}\rvz\\
&= \int q_{\boldsymbol{\phi},\boldsymbol{\psi}}(\rvx_0,\rvx_{T}) \Big [ \log \frac{q_{\boldsymbol{\phi},\boldsymbol{\psi}}(\rvx_{T})q_{\boldsymbol{\phi},\boldsymbol{\psi}}(\rvx_0|\rvx_{T})}{p_{\boldsymbol{\psi}}(\rvx_{T})} \big ] \mathrm{d}\rvx_0 \mathrm{d}\rvx_T \\
&= \int q_{\boldsymbol{\phi},\boldsymbol{\psi}}(\rvx_0,\rvx_{T}) \Big [ \log \frac{q_{\text{data}}(\rvx_{0})q_{\boldsymbol{\phi},\boldsymbol{\psi}}(\rvx_T|\rvx_{0})}{p_{\boldsymbol{\psi}}(\rvx_{T})} \big ] \mathrm{d}\rvx_0 \mathrm{d}\rvx_T \\
&= \int q_{\text{data}}(\rvx_0) q_{\boldsymbol{\phi},\boldsymbol{\psi}}(\rvx_T|\rvx_{0}) \Big [ \log \frac{q_{\boldsymbol{\phi},\boldsymbol{\psi}}(\rvx_T|\rvx_{0})}{p_{\boldsymbol{\psi}}(\rvx_{T})} + \log q_{\text{data}} (\rvx_0) \big ] \mathrm{d}\rvx_0 \mathrm{d}\rvx_T \\
&= \mathbb{E}_{q_{\text{data}}(\rvx_0)} [ D_{\textrm{KL}}(q_{\boldsymbol{\phi},\boldsymbol{\psi}}(\rvx_{T}|\rvx_0)||p_{\boldsymbol{\psi}}(\rvx_{T})) ] - \mathcal{H}(q_{\text{data}}(\rvx_0))\\
&= \mathcal{L}_{\textrm{PR}}- \mathcal{H}(q_{\text{data}}(\rvx_0))
\end{align}

To sum up, we have
\begin{align}
D_{\textrm{KL}}(q_{\text{data}}(\rvx_0)||p_{\boldsymbol{\psi},\boldsymbol{\theta}}(\rvx_0))\leq \mathcal{L}_{\textrm{AE}}+\mathcal{L}_{\textrm{PR}}-\mathcal{H}(q_{\text{data}}(\rvx_0)).
\end{align}

% %증명 끝!
    
\end{proof}

\subsection{Prior Optimization Objective}
\label{sec:A1.3}
This section explains the details of the prior related objective function mentioned in \Cref{sec:3.3.2}. The proposed objective is $\mathcal{L}_{\textrm{PR}}$ as shown in \cref{eq:prior_app}.
\begin{align}
\mathcal{L}_{\textrm{PR}}=&\mathbb{E}_{q_{\text{data}}(\rvx_0)}[D_{\text{KL}}(q_{\boldsymbol{\phi},\boldsymbol{\psi}}(\rvx_T|\rvx_0)||p_{\boldsymbol{\psi}}(\rvx_T))] \label{eq:prior_app}
\end{align}

To optimize this term, we fix the parameters of the encoder ($\boldsymbol{\phi} \rightarrow \boldsymbol{\phi}^* $), the decoder ($\boldsymbol{\psi} \rightarrow \boldsymbol{\psi}^* $), and score network ($\boldsymbol{\theta} \rightarrow \boldsymbol{\theta}^*$), which is optimized by $\mathcal{L}_{AE}$. And we newly parameterize the generative prior $p_{\text{prior}}(\rvz) \rightarrow p_{\boldsymbol{\omega}}(\rvz)$, so the generative endpoint distribution becomes $p_{\boldsymbol{\psi}}(\rvx_T) \rightarrow p_{\boldsymbol{\psi}^*,\boldsymbol{\omega}}(\rvx_T)$. We utilize MLP-based latent diffusion models following~\citep{preechakul2022diffusion,zhang2022unsupervised}.

The objective function in \cref{eq:prior_app} with respect to $\boldsymbol{\omega}$ is described in \cref{prior:init} and extends to \cref{prior:final} with equality. \Cref{prior:ours} is derived from the same optimality condition. In other words, it reduces the problem of training an unconditional generative prior $p_{\omega}(\rvz)$ to matching the aggregated posterior distribution $q_{\boldsymbol{\phi}^*}(\rvz)$.

\begin{align}
& \arg\min_{\boldsymbol{\omega}}{\mathbb{E}_{q_{\text{data}}(\rvx_0)}[D_{\text{KL}}(q_{\boldsymbol{\phi}^*,\boldsymbol{\psi}^*}(\rvx_T|\rvx_0)||p_{\boldsymbol{\psi}^*,\omega}(\rvx_T))]} \label{prior:init} \\
\Leftrightarrow& \arg\min_{\boldsymbol{\omega}}{\int q_{\text{data}}(\rvx_0)q_{\boldsymbol{\phi}^*,\boldsymbol{\psi}^*}(\rvx_T|\rvx_0)\log{\frac{q_{\boldsymbol{\phi}^*,\boldsymbol{\psi}^*}(\rvx_T|\rvx_0)}{p_{\boldsymbol{\psi}^*,\boldsymbol{\omega}}(\rvx_T)}}} \mathrm{d}\rvx_0\mathrm{d}\rvx_T \\
% \Leftrightarrow& \arg\min_{\boldsymbol{\boldsymbol{\omega}}}{\int q_{\text{data}}(\rvx_0)q_{\boldsymbol{\phi}^*,\boldsymbol{\psi}^*}(\rvx_T|\rvx_0)\log{\frac{q_{\boldsymbol{\phi}^*,\boldsymbol{\psi}^*}(\rvx_T)q_{\boldsymbol{\phi}^*,\boldsymbol{\psi}^*}(\rvx_0|\rvx_T)}{p_{\boldsymbol{\psi}^*,\omega}(\rvx_T)q_{\text{data}}(\rvx_0)}}} \mathrm{d}\rvx_0\mathrm{d}\rvx_T \\
% \Leftrightarrow& \arg\min_{\boldsymbol{\boldsymbol{\omega}}}{\int q_{\text{data}}(\rvx_0)q_{\boldsymbol{\phi}^*,\boldsymbol{\psi}^*}(\rvx_T|\rvx_0) \left [  \log{\frac{q_{\boldsymbol{\phi}^*,\boldsymbol{\psi}^*}(\rvx_T)}{p_{\boldsymbol{\psi}^*,\boldsymbol{\omega}}(\rvx_T)}}+\log{\frac{q_{\boldsymbol{\phi}^*,\boldsymbol{\psi}^*}(\rvx_0|\rvx_T)}{q_{\text{data}}(\rvx_0)}}\right ]}  \mathrm{d}\rvx_0\mathrm{d}\rvx_T \\
\Leftrightarrow& \arg\min_{\boldsymbol{\omega}}{D_{\text{KL}}(q_{\boldsymbol{\phi}^*,\boldsymbol{\psi}^*}(\rvx_T)||p_{\boldsymbol{\psi}^*,\boldsymbol{\omega}}(\rvx_T))} + C \label{prior:final}\\ 
\Leftrightarrow& \arg\min_{\boldsymbol{\omega}}{D_{\text{KL}}(q_{\boldsymbol{\phi}^*}(\rvz)||p_{\boldsymbol{\omega}}(\rvz))} \label{prior:ours}
\end{align}

\subsection{Mutual Information Analysis}
\label{sec:A1.4}
\citet{alemi2018fixing} shows the \textit{distortion}; reconstruction error with inferred $\rvz$ is the variational bound of mutual information between $\rvx_0$ and $\rvz$ in the autoencoding framework. We explain the functional form of \textit{distortion} in both the auxiliary encoder framework (\Cref{sec:A1.4.1}) and DBAE (\Cref{sec:A1.4.2}).

\subsubsection{Auxiliary encoder framework}
\label{sec:A1.4.1}
In the auxiliary encoder framework (e.g., DiffAE~\citep{preechakul2022diffusion}), the \textit{distortion} $:= \mathbb{E}_{q_{\text{data}}(\rvx_0),q_{\boldsymbol{\phi}}(\rvz|\rvx_0)}[-\log{p_{\boldsymbol{\theta}}(\rvx_0|\rvz)}]$  and mutual information $MI(\rvx_0,\rvz):=\mathbb{E}_{q_{\boldsymbol{\phi}}(\rvx_0,\rvz)}[\log{\frac{q_{\boldsymbol{\phi}}(\rvx_0,\rvz)}{q_{\text{data}}(\rvx_0)q_{\boldsymbol{\phi}}(\rvz)}}]$ has a relation
\begin{align}
-\mathbb{E}_{q_{\text{data}}(\rvx_0),q_{\boldsymbol{\phi}}(\rvz|\rvx_0)}[-\log{p_{\boldsymbol{\theta}}(\rvx_0|\rvz)}] + \mathcal{H}(q_{\text{data}}(\rvx_0)) \leq MI(\rvx_0,\rvz), \label{eq:distortion_app} 
\end{align}
where $p_{\boldsymbol{\theta}}(\rvx_0|\rvz)=\int{p_{\text{prior}}(\rvx_T)p^{\text{ODE}}_{\boldsymbol{\theta}}(\rvx_0|\rvz,\rvx_T)}d\rvx_T$, when this framework reconstruct only with inferred $\rvz$.

We have the followings
\begin{align}
&\log{p_{\boldsymbol{\theta}}(\rvx_0|\rvz)}  \\
&=\log{\int{p_{\text{prior}}(\rvx_T)p^{\text{ODE}}_{\boldsymbol{\theta}}(\rvx_0|\rvz,\rvx_T)}d\rvx_T} \\
% &=\log{\int{p_{\text{prior}}(\rvx_T)q^{\text{ODE}}_{\boldsymbol{\theta}}(\rvx_T|\rvz,\rvx_0)}d\rvx_T} \\
&=\log{\int{p_{\text{prior}}(\rvx_T)p^{\text{ODE}}_{\boldsymbol{\theta}}(\rvx_0|\rvz,\rvx_T)\frac{q^{\text{ODE}}_{\boldsymbol{\theta}}(\rvx_T|\rvz,\rvx_0)}{q^{\text{ODE}}_{\boldsymbol{\theta}}(\rvx_T|\rvz,\rvx_0)}}d\rvx_T} \\
&\geq \int{q^{\text{ODE}}_{\boldsymbol{\theta}}(\rvx_T|\rvz,\rvx_0)\log{\frac{p_{\text{prior}}(\rvx_T)p^{\text{ODE}}_{\boldsymbol{\theta}}(\rvx_0|\rvz,\rvx_T)}{q^{\text{ODE}}_{\boldsymbol{\theta}}(\rvx_T|\rvz,\rvx_0)}}}d\rvx_T \\ 
&=\mathbb{E}_{q^{\text{ODE}}_{\boldsymbol{\theta}}(\rvx_T|\rvx_0,\rvz)}[\log{p^{\text{ODE}}_{\boldsymbol{\theta}}(\rvx_0|\rvz,\rvx_T)}]-D_{KL}(q^{\text{ODE}}_{\boldsymbol{\theta}}(\rvx_T|\rvx_0,\rvz)||p_{\text{prior}}(\rvx_T)).\\
&= \int{q^{\text{ODE}}_{\boldsymbol{\theta}}(\rvx_T|\rvz,\rvx_0)\log{\frac{p_{\text{prior}}(\rvx_T)\cancel{p^{\text{ODE}}_{\boldsymbol{\theta}}(\rvx_0|\rvz,\rvx_T)}}{\cancel{q^{\text{ODE}}_{\boldsymbol{\theta}}(\rvx_T|\rvz,\rvx_0)}}}}d\rvx_T \\
&= \int{q^{\text{ODE}}_{\boldsymbol{\theta}}(\rvx_T|\rvz,\rvx_0)\log{p_{\text{prior}}(\rvx_T)}}d\rvx_T \\ 
&= -CE(q^{\text{ODE}}_{\boldsymbol{\theta}}(\rvx_T|\rvz,\rvx_0) || p_{\text{prior}}(\rvx_T))\label{eq:disortion_deri_diffae}
\end{align}

Note that $p^{\text{ODE}}_{\boldsymbol{\theta}}(\rvx_0|\rvz,\rvx_T)=q^{\text{ODE}}_{\boldsymbol{\theta}}(\rvx_T|\rvz,\rvx_0)$ because the deterministic coupling of ($\rvx_0,\rvx_T$) is given by the ODE in \cref{eq:zbackodeagain}. When the coupling ($\rvx_0$, $\rvx_T$) lies on the ODE path, both probabilities $p^{\text{ODE}}_{\boldsymbol{\theta}}(\rvx_0|\rvz,\rvx_T)$ and $q^{\text{ODE}}_{\boldsymbol{\theta}}(\rvx_T|\rvz,\rvx_0)$ become infinite. When the coupling ($\rvx_0$, $\rvx_T$) is outside the ODE path, both probabilities $p^{\text{ODE}}_{\boldsymbol{\theta}}(\rvx_0|\rvz,\rvx_T)$ and $q^{\text{ODE}}_{\boldsymbol{\theta}}(\rvx_T|\rvz,\rvx_0)$ become zero.
\begin{align}
& \mathrm{d}\rvx_t = [\mathbf{f}(\rvx_t, t)-\frac{1}{2}g^2(t) \mathbf{s}_{\boldsymbol{\theta}}(\rvx_t,\rvz,t)] \mathrm{d}t.\label{eq:zbackodeagain}
\end{align}
From \cref{eq:distortion_app} and \cref{eq:disortion_deri_diffae}, we have the following.
\begin{align}
\mathbb{E}_{q_{\text{data}}(\rvx_0),q_{\boldsymbol{\phi}}(\rvz|\rvx_0)}[-CE(q^{\text{ODE}}_{\boldsymbol{\theta}}(\rvx_T|\rvz,\rvx_0) || p_{\text{prior}}(\rvx_T))] + \mathcal{H}(q_{\text{data}}(\rvx_0))\leq MI(\rvx_0,\rvz) \label{eq:mi_diffae_final}
\end{align}
The discrepancy between $q^{\text{ODE}}_{\boldsymbol{\theta}}(\rvx_T|\rvx_0,\rvz)$ and $p_{\text{prior}}(\rvx_T)$ makes the lower bound of mutual information between $\rvx_0$ and $\rvz$ loose. This discrepancy is inevitable from the deterministic nature of $q^{\text{ODE}}_{\boldsymbol{\theta}}(\rvx_T|\rvz,\rvx_0)$.

This discrepancy is empirically observed in \Cref{tab:main2}, providing two cases of $\rvx_T$ draw (random $\rvx_T$, inferred $\rvx_T$) in the auxiliary encoder models. The reconstruction gap between (random $\rvx_T$, inferred $\rvx_T$) is significant in practice. However, the inference of $\rvx_T$ is computationally expensive and inflexible in terms of dimensionality. If we only consider $\rvz$ inference, the information leakage is inevitable due to the functional form of diffusion models with an auxiliary encoder.

\subsubsection{Diffusion Bridge AutoEncoders}
\label{sec:A1.4.2}
In the DBAE, the \textit{distortion} $:=\mathbb{E}_{q_{\text{data}}(\rvx_0),q_{\boldsymbol{\phi}}(\rvz|\rvx_0)}[-\log{p_{\boldsymbol{\theta, \psi}}(\rvx_0|\rvz)}]$ term and mutual information between $\rvx_0$ and $\rvz$ has relation in \cref{eq:distortion_dbae_app}.
\begin{align}
-\mathbb{E}_{q_{\text{data}}(\rvx_0),q_{\boldsymbol{\phi}}(\rvz|\rvx_0)}[-\log{p_{\boldsymbol{\theta, \psi}}(\rvx_0|\rvz)}]+ \mathcal{H}(q_{\text{data}}(\rvx_0)) \leq MI(\rvx_0,\rvz),\label{eq:distortion_dbae_app} 
\end{align}
where $p_{\boldsymbol{\theta, \psi}}(\rvx_0|\rvz)=\int{p_{\boldsymbol{\theta}}(\rvx_0|\rvx_T)p_{\boldsymbol{\psi}}(\rvx_T|\rvz)}d\rvx_T$. We have followings
\begin{align}
&\log{p_{\boldsymbol{\theta, \psi}}(\rvx_0|\rvz)}  \\
&=\log{\int{p_{\boldsymbol{\theta}}(\rvx_0|\rvx_T)p_{\boldsymbol{\psi}}(\rvx_T|\rvz)}d\rvx_T} \\
&=\log{\int{p_{\boldsymbol{\theta}}(\rvx_0|\rvx_T)p_{\boldsymbol{\psi}}(\rvx_T|\rvz)\frac{q_{\boldsymbol{\psi}}(\rvx_T|\rvz)}{q_{\boldsymbol{\psi}}(\rvx_T|\rvz)}}d\rvx_T} \\
&\geq \int{q_{\boldsymbol{\psi}}(\rvx_T|\rvz)\log{\frac{p_{\boldsymbol{\theta}}(\rvx_0|\rvx_T)p_{\boldsymbol{\psi}}(\rvx_T|\rvz)}{q_{\boldsymbol{\psi}}(\rvx_T|\rvz)}}}d\rvx_T \\
&=\mathbb{E}_{q_{\boldsymbol{\psi}}(\rvx_T|\rvz)}[\log{p_{\boldsymbol{\theta}}(\rvx_0|\rvx_T)}]-D_{KL}(q_{\boldsymbol{\psi}}(\rvx_T|\rvz)||p_{\boldsymbol{\psi}}(\rvx_T|\rvz))\label{eq:disortion_deri_dbae}
\end{align}
Since $D_{KL}(q_{\boldsymbol{\psi}}(\rvx_T|\rvz)||p_{\boldsymbol{\psi}}(\rvx_T|\rvz))=0$, we have followings from \cref{eq:distortion_dbae_app} and \cref{eq:disortion_deri_dbae}.
\begin{align}
\mathbb{E}_{q_{\text{data}}(\rvx_0),q_{\boldsymbol{\phi}}(\rvz|\rvx_0)}[\mathbb{E}_{q_{\boldsymbol{\psi}}(\rvx_T|\rvz)}[\log{p_{\boldsymbol{\theta}}(\rvx_0|\rvx_T)}]]+\mathcal{H}(q_{\text{data}}(\rvx_0)) \leq MI(\rvx_0,\rvz). \label{eq:mi_dbae_final}
\end{align}
Unlike in \cref{eq:mi_diffae_final}, the $\rvx_T$ related term does not hinder maximizing mutual information between $\rvx_0$ and $\rvz$. Moreover, the remaining term $\mathbb{E}_{q_{\text{data}}(\rvx_0),q_{\boldsymbol{\phi}}(\rvz|\rvx_0)}[\mathbb{E}_{q_{\boldsymbol{\psi}}(\rvx_T|\rvz)}[\log{p_{\boldsymbol{\theta}}(\rvx_0|\rvx_T)}]]$ can maximized by our training, as we explain in \Cref{thm:3}.

\subsection{Proof of Theorem \ref{thm:3}}
\label{sec:A1.5}
\thmc*
\begin{proof}
From data processing inequality similar in \cref{eq:thm1-7},
\begin{align}
\mathbb{E}_{q_{\boldsymbol{\phi},\boldsymbol{\psi}}(\rvx_T)} [ D_{\textrm{KL}}(q_{\boldsymbol{\phi},\boldsymbol{\psi}}(\rvx_0|\rvx_{T})||p_{\boldsymbol{\theta}}(\rvx_0|\rvx_{T})) ] \leq \mathbb{E}_{q_{\boldsymbol{\phi},\boldsymbol{\psi}}(\rvx_T)} [ D_{\textrm{KL}}(\mu_{\boldsymbol{\phi},\boldsymbol{\psi}}(\cdot|\rvx_{T})||\nu_{\boldsymbol{\theta}}(\cdot|\rvx_{T})) ] \label{eq:thm10-1}
\end{align}

The LHS of \cref{eq:thm10-1} becomes followings,
\begin{align}
&\mathbb{E}_{q_{\boldsymbol{\phi},\boldsymbol{\psi}}(\rvx_T)}[ D_{\textrm{KL}}(q_{\boldsymbol{\phi},\boldsymbol{\psi}}(\rvx_0|\rvx_{T})||p_{\boldsymbol{\theta}}(\rvx_0|\rvx_{T}))] = \mathbb{E}_{q_{\boldsymbol{\phi},\boldsymbol{\psi}}(\rvx_0,\rvx_T)}[-\log{p_{\theta}(\rvx_0|\rvx_T)}]-\mathcal{H}(q_{\boldsymbol{\phi},\boldsymbol{\psi}}(\rvx_0|\rvx_{T})) \label{eq:thm10-2}
\end{align}
The RHS of \cref{eq:thm10-1} becomes followings from the result of \cref{eq:thm1-9},
\begin{align}
\mathbb{E}_{q_{\boldsymbol{\phi},\boldsymbol{\psi}}(\rvx_T)} [ D_{\textrm{KL}}(\mu_{\boldsymbol{\phi},\boldsymbol{\psi}}(\cdot|\rvx_{T})||\nu_{\boldsymbol{\theta}}(\cdot|\rvx_{T})) ] = \mathcal{L}_{\textrm{SM}} = \mathcal{L}_{\textrm{AE}} - \mathcal{H}(q_{\boldsymbol{\phi},\boldsymbol{\psi}}(\rvx_0|\rvx_{T})) \label{eq:thm10-3}
\end{align}
From \cref{eq:thm10-1,eq:thm10-2,eq:thm10-3}, we have the followings
\begin{align}
\mathbb{E}_{q_{\boldsymbol{\phi},\boldsymbol{\psi}}(\rvx_0,\rvx_T)}[-\log{p_{\theta}(\rvx_0|\rvx_T)}] \leq \mathcal{L}_{\textrm{AE}} \label{eq:thm10-4}
\end{align}
We have the following to sum up \cref{eq:thm10-4} and \cref{eq:mi_dbae_final}.
\begin{align}
      - MI(\rvx_0,\rvz) \leq \mathcal{L}_{\textrm{AE}}-\mathcal{H}(q_{\text{data}}(\rvx_0))
\end{align}

% \subsection{Doob's h-transform}
\label{sec:A1.6}

\end{proof}

\section{Related Work}
\subsection{Representation Learning in Diffusion Models}

Expanding the applicability of generative models to various downstream tasks depends on exploring meaningful latent variables through representation learning. Methods within both variational autoencoders (VAEs)~\citep{KingmaW13,rezende2014stochastic,higgins2017betavae,zhao2019infovae,kim2018disentangling} and generative adversarial networks (GANs)~\citep{jeon2021ib,karras2020analyzing,abdal2019image2stylegan,abdal2020image2stylegan++,chen2016infogan} have been proposed; however, VAEs suffer from low sample quality, limiting their practical deployment in real-world scenarios. Conversely, GANs are known for their ability to produce high-quality samples with fast sampling speeds but face challenges in accessing latent variables due to their intractable model structure. This leads to computationally expensive inference methods like GAN inversion~\citep{xia2022gan,voynov2020unsupervised,zhu2016generative,karras2020analyzing,abdal2019image2stylegan}. Additionally, the adversarial training objective of GANs introduces instability during the training.

In contrast, recent research has delved into representation learning within diffusion probabilistic models (DPMs), which offer stable training and high sample quality. In early studies, the diffusion endpoint $\rvx_T$ was introduced as a latent variable~\citep{song2021denoising,song2021scorebased} with an invertible path defined by an ordinary differential equation (ODE). However, $\rvx_T$ is difficult to consider as a semantically meaningful encoding. Additionally, the dimension of $\rvx_T$ matches that of the original data $\rvx_0$, limiting the ability to learn condensed feature representation for downstream tasks (e.g., downstream inference, attribute manipulation with linear classifier). The inference of latent variables also relies on solving ODE, rendering inference intractable. This intractability not only hinders the desired regularization (e.g. disentanglment~\citep{higgins2017betavae,kim2018disentangling,chen2018isolating}) of the latent variable but also slows down the downstream applications. 

% Framework
Diffusion AutoEncoder (DiffAE)~\citep{preechakul2022diffusion} introduces a new framework for learning tractable latent variables in DPMs. DiffAE learns representation in the latent variable $\rvz$ through an auxiliary encoder, with a $\rvz$-conditional score network~\citep{ronneberger2015u}. The encoder-generated latent variable $\rvz$ can learn a semantic representation with a flexible dimensionality. Pre-trained DPM AutoEncoding (PDAE)~\citep{zhang2022unsupervised} proposes a method to learn unsupervised representation from pre-trained unconditional DPMs. PDAE also employs an auxiliary encoder to define $\rvz$ and introduces a decoder to represent $\nabla_{\rvx_t}\log p(\rvz|\rvx_t)$. PDAE can parameterize the $\rvz$-conditional model score combined with a pre-trained unconditional score network, utilizing the idea of classifier guidance~\citep{dhariwal2021diffusion}. PDAE can use the pre-trained checkpoint from publicly available sources, but its complex decoder architecture slows down the sampling speed. 

% Application
Subsequent studies have imposed additional assumptions or constraints on the encoder based on specific objectives. DiTi~\citep{yue2024exploring} introduces a time-dependent latent variable on the top of PDAE to enable feature learning that depends on diffusion time. InfoDiffusion~\citep{wang2023infodiffusion} regularizes the latent space of DiffAE to foster an informative and disentangled representation of $\rvz$. It should be noted that such proposed regularization in~\citep{wang2023infodiffusion} is also applicable with DBAE, and \Cref{sec:4.3} demonstrates that the tradeoff between disentanglement and sample quality is better managed in DBAE than in DiffAE. FDAE~\citep{Wu_Zheng_2024} learns disentangled latent representation by masking image pixel content with DiffAE. DisDiff~\citep{yang2023disdiff} learns disentangled latent variable $\rvz$ by minimizing mutual information between each latent variable from different dimensions atop PDAE. LCG-DM~\citep{kim2022unsupervised} adopts a pre-trained disentangled encoder and trains DiffAE structure with fixed encoder parameters to enable unsupervised controllable generation. SODA~\citep{hudson2023soda} improves the network architectures of DiffAE and training for novel image reconstruction.

All the frameworks~\citep{preechakul2022diffusion,zhang2022unsupervised} and applications~\citep{yue2024exploring,wang2023infodiffusion,Wu_Zheng_2024,yang2023disdiff,hudson2023soda} utilize the encoder and do not consider the diffusion endpoint $\rvx_T$, leading to an \textit{information split problem}. In contrast, DBAE constructs an $\rvz$-dependent endpoint $\rvx_T$ inference with feed-forward architecture to induce $\rvz$ as an information bottleneck. Our framework makes $\rvz$ more informative, which is orthogonal to advancements in downstream applications~\citep{kim2022unsupervised,yue2024exploring,wang2023infodiffusion,Wu_Zheng_2024,yang2023disdiff,hudson2023soda}, as exemplified in \Cref{sec:4.3}.

\subsection{Parametrized Forward Diffusion}

The forward diffusion process with learnable parameters is a key technique in DBAE to resolve \textit{information split problem}. We summarize several other methods that proposed a learnable forward process. Note that DBAE has clear technical differences from those methods.

Sch$\ddot{\textrm{o}}$dinger bridge problem (SBP)~\citep{de2021diffusion,chen2022likelihood} learns the pair of SDEs that have forward and reverse dynamics relationships. SBP identifies the joint distribution in the form of a diffusion path between two given marginal distributions. The optimization is reduced to entropy-regularized optimal transport~\citep{schrodinger1932theorie,genevay2018learning}, which is often solved by Iterative Proportional Fitting~\citep{ruschendorf1995convergence}. For this optimization, samples are required at any given time $t$ from the forward SDE; however, these samples are not from a Gaussian kernel like \cref{eq:for} or \cref{eq:doobfor},  resulting in longer training times needed to solve the SDE numerically with intermediate particles. The formulation is also not suitable for our case, as we learn the given joint distribution through an encoder-decoder framework. 

Diffusion normalizing flow (DiffFlow)~\citep{zhang2021diffusion} parameterizes the drift term in \cref{eq:for} using a normalizing flow, making the endpoint of DiffFlow learnable. However, both training and endpoint inference are intractable because the parametrized forward SDE does not provide a Gaussian kernel similar to that in SBP. Implicit nonlinear diffusion model (INDM)~\citep{kim2022maximum} learns a diffusion model that is defined in the latent space of a normalizing flow, implicitly parameterizing both the drift and volatility terms in \cref{eq:for}. A unique benefit is its tractable training, allowing direct sampling from any diffusion time $t$. However, INDM merely progresses the existing diffusion process in the flow latent space, making it unsuitable for encoding due to technical issues such as dimensionality. The inference also requires solving the ODE for encoding.

Unlike other studies, we parameterize the endpoint $\rvx_T$ rather than the drift or volatility terms. The forward process is naturally influenced by the endpoint determined from Doob’s $h$-transform. Unlike other parameterized diffusions, our approach ensures tractable learning and $\rvx_T$ inference, making it particularly advantageous for encoding tasks.

\section{Implementation details}
\label{sec:A3}
\subsection{Training Configuration}
\label{sec:A3.1}
\textbf{Model Architecture} We use the score network ($\boldsymbol{\theta}$) backbone U-Net~\citep{ronneberger2015u}, which are modified for diffusion models~\citep{dhariwal2021diffusion} with time-embedding. DiffAE~\citep{preechakul2022diffusion}, PDAE~\citep{zhang2022unsupervised}, and DiTi~\citep{yue2024exploring} also utilize the same score network architecture. The only difference for DBAE is the endpoint $\rvx_T$ conditioning. We follow DDBM~\citep{zhou2024denoising} which concatenate $\rvx_t$ and $\rvx_T$ for the inputs as described in \Cref{fig:architecture_2}. This modification only increases the input channels, so the complexity increase is marginal. While the endpoint $\rvx_T$ contains all the information from $\rvz$, we design a score network also conditioning on $\rvz$ for implementation to effectively utilize the latent information in the generative process. For the encoder ($\boldsymbol{\phi}$), we utilize the same structure from DiffAE~\citep{preechakul2022diffusion}. For the decoder ($\boldsymbol{\psi}$), we adopt the upsampling structure from the generator of FastGAN~\citep{liu2021towards}, while removing the intermediate stochastic element. For the generative prior ($\boldsymbol{\omega}$), we utilize latent ddim from \citep{preechakul2022diffusion}. \Cref{tab:setting,tab:setting_latentddim} explains the network configurations for the aforementioned structures. 

\textbf{Optimization} We follow the optimization argument from DDBM~\citep{zhou2024denoising} with Variance Preserving (VP) SDE. We utilize the preconditioning and time-weighting proposed in DDBM, with the pred-x parameterization~\citep{karras2022elucidating}. \Cref{tab:setting} shows the remaining optimization hyperparameters. While DDBM does not include the encoder ($\boldsymbol{\phi}$) and the decoder ($\boldsymbol{\psi}$), we optimize jointly the parameters $\boldsymbol{\phi}$, $\boldsymbol{\psi}$, and $\boldsymbol{\theta}$ to minimize $\mathcal{L}_{\text{AE}}$.

\begin{figure}[]
     \centering
     \begin{subfigure}[b]{\textwidth}
         \centering
         \includegraphics[width=\textwidth]{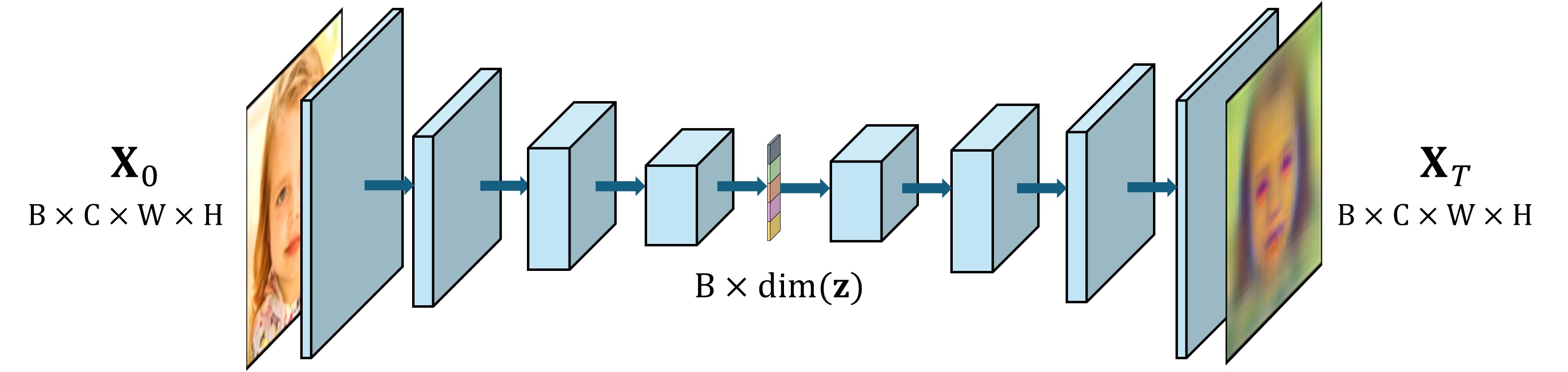}
         \caption{The encoder ($\boldsymbol{\phi}$) structure in the left, and decoder ($\boldsymbol{\psi}$) structure in the right.}
         \label{fig:architecture_1}
     \end{subfigure}
     \vspace{5mm}
     \begin{subfigure}[b]{\textwidth}
         \centering
         \includegraphics[width=\textwidth]{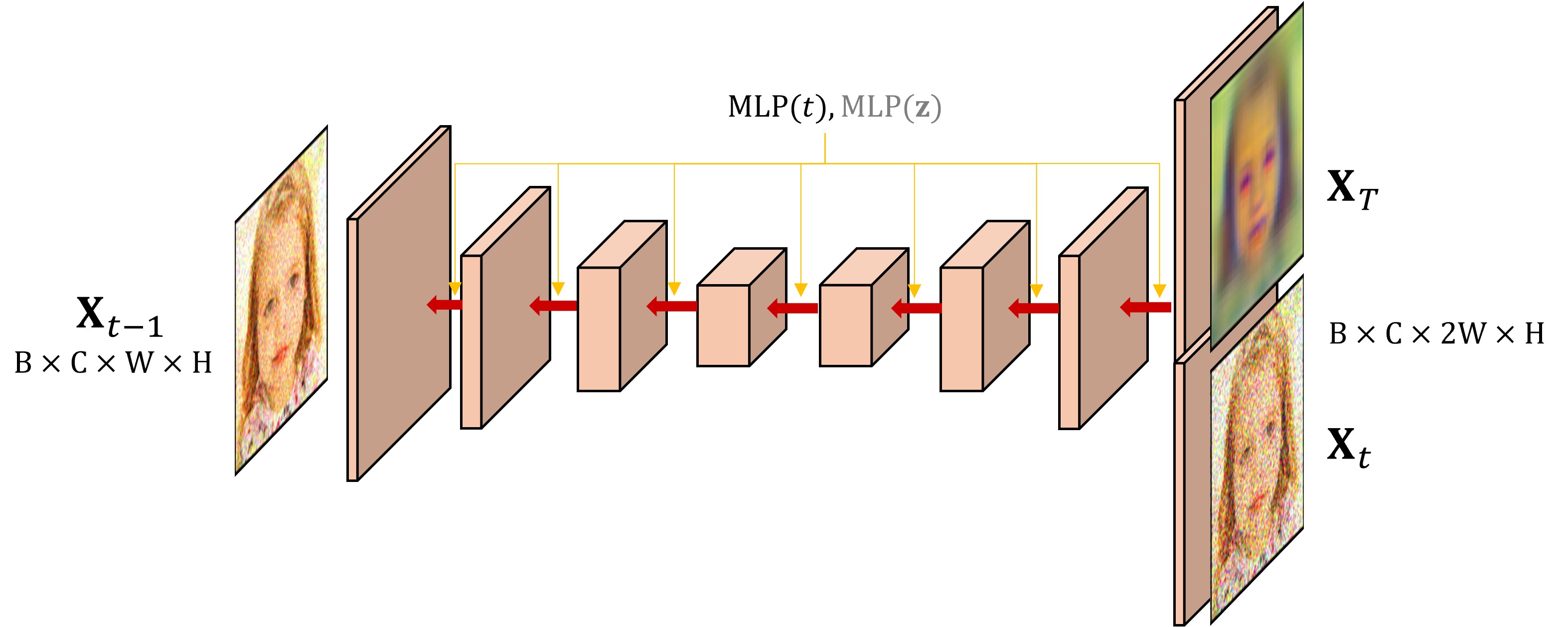}
         \caption{The score network ($\boldsymbol{\theta}$) structure. While the model output is not directly one-step denoised sample $\rvx_t$, the output is equivalent to $\rvx_{t-1}$ with time-dependent constant operation with accessible information. }
         \label{fig:architecture_2}
     \end{subfigure}
    \caption{The architecture overview of Diffusion Bridge AutoEncoder.}
    \label{fig:architecture}
\end{figure}

\begin{table}[]
\centering
\caption{Network architecture and training configuration of DBAE.}
\resizebox{\textwidth}{!}{
\begin{tabular}{l|ccccc}
\toprule
$\textbf{Parameter}$ & $\textbf{CelebA 64}$  & $\textbf{FFHQ 128}$ & $\textbf{Horse 128}$ & $\textbf{Bedroom 128}$  \\ \midrule
Base channels                       & 64                                                            & 128                                & 128                                 & 128                                                        \\
Channel multipliers                 & {[}1,2,4,8{]}                                       & {[}1,1,2,3,4{]}                    & {[}1,1,2,3,4{]}                     & {[}1,1,2,3,4{]}                                      \\
Attention resolution                & {[}16{]}                                              & {[}16{]}                           & {[}16{]}                            & {[}16{]}                                                 \\ \midrule
Encoder base ch                     & 64                                                               & 128                                & 128                                 & 128                                                         \\ 
Enc. attn. resolution               & {[}16{]}                                                & {[}16{]}                           & {[}16{]}                            & {[}16{]}                                                      \\
Encoder ch. mult.                   & {[}1,2,4,8,8{]}                                 & {[}1,1,2,3,4,4{]}                  & {[}1,1,2,3,4,4{]}                   & {[}1,1,2,3,4,4{]}                                    \\
latent variable $\rvz$ dimension      & 32, 256, 512                                                          & 512                                & 512                                 & 512                                                   \\ \midrule
Vanilla forward SDE                   & VP                                                & VP                             & VP                              & VP                                                    \\
Images trained                      & 72M, 130M                                                              & 130M                               & 130M                                & 130M                                                             \\
Batch size                          & 128                                                        & 128                                & 128                                 & 128                                                            \\
Learning rate                       & 1e-4                                                        & 1e-4                                & 1e-4                                 & 1e-4   \\ 
Optimizer                         & RAdam                                                        & RAdam                                & RAdam                                 & RAdam  \\
Weight decay                      & 0.0                                                        & 0.0                                 & 0.0                                  & 0.0   \\ 
EMA rate                           & 0.9999                                                        & 0.9999                                 & 0.9999                                  & 0.9999 \\
\bottomrule
\end{tabular}}
\label{tab:setting}
\end{table}

\begin{table}[]
\centering
\caption{Network architecture and training configuration of latent diffusion models $p_{\boldsymbol{\omega}}(\rvz)$ for an unconditional generation, following \citep{preechakul2022diffusion}.}
\begin{tabular}{l|cc}
\toprule
$\textbf{Parameter}$ & $\textbf{CelebA 64}$ & $\textbf{FFHQ 128}$  \\ \midrule
Batch size                          & 512                              & 256                                                   \\
$\rvz$ trained   & 600M                             & 600M                                                \\
MLP layers (\textit{N})             & 10, 15                               & 10                                                   \\
MLP hidden size                     & \multicolumn{2}{c}{2048}                                                                                                                \\
latent variable $\rvz$ dimension    & \multicolumn{2}{c}{512}                                                                                                                 \\
 SDE                                & \multicolumn{2}{c}{VP}  \\
$\beta$ scheduler                   & \multicolumn{2}{c}{Constant 0.008}                                                                                                      \\
Learning rate                       & \multicolumn{2}{c}{1e-4}                                                                                                                \\
Optimizer                           & \multicolumn{2}{c}{AdamW (weight decay = 0.01)}                                \\
Train Diff \textit{T}               & \multicolumn{2}{c}{1000}                                                                                                                \\
Diffusion loss                      & \multicolumn{2}{c}{L1, L2}                                                                               \\               \bottomrule
\end{tabular}

\label{tab:setting_latentddim}
\end{table}

\subsection{Evaluation Configuration and Metric}
\label{subsec:C.2.2}

%% Table 1
\textbf{Downstream Inference} In \Cref{tab:main1}, we use Average Precision (AP), Pearson Correlation Coefficient (Pearson's r), and Mean Squared Error (MSE) as metrics for comparison. For AP measurement, we train a linear classifier $(\mathbb{R}^{l}\rightarrow [0, 1]^{40})$ to classify 40 binary attribute labels from the CelebA~\citep{liu2015deep} training dataset. The output of the encoder, Enc$_{\boldsymbol{\phi}}(\rvx_0)=\rvz$, serves as the input for a linear classifier. We examine the CelebA test dataset. Precision and recall for each attribute label are calculated by computing true positives (TP), false positives (FP), and false negatives (FN) for each threshold interval divided by predicted values. The area under the precision-recall curve is obtained as AP. For Pearson's r and MSE, we train a linear regressor $(\mathbb{R}^{l}\rightarrow\mathbb{R}^{73})$ using LFW~\citep{huang2007unsupervised,kumar2009attribute} dataset. The regressor predicts the value of 73 attributes based on the latent variable $\rvz$. Pearson's r is evaluated by calculating the variance and covariance between the ground truth and predicted values for each attribute, while MSE is assessed by measuring the differences between two values. We borrow the baseline results from the DiTi~\citep{yue2024exploring} paper and adhere to the evaluation protocol found at \url{https://github.com/yue-zhongqi/diti}.

%% Table2
\textbf{Reconstruction} We quantify reconstruction error in \Cref{tab:main2} though the Structural Similarity Index Measure (SSIM)~\citep{wang2003multiscale}, Learned Perceptual Image Patch Similarity (LPIPS)~\citep{zhang2018unreasonable} and Mean Squared Error (MSE). This metric measures the distance between original images in CelebA-HQ and their reconstructions across all 30K samples and averages them. SSIM compares the luminance, contrast, and structure between images to measure the differences on a scale from 0 to 1, like human visual perception. LPIPS measures the distance in the feature space of a neural network that learns the similarity between two images. We borrow the baseline results from DiffAE~\citep{preechakul2022diffusion} and PDAE~\citep{zhang2022unsupervised}. We use Heun's ODE sampler (99 NFE) to evaluate SSIM and MSE and use a stochastic sampler~\citep{zhou2024denoising} (998 NFE) to evaluate LPIPS for \Cref{tab:main2}. We also present performance metrics for various NFE values in \Cref{tab:recon-app1,tab:recon-app2}.

%% Table3
\textbf{Disentanglment} The metric Total AUROC Difference (TAD)~\citep{yeats2022nashae} measures how effectively the latent space is disentangled, utilizing a dataset with multiple binary ground truth labels. It calculates the correlation between attributes based on the proportion of entropy reduction given any other single attribute. Attributes that show an entropy reduction greater than 0.2 when conditioned on another attribute are considered highly correlated and therefore entangled. For each remaining attribute that is not considered entangled, we calculate the AUROC score for each dimension of the latent variable $\rvz$. To calculate the AUROC score, first determine the dimension-wise minimum and maximum values of $\rvz$. We increment the threshold from the minimum to the maximum for each dimension, converting $\rvz$ to a one-hot vector by comparing each dimension's value against the threshold. This one-hot vector is then compared to the true labels to compute the AUROC score. An attribute is considered disentangled if at least one dimension of $\rvz$ can detect it with an AUROC score of 0.75 or higher. The sub-metric ATTRS denotes the number of such captured attributes. The TAD score is calculated as the sum of the differences between the two highest AUROC scores for each captured attribute. We randomly selected 1000 samples from the CelebA training, validation, and test sets to perform the measurement following~\citep{yeats2022nashae}. We borrow the baseline results expect DisDiff from the InfoDiffusion~\citep{wang2023infodiffusion}, and we follow their setting that the dim$(\rvz)=32$. DisDiff~\citep{yang2023disdiff} utilizes the dim$(\rvz)=192$ and we borrow its performance from the original paper. We use evaluation code from \url{https://github.com/ericyeats/nashae-beamsynthesis}.

%% Table4
\textbf{Unconditional Generation} To measure unconditional generative modeling, we quantify Precision, Recall~\citep{kynkaanniemi2019improved}, Inception Score (IS)~\citep{salimans2016improved} and the Fr$\acute{\text{e}}$chet Inception Distance (FID)~\citep{heusel2017gans}. Precision and Recall are measured by 10k real images and 10k generated images following~\citep{dhariwal2021diffusion}.  Precision is the ratio of generated images belonging to real images' manifold. Recall is the ratio of real images belonging to the generated images' manifold. The manifold is constructed in a pre-trained feature space using the nearest neighborhoods. Precision quantifies sample fidelity, and Recall quantifies sample diversity. Both IS and FID are influenced by fidelity and diversity. IS is calculated using an Inception Network~\citep{szegedy2016rethinking} pre-trained on ImageNet~\citep{russakovsky2015imagenet}, and it computes the logits for generated samples. If an instance is predicted with high confidence for a specific class, and predictions are made for multiple classes across all samples, then the IS will be high. On the other hand, for samples generated from FFHQ or CelebA, predictions cannot be made for multiple classes, which does not allow for diversity to be reflected. Therefore, a good Inception Score (IS) can only result from high-confidence predictions based solely on sample fidelity. We measure IS for 10k generated samples. FID approximates the generated and real samples as Gaussians in the feature space of an Inception Network and measures the Wasserstein distance between them. Since it measures the distance between distributions, it emphasizes the importance of sample diversity and sample fidelity. For \Cref{tab:main4} we measure FID between 50k random samples from the FFHQ dataset and 50k randomly generated samples. For `AE', we measure the FID between 50k random samples from the FFHQ dataset and generate samples that reconstruct the other 50k random samples from FFHQ. In \Cref{tab:main3}, we measure the FID between 10k random samples from the CelebA and 10k randomly generated samples. We utilize \url{https://github.com/openai/guided-diffusion} to measure Precision, Recall and IS. We utilize \url{https://github.com/GaParmar/clean-fid} to measure FID. In \Cref{tab:main4}, we loaded checkpoints for all baselines (except the generative prior of PDAE, we train it to fill performance) and conducted evaluations in the same NFEs. \Cref{tab:Uncond_500} shows the performance under various NFEs. For CelebA training, we use a dim$(\rvz)=256$ following~\citep{wang2023infodiffusion}, while FFHQ training employs a dim$(\rvz)=512$ following ~\citep{preechakul2022diffusion,zhang2022unsupervised}.

\subsection{Algorithm}
\label{subsec:C.3.3}
This section presents the training and utilization algorithms of DBAE. Algorithm \ref{alg:1} outlines the procedure for minimizing the autoencoding objective, $\mathcal{L}_{\text{AE}}$. Algorithm \ref{alg:2} explains the method for reconstruction using the trained DBAE. Algorithm \ref{alg:3} describes the steps for training the generative prior, $p_{\boldsymbol{\omega}}$. Algorithm \ref{alg:4} explains the procedure for unconditional generation using the trained DBAE and generative prior.

\begin{algorithm}[]
    \caption{Latent DPM Training Algorithm}     
    \label{alg:3}
    \small
    \SetCustomAlgoRuledWidth{\linewidth}
    \DontPrintSemicolon
    \KwInput{Enc$_{\boldsymbol{\phi}}$, data distribution $q_{\text{data}}(\rvx_0)$, drift term $\mathbf{f}$, volatility term $g$}
    \KwOutput{Latent DPM score network $\mathbf{s}_{\boldsymbol{\omega}}$} 
    \While{not converges}{
        Sample time $t$ from $[0,T]$ \\
        $\rvx_0 \sim q_{\text{data}}(\rvx_0)$ \\
        $\rvz =$ Enc$_{\boldsymbol{\phi}}(\rvx_0)$ \\
        $\rvz_t \sim \Tilde{q}_t(\rvz_t\vert\rvz_0)$ \\
        $\mathcal{L}\leftarrow{g^2(t)}{|| \mathbf{s}_{\boldsymbol{\omega}}(\rvz_t,t) - \nabla_{\rvz_t}\log {p}_{t}(\rvz_t|\rvz)||_2^2}$ \\
        Update ${\boldsymbol{\omega}}$ by $\mathcal{L}$ using the gradient descent method
    }
    \textbf{end}
\end{algorithm}
% \DecMargin{0.1em}
% \IncMargin{0.1em}
\begin{algorithm}[]
    \caption{Unconditional Generation Algorithm}     
    \label{alg:4}
    \small
    \SetCustomAlgoRuledWidth{\linewidth}
    \DontPrintSemicolon
    \KwInput{Dec$_{\boldsymbol{\psi}}$, latent score network $\mathbf{s}_{\boldsymbol{\omega}}$, score network $\mathbf{s}_{\boldsymbol{\theta}}$, latent discretized time steps $\{t_j^{*}\}_{j=0}^{N_\rvz}$, discretized time steps $\{t_i\}_{i=0}^N$}
    $\rvz_T \sim \mathcal{N}(\mathbf{0}, \mathbf{I})$\\
    \For{$j=N_{\rvz},...,1$}{
        Update $\rvz_{t_j}$ using \cref{eq:backode} \\
    }
    $\rvx_T$ = Dec${_{\boldsymbol{\psi}}}(\rvz_0)$ \\
    \For{$i=N,...,1$}{
        Update $\rvx_{t_i}$ using \cref{eq:backodeoursmodel} \\
    }
    \KwOutput{Unconditioned sample $\rvx_0$}
\end{algorithm}
%\DecMargin{0.1em}

\subsection{Computational Cost}
\label{subsec:C.3.4}

\begin{table}[]
\centering
\caption{Computational cost comparison for FFHQ128. Training time is measured in milliseconds per image per NVIDIA A100 (ms/img/A100), and testing time is reported in milliseconds per one sampling step per NVIDIA A100 (ms/one sampling step/A100).}
\label{tab:comput}
\begin{tabular}{l|ccc}
\toprule
 & $\textbf{Parameter Size}$ & $\textbf{Training}$ & $\textbf{Testing}$  \\ \midrule
DDIM~\citep{song2021denoising}           & 99M                  & 9.687              &    0.997        \\
DiffAE~\citep{preechakul2022diffusion}   & 129M                 & 12.088             &    1.059        \\
PDAE~\citep{zhang2022unsupervised}       & 280M                 & 12.163             &    1.375        \\
DBAE                                    & 161M                 & 13.190             &    1.024        \\       
              \bottomrule
\end{tabular}
\end{table}

\begin{table}
\centering
{
\caption{Computing costs for $\rvx_T$ inference.}
\label{tab:comput2}
\begin{tabular}{l|ccc|c}
\toprule
 &  &NFE ($\downarrow$)& & \multirow{2}{*}[-0.5\dimexpr \aboverulesep + \belowrulesep + \cmidrulewidth]{\scriptsize \shortstack[c]{ Total\\time ($\downarrow$)\\  (ms)}} \\ 
 \cmidrule{2-4}
 Method&  Enc$_{\boldsymbol{\phi}}$& Dec$_{\boldsymbol{\psi}}$ &$\mathbf{s}_{\boldsymbol{\theta}}$   \\ 
\midrule
PDAE &1&500&500&688    \\
DiffAE &1&-&250&265 \\
\rowcolor{gray!25}DBAE &1&1&0&0.31 \\
\bottomrule
\end{tabular}
}

\end{table}

% \begin{wraptable}{r}{6.2cm}
% \centering
% {\footnotesize
% \vspace{-5mm}
% \caption{Computing costs for $\rvx_T$ inference.}
% \label{tab:comput2}
% \begin{tabular}{l|ccc|c}
% \toprule
%  &  &NFE ($\downarrow$)& & \multirow{2}{*}[-0.5\dimexpr \aboverulesep + \belowrulesep + \cmidrulewidth]{\scriptsize \shortstack[c]{ Total\\time ($\downarrow$)\\  (ms)}} \\ 
%  \cmidrule{2-4}
%  Method&  Enc$_{\boldsymbol{\phi}}$& Dec$_{\boldsymbol{\psi}}$ &$\mathbf{s}_{\boldsymbol{\theta}}$   \\ 
% \midrule
% PDAE &1&500&500&688    \\
% DiffAE &1&-&250&265 \\
% \rowcolor{gray!25}DBAE &1&1&0&0.31 \\
% \bottomrule
% \end{tabular}
% }
% \vspace{-3mm}
% \end{wraptable}

This section presents a computational cost comparison among diffusion-based representation learning baselines. \Cref{tab:comput} compares DDIM~\citep{song2021denoising}, DiffAE~\citep{preechakul2022diffusion}, PDAE~\citep{zhang2022unsupervised}, and DBAE in terms of parameter size, training time, and testing time. DDIM requires only a score network (99M), resulting in minimal parameter size. DiffAE involves a $\rvz$-conditional score network (105M) and an encoder (24M), leading to an increase in parameter size. PDAE incorporates both a heavy decoder and an encoder, further increasing the parameter size. Conversely, although DBAE also includes a decoder, it is less complex (32M), resulting in a smaller relative increase in parameter size compared to PDAE. From a training time perspective, DiffAE, PDAE, and DBAE all require longer durations compared to DDIM due to their increased model sizes. DBAE's training time is 9\% longer than that of DiffAE because of the decoder module. However, the decoder does not repeatedly affect the sampling time, making it similar to DiffAE's. In contrast, PDAE, which utilizes a decoder at every sampling step, has a longer sampling time.     

\section{Additional Experiments}
\label{sec:A4}
\subsection{Downstream Inference}

\Cref{fig:app_0} shows the attribute-wise Average Precision (AP) gap between PDAE~\citep{zhang2022unsupervised} and DBAE. As discussed in \Cref{sec:4.1}, PDAE suffers from an \textit{information split problem} that $\rvx_T$ contains facial or hair details. The resulting attribute-wise gain aligns with that analysis with \Cref{fig:3}. \Cref{fig:app_1} shows the absolute attribute-wise AP of DBAE performance across the training setting varies on the encoder (deterministic/stochastic) and training datasets (CelebA training set / FFHQ). The attribute-wise performance is similar across the training configurations. \Cref{tab:main1_DiffuseVAE} shows the comparsion to the other baseline DiffuseVAE~\citep{pandey2022diffusevae}. From the two-stage paradigm of DiffuseVAE, its latent quality is only from the latent representation capability of the VAE module. This is an aligned result from the poor performance of $\beta$-TCVAE in \Cref{tab:main1}. Note that the image crop for CelebA in DiffuseVAE is not exactly the same as our setting.

\begin{figure}[]
     \centering
     \includegraphics[width=\textwidth, height=2.0in]{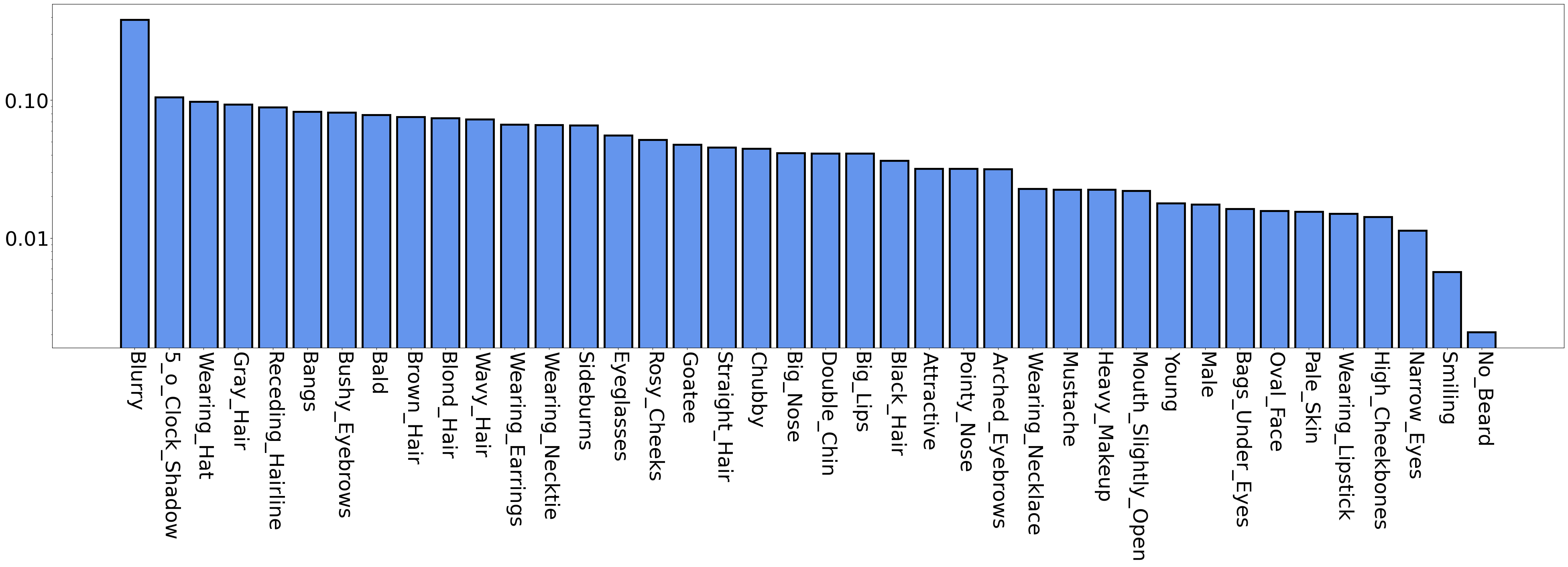}
     \caption{Attribute-wise AP gap between PDAE and DBAE-d trained on CelebA. DBAE-d performs better for all 40 attributes.}
     \label{fig:app_0}
\end{figure}

\begin{figure}[]
     \centering
     \begin{subfigure}[b]{0.49\textwidth}
         \centering
         \includegraphics[width=\textwidth, height=1.2in]{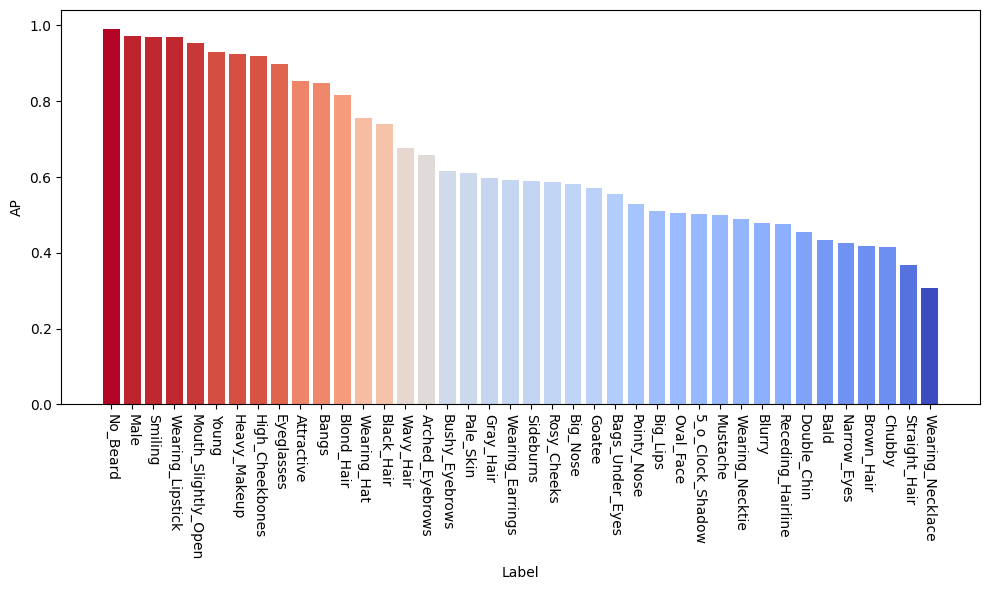}
         \caption{DBAE-d trained on CelebA}
     \end{subfigure}
     \begin{subfigure}[b]{0.49\textwidth}
         \centering
         \includegraphics[width=\textwidth, height=1.2in]{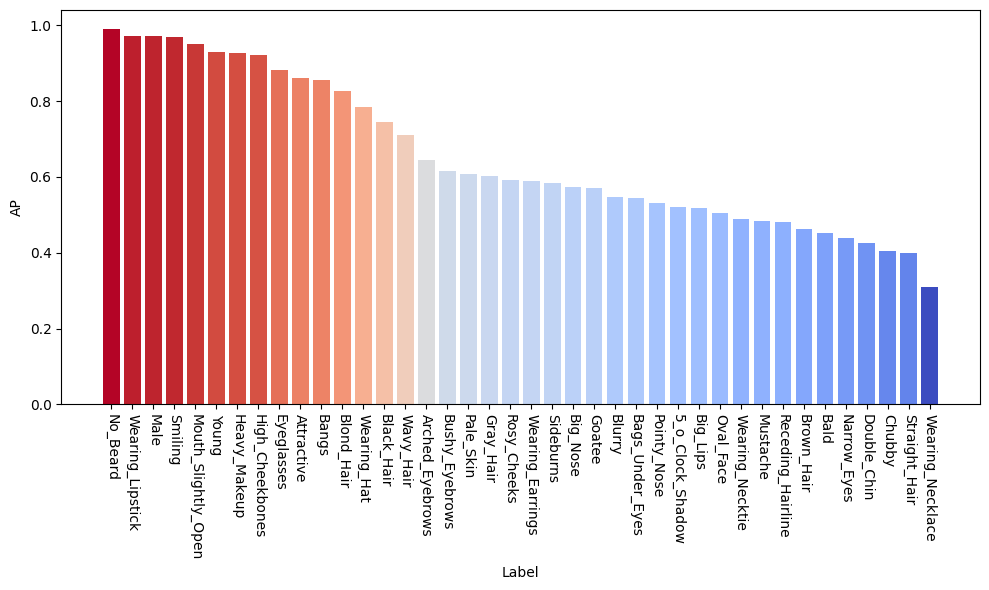}
         \caption{DBAE trained on CelebA}
     \end{subfigure}
     \hfill
     \begin{subfigure}[b]{0.49\textwidth}
         \centering
         \includegraphics[width=\textwidth, height=1.2in]{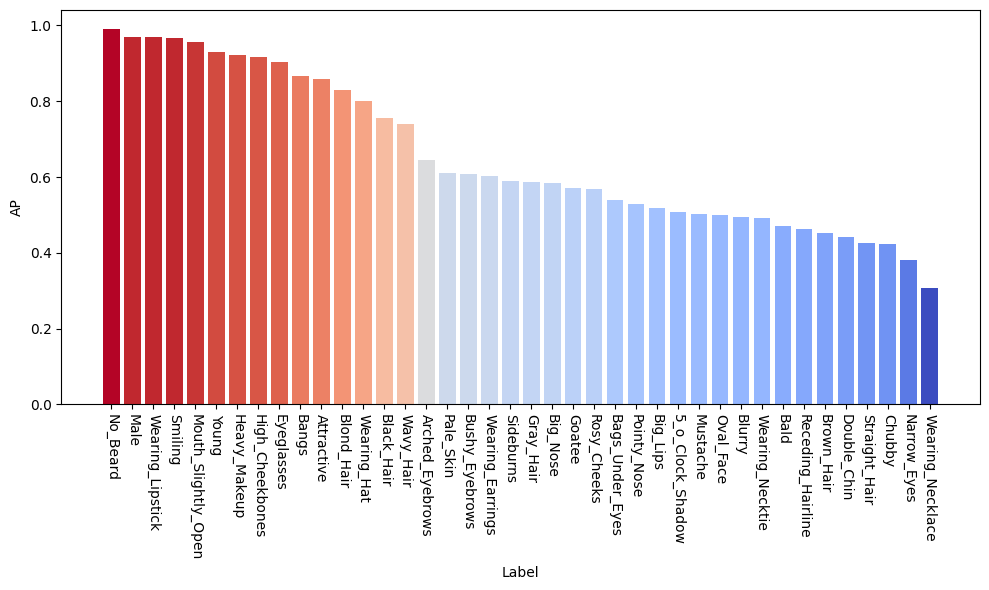}
         \caption{DBAE-d trained on FFHQ}
     \end{subfigure}
     \begin{subfigure}[b]{0.49\textwidth}
         \centering
         \includegraphics[width=\textwidth, height=1.2in]{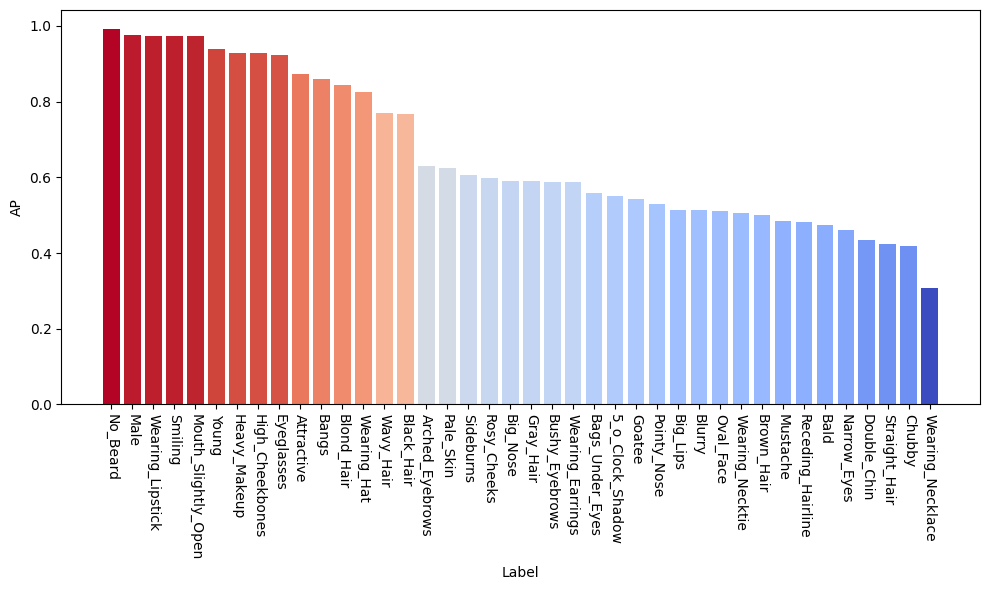}
         \caption{DBAE trained on FFHQ}
         \label{fig:app_1}
     \end{subfigure}
    \caption{Attribute-wise Average Precision across the training configuration of DBAE.}
\end{figure}
\begin{table}[h]
 %\vspace{-4mm}
    \centering 
    \caption{Linear-probe attribute prediction quality comparison for models trained on CelebA and CIFAR-10 with dim$(\rvz)=512$. The best and second-best results are highlighted in \textbf{bold}. We evaluate 5 times and report the average.}
    \adjustbox{max width=\textwidth}{%
    \begin{tabular}{l ccc p{0.01\textwidth}  c p{0.01\textwidth}|}
        \toprule
              & \multicolumn{3}{c}{\textbf{CelebA}}  && \multicolumn{1}{c}{\textbf{CIFAR-10}}                       \\
             %\cmidrule{3-5} \cmidrule{7-9}
             Method & AP ($\uparrow$) &Pearson's r ($\uparrow$)& MSE ($\downarrow$)      && AUROC ($\uparrow$) \\
            \midrule
            \midrule
             DiffuseVAE~\citep{pandey2022diffusevae}    & 0.395  &0.325&0.618&&0.736\\
          \rowcolor{gray!25}DBAE  & \textbf{0.655}&\textbf{0.643}&\textbf{0.369}&&\textbf{0.836}\\
        \bottomrule
    \end{tabular}
    }
\label{tab:main1_DiffuseVAE}
 %\vspace{-7mm}
\end{table}

\subsection{Reconstruction}
The sampling step is important for practical applications~\citep{lu2022dpm,zheng2024dpm}. We compare the reconstruction results across various sampling steps among the baselines. \Cref{tab:recon-app1,tab:recon-app2} shows the results. The proposed model performs the best results among all NFEs in $\left (10, 20, 50, 100\right )$. We borrow the performance of DDIM, DiffAE from \citep{preechakul2022diffusion}. We manually measure for PDAE~\citep{zhang2022unsupervised} using an official checkpoint in \url{https://github.com/ckczzj/PDAE}. \Cref{fig:std} shows the reconstruction statistics for a single image with inferred $\rvz$. Due to the information split on $\rvx_T$, DiffAE shows substantial variations even when utilizing ODE sampling. When DBAE also performs stochastic sampling, information is split across the sampling path, but it has less variation compared to DiffAE (9.99 vs 6.52), and DBAE induce information can be stored solely at $\rvx_T$ through the ODE path. \Cref{tab:main2_DiffuseVAE} shows that the reconstruction quality compare to DiffuseVAE~\citep{pandey2022diffusevae}. Since DiffuseVAE also requires to sample random $\rvx_T$ for the generation, this framework also suffers from \textit{information split problem}. That is the reason for poor reconstruction quality. \Cref{tab:main3_DiffuseVAE} shows the reconstruction quality for Horse and Bedroom datasets, which surpasses the DiffAE.

\begin{table}[h]
    \centering 
    \caption{Autoencoding reconstruction quality comparison with DiffuseVAE with 512-dimensional latent variable, the one yielding the best performance is highlighted in \textbf{bold}.}
    \adjustbox{max width=\textwidth}{%
    \begin{tabular}{l ccc p{0.01\textwidth}   p{0.01\textwidth}|}
        \toprule
              & \multicolumn{3}{c}{\textbf{CelebA}}  &\\
             Method &SSIM ($\uparrow$)& LPIPS ($\downarrow$)      & MSE ($\downarrow$)      & \\
            \midrule
            \midrule
             DiffuseVAE~\citep{pandey2022diffusevae}    & 0.836  &0.134&0.018&\\
          \rowcolor{gray!25}DBAE  & \textbf{0.990}&\textbf{0.014}&\textbf{4.86e-4
}&\\
        \bottomrule
    \end{tabular}
    }
\label{tab:main2_DiffuseVAE}
 %\vspace{-7mm}
\end{table}

\begin{table}[h]
    \centering 
    \caption{More results on autoencoding reconstruction quality comparison with DiffAE with 512-dimensional latent variable, the one yielding the best performance is highlighted in \textbf{bold}.}
    \adjustbox{max width=\textwidth}{%
    \begin{tabular}{l cc p{0.01\textwidth}  cc p{0.01\textwidth}|}
        \toprule
              & \multicolumn{2}{c}{\textbf{Horse}}  && \multicolumn{2}{c}{\textbf{Bedroom}} \\
             Method & SSIM ($\uparrow$) &MSE ($\downarrow$)&&  SSIM ($\uparrow$)  & MSE ($\downarrow$) \\
            \midrule
            \midrule
             DiffAE~\citep{preechakul2022diffusion}    & 0.857  &0.025&&0.910&0.017\\
          \rowcolor{gray!25}DBAE  & \textbf{0.902}&\textbf{0.012}&&\textbf{0.948}&\textbf{0.007}\\
        \bottomrule
    \end{tabular}
    }
\label{tab:main3_DiffuseVAE}
 %\vspace{-7mm}
\end{table}

\begin{figure}[]
\centering
         \includegraphics[width=\textwidth]{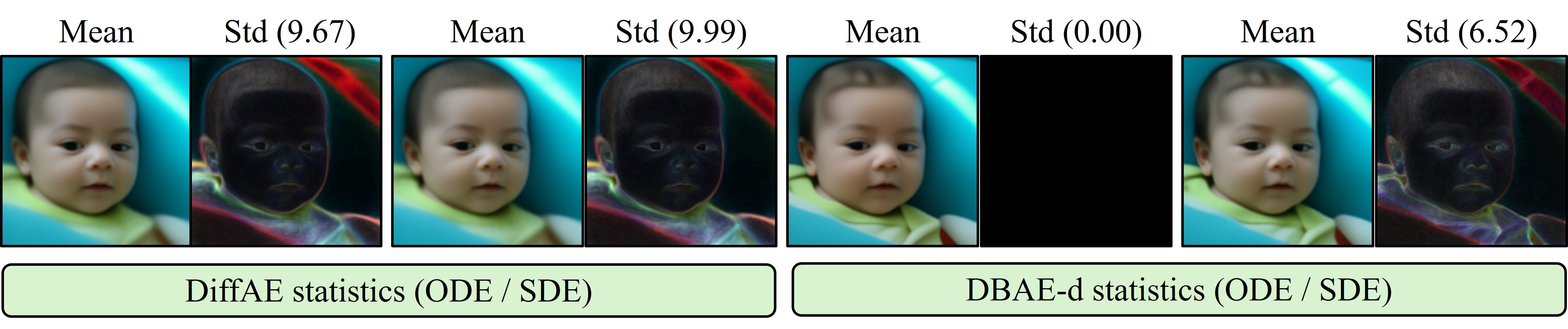}
         \caption{Reconstruction statistics with inferred $\rvz$. We quantify the mean and standard deviation of the reconstruction in the pixel space. The number in parentheses represents the dimension-wise averaged standard deviation in the pixel space.}
         \label{fig:std}
\end{figure}
\begin{table}[]
    \centering
    \caption{Autoencoding reconstruction quality comparison. All the methods are trained on the FFHQ dataset and evaluated on the 30K CelebA-HQ dataset. Among tractable and compact 512-dimensional latent variable models, the one yielding the best performance was highlighted in \textbf{bold}, followed by an \underline{underline} for the next best performer. All the metric is SSIM.}
    \adjustbox{max width=\textwidth}{%
    \begin{tabular}{lcc|cccc }
        \toprule
             Method &Tractability& Latent dim ($\downarrow$)  &NFE=10& NFE=20& NFE=50& NFE=100 \\
            \midrule
            \midrule
          DDIM (Inferred $\mathbf{x}_T$)~\citep{song2021denoising}  &\xmark  & 49,152 &0.600&0.760&0.878&0.917 \\
          DiffAE (Inferred $\mathbf{x}_T$)~\citep{preechakul2022diffusion}  & \xmark  & 49,664 &0.827&0.927&0.978&0.991 \\
          PDAE (Inferred $\mathbf{x}_T$)~\citep{zhang2022unsupervised}  & \xmark  & 49,664 &0.822&0.901&0.966&0.987 \\
          \midrule
          \midrule
          DiffAE (Random $\mathbf{x}_T$)~\citep{preechakul2022diffusion}  & \cmark  & 512 &0.707&0.695&0.683&0.677 \\
          PDAE (Random $\mathbf{x}_T$)~\citep{zhang2022unsupervised}  & \cmark  & 512 &0.728&0.713&0.697&0.689 \\
          \rowcolor{gray!25} DBAE  & \cmark  & 512 &\textbf{0.904}&\underline{0.909}&\underline{0.916}&\underline{0.920} \\
           \rowcolor{gray!25} DBAE-d   & \cmark  & 512 &\underline{0.884}&\textbf{0.920}&\textbf{0.945}&\textbf{0.954} \\
        \bottomrule
    \end{tabular}
    }
\label{tab:recon-app1}

\end{table}
\begin{table}[]
    \centering
    \caption{Autoencoding reconstruction quality comparison. All the methods are trained on the FFHQ dataset and evaluated on the 30K CelebA-HQ dataset. Among tractable and compact 512-dimensional latent variable models, the one yielding the best performance was highlighted in \textbf{bold}, followed by an \underline{underline} for the next best performer. All the metric is MSE.}
    \adjustbox{max width=\textwidth}{%
    \begin{tabular}{lcc|cccc }
        \toprule
             Method &Tractability& Latent dim ($\downarrow$)  &NFE=10& NFE=20& NFE=50& NFE=100 \\
            \midrule
            \midrule
          DDIM (Inferred $\mathbf{x}_T$)~\citep{song2021denoising}  &\xmark  & 49,152 &0.019&0.008&0.003&0.002 \\
          DiffAE (Inferred $\mathbf{x}_T$)~\citep{preechakul2022diffusion}  & \xmark  & 49,664 &0.001&0.001&0.000&0.000 \\
          PDAE (Inferred $\mathbf{x}_T$)~\citep{zhang2022unsupervised}  & \xmark  & 49,664 &0.001&0.001&0.000&0.000 \\
          \midrule
          \midrule
          DiffAE (Random $\mathbf{x}_T$)~\citep{preechakul2022diffusion}  & \cmark  & 512 &0.006&0.007&0.007&0.007 \\
          PDAE (Random $\mathbf{x}_T$)~\citep{zhang2022unsupervised}  & \cmark  & 512 &\textbf{0.004}&\underline{0.005}&\underline{0.005}&\underline{0.005} \\
           \rowcolor{gray!25} DBAE  & \cmark  & 512 &\underline{0.005}&\underline{0.005}&\underline{0.005}&\underline{0.005} \\
           \rowcolor{gray!25} DBAE-d   & \cmark  & 512 &0.006&\textbf{0.003}&\textbf{0.002}&\textbf{0.002} \\
        \bottomrule
    \end{tabular}
    }
\label{tab:recon-app2}
\end{table}

\subsection{Unconditional Generation}

The sampling step is also important for unconditional generation~\citep{lu2022dpm,zheng2024dpm}. We reduce the NFE=1000 in \Cref{tab:main4} to NFE=500 and NFE=250 in \Cref{tab:Uncond_500}. As the number of function evaluations (NFE) decreased, DDPM~\citep{ho2020denoising} showed a significant drop in performance, while DBAE and the other baselines maintained a similar performance trend.

Although DBAE improves sample fidelity which is crucial for practical uses, sample diversity remains an important virtue depending on the specific application scenarios~\citep{kim2024training,sadat2024cads}. In the area of generative models, there is a trade-off between fidelity and diversity~\citep{dhariwal2021diffusion}. Therefore, providing a balance between these two virtues is important. We offer an option based on DBAE. The $h$-transformed forward SDE we designed in \cref{eq:doobours} is governed by the determination of the endpoint distribution. If we set endpoint distribution as \cref{eq:prior_interpolation}, we can achieve smooth transitions between DiffAE and DBAE in terms of $\rvx_T$ distribution. Modeling $q_{\boldsymbol{\phi},\boldsymbol{\psi}}(\rvx_T|\rvx_0)$ as a Gaussian distribution (with learnable mean and covariance) with a certain variance or higher can also be considered as an indirect approach.
\begin{align}
\rvx_T~\sim \lambda \times q_{\boldsymbol{\phi},\boldsymbol{\psi}}(\rvx_T|\rvx_0) + (1-\lambda)\times\mathcal{N}(\mathbf{0},\mathbf{I}) \label{eq:prior_interpolation}
\end{align}

\begin{table}[h]
    \centering
    \caption{Unconditional generation with reduced NFE $\in\left\{250,500\right\}$ on FFHQ. ‘+AE’ indicates the use of the inferred distribution $q_{\boldsymbol{\phi}}(\rvz)$ instead of $p_{\boldsymbol{\omega}}(\rvz)$}
    \adjustbox{max width=\textwidth}{%
    \begin{tabular}{l|cccc|cccc }
        \toprule
                 & \multicolumn{4}{c}{NFE = 500} & \multicolumn{4}{c}{NFE = 250}\\
                 \midrule
             Method  & Prec ($\uparrow$) & IS ($\uparrow$)& FID 50k ($\downarrow$)      & Rec ($\uparrow$)& Prec ($\uparrow$) & IS ($\uparrow$)& FID 50k ($\downarrow$)      & Rec ($\uparrow$) \\
            \midrule
            \midrule
          DDIM~\citep{song2021denoising} & 0.705&3.16&11.33&0.439        & 0.706&3.16&11.48&\textbf{0.453}\\
          DDPM~\citep{ho2020denoising} & 0.589&2.92&22.10&0.251                 & 0.390&2.76&39.55&0.093\\
          DiffAE ~\citep{preechakul2022diffusion}  & 0.755&2.98&\textbf{9.71}&\textbf{0.451}      & 0.755&3.04&\textbf{10.24}&0.443\\
          PDAE~\citep{zhang2022unsupervised} &0.687&2.24&46.67&0.175                     &0.709&2.25&44.82&0.189 \\   
          \rowcolor{gray!25}DBAE &\textbf{0.774}&\textbf{3.91}&11.71&0.391                       &\textbf{0.758}&\textbf{3.90}&13.88&0.381\\
           \midrule
            \midrule
             DiffAE+AE&\textbf{0.750}&\textbf{3.61}&3.21&0.689                                     &\textbf{0.750}&\textbf{3.61}&3.87&0.666\\
            PDAE+AE  & 0.710&3.53&7.11&0.598                                                 & 0.721&3.54&6.58&0.608\\
             \rowcolor{gray!25}DBAE+AE& 0.748&3.57&\textbf{1.99}&\textbf{0.702}            & 0.731&3.58&\textbf{3.36}&\textbf{0.694} \\
        \bottomrule
    \end{tabular}
    }
\label{tab:Uncond_500}
\end{table}

\newpage
\subsection{Results with Intel Gaudi v2 hardware.}

We conducted evaluations across various infrastructures to assess experimental reproducibility. The performance of the trained model (DBAE-d) was evaluated on both the Nvidia A100 and Intel Gaudi v2 chips. The reconstruction results for both chips are presented in \Cref{tab:gaudi}. Reconstruction performance on each chip was assessed using various metrics, revealing negligible errors across all metrics. To facilitate reproducibility, we provide the code at \url{https://github.com/NAVER-INTEL-Co-Lab/gaudi-dbae} for reproducing our experiments on Intel Gaudi v2 chips.

\begin{table}[h]
    \centering 
    \caption{Regenerated results of \Cref{tab:main2} across multiple hardwares.}
    \adjustbox{max width=\textwidth}{%
    \begin{tabular}{l ccc p{0.01\textwidth}   p{0.01\textwidth}|}
        \toprule
             Hardware &SSIM ($\uparrow$)& LPIPS ($\downarrow$)      & MSE ($\downarrow$)      \\
            \midrule
            \midrule
             Nvidia A100    & 0.953  &0.072&2.49e-3\\
          Intel Gaudi v2  &0.956 &0.073&2.47e-3\\
        \bottomrule
    \end{tabular}
    }
\label{tab:gaudi}
\end{table}

\subsection{Additional Samples}

\textbf{Interpolation}
\Cref{fig:ffhq_interpol_whole,fig:horse_bedroom_interpol_whole} shows the interpolation results of DBAE trained on FFHQ, Horse, and Bedroom. The two paired rows indicate the endpoints $\rvx_T$ and generated image $\rvx_0$ each. \Cref{fig:ffhq_interpol_compare} compares the interpolation results with PDAE~\citep{zhang2022unsupervised} and DiffAE~\citep{preechakul2022diffusion} under tractable inference condition. PDAE and DiffAE result in unnatural interpolations without inferring $\rvx_T$, compared to DBAE.

\textbf{Attribute Manipulation}
\Cref{fig:app_att} shows additional manipulation results using a linear classifier, including multiple attributes editing on a single image. \Cref{fig:ffhq_classifier_vari} provides the variations in the manipulation method within DBAE. The top row utilizes the manipulated $\rvx_T$ both for the starting point of the generative process and score network condition input. The bottom row utilizes the manipulated $\rvx_T$ only for the score network condition input, while the starting point remains the original image’s $\rvx_T$. Using manipulated $\rvx_T$ both for starting and conditioning results in more dramatic editing, and we expect to be able to adjust this according to the user's desires.

\textbf{Generation Trajectory}
\Cref{fig:trajectory_whole} shows the sampling trajectory of DBAE from $\rvx_T$ to $\rvx_0$ with stochastic sampling for FFHQ, Horse, and Bedroom.

\textbf{Unconditional Generation}
\Cref{fig:uncurated_ffhq,fig:uncurated_celeba} show the randomly generated uncurated samples from DBAE for FFHQ and CelebA.

\begin{figure}[]
\centering
         \includegraphics[width=\textwidth]{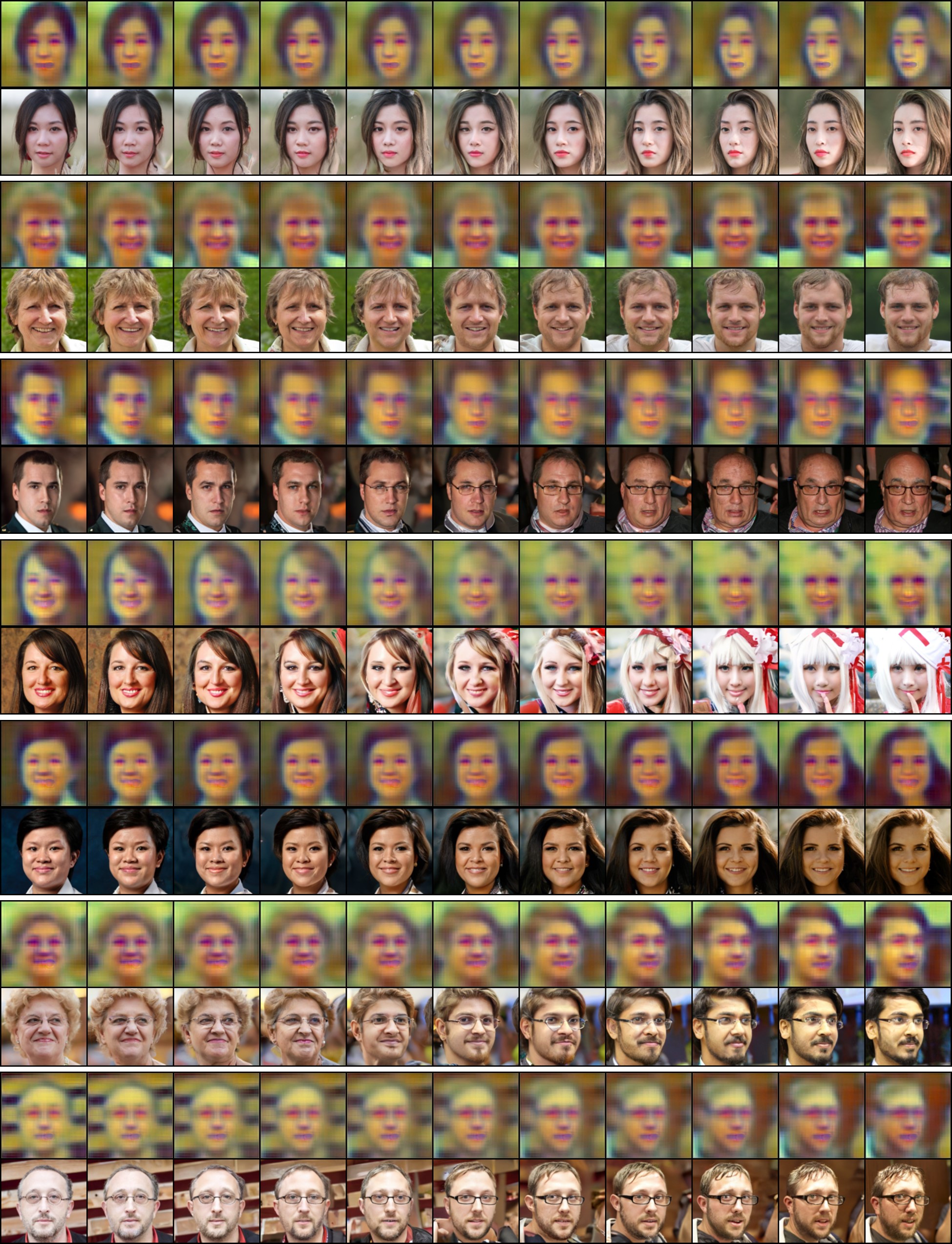}
         \caption{FFHQ interpolations results with corresponding endpoints $\rvx_T$. The leftmost and rightmost images are real images.}
         \label{fig:ffhq_interpol_whole}
\end{figure}

\begin{figure}[]
     \centering
     \begin{subfigure}[b]{\textwidth}
         \centering
         \includegraphics[width=\textwidth]{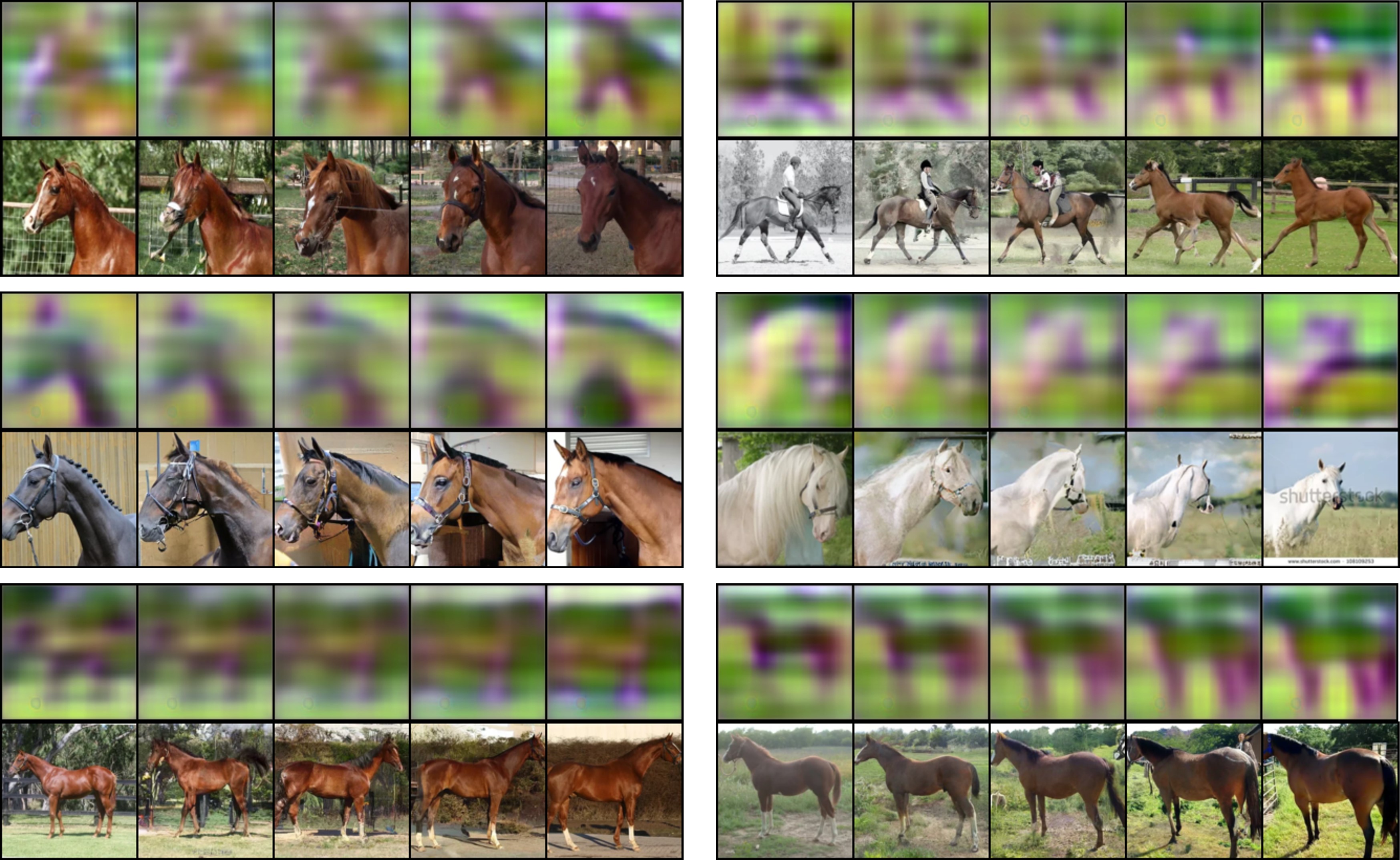}
         \vspace{5mm}
     \end{subfigure}
     \begin{subfigure}[b]{\textwidth}
         \centering
         \includegraphics[width=\textwidth]{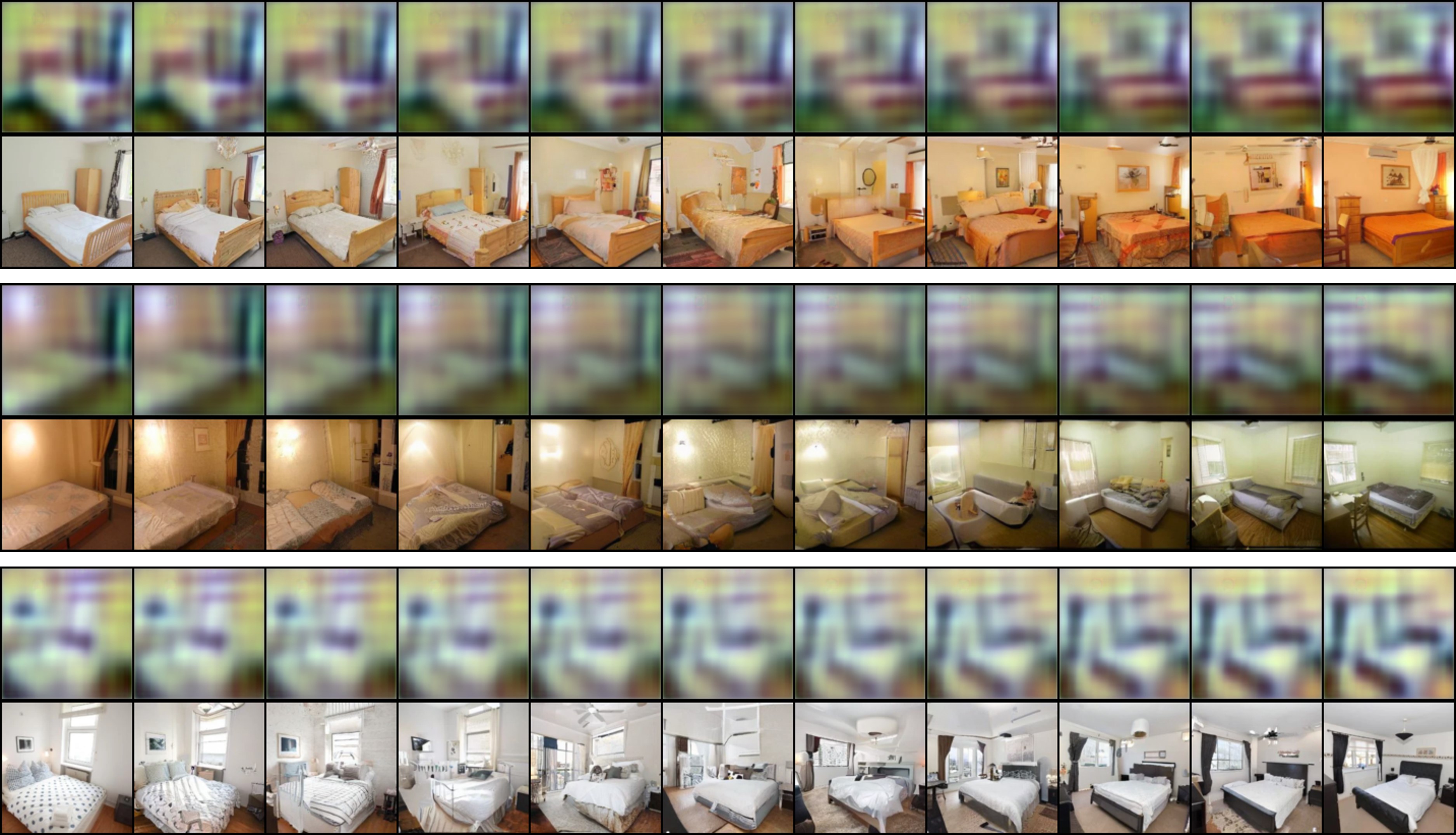}
     \end{subfigure}
    \caption{Horse and Bedroom interpolations results with corresponding endpoints $\rvx_T$. The leftmost and rightmost images are real images.}
    \label{fig:horse_bedroom_interpol_whole}
\end{figure}

\begin{figure}[]
\centering
         \includegraphics[width=\textwidth]{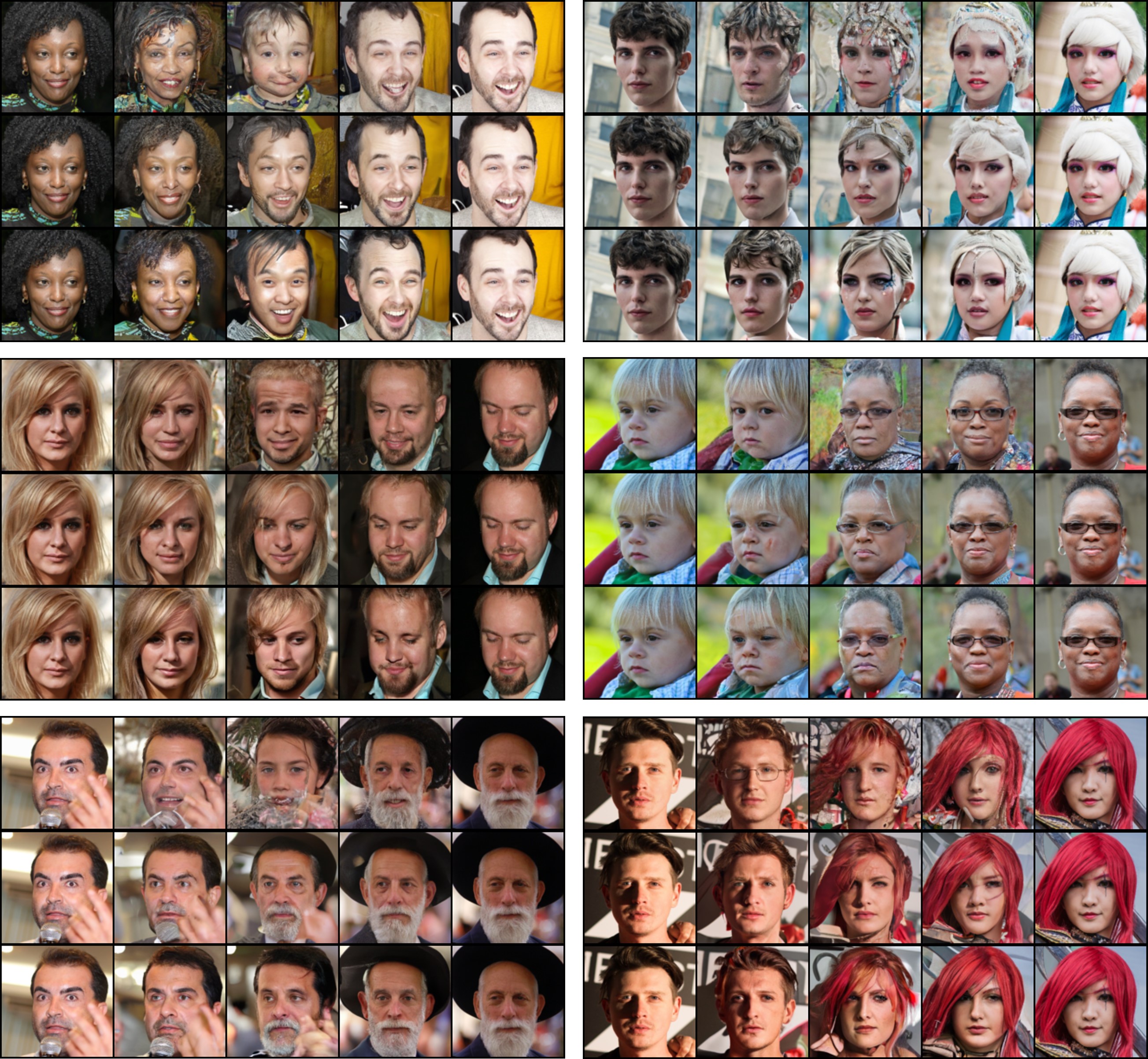}
         \caption{FFHQ interpolation comparison: PDAE~\citep{zhang2022unsupervised} (top), DiffAE~\citep{preechakul2022diffusion} (middle) and DBAE (bottom).}
         \label{fig:ffhq_interpol_compare}
\end{figure}

\begin{figure}[]
\centering
         \includegraphics[width=\textwidth]{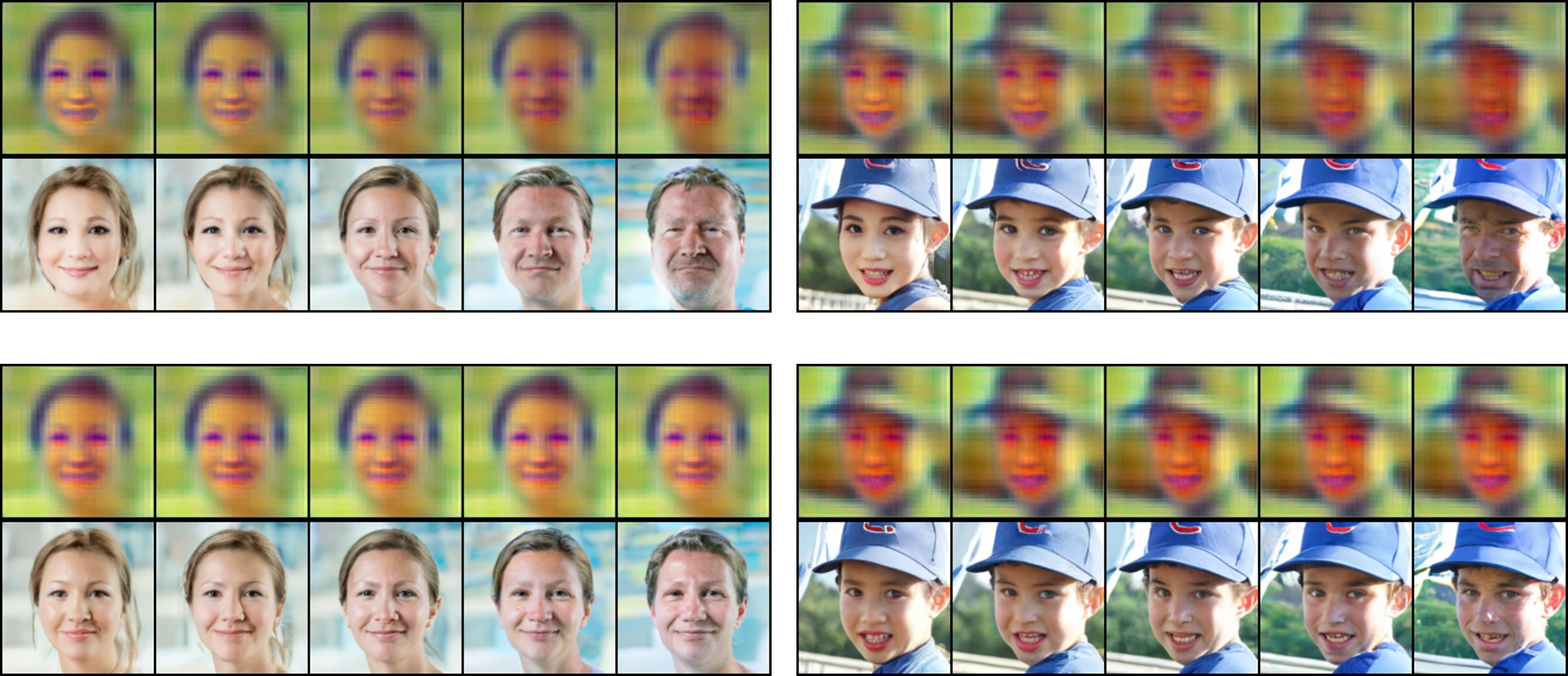}
         \caption{Attribute manipulation on FFHQ using a linear classifier and corresponding endpoints $\rvx_T$. The top results utilize the manipulated $\rvx_T$
  both as the starting point of the sampling trajectory and as a condition input to the score network. The bottom results use the manipulated $\rvx_T$ solely as the condition input and maintain the original $\rvx_T$ as the starting point of the sampling trajectory. All the middle images are the original images.}
         \label{fig:ffhq_classifier_vari}
\end{figure}

\begin{figure}[]
     \centering
     \begin{subfigure}[b]{\textwidth}
         \centering
         \includegraphics[width=\textwidth]{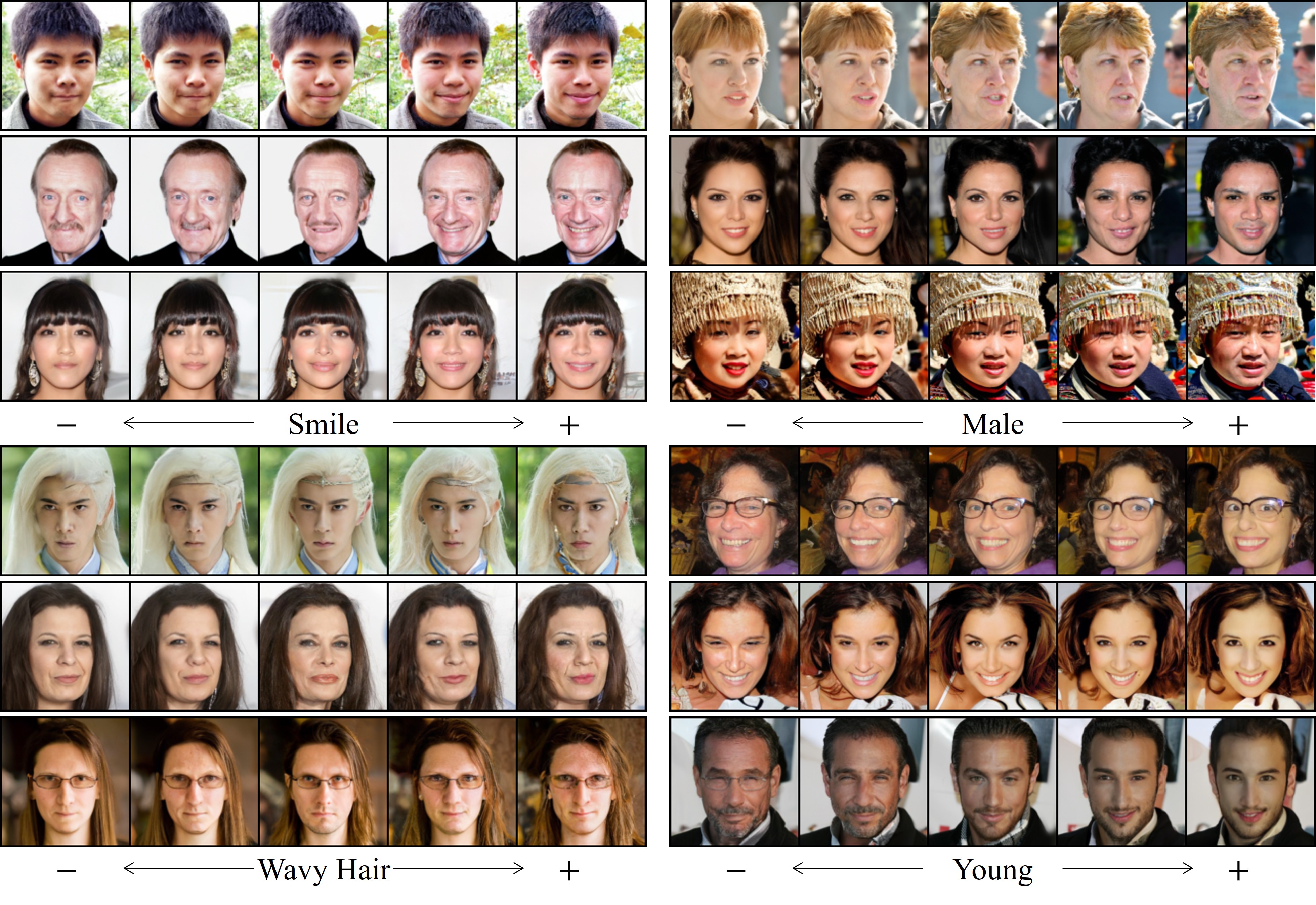}
         \caption{Smooth traversals in the direction of attribute manipulation. All the middle images are the original images.}
         \vspace{5mm}
     \end{subfigure}
     \begin{subfigure}[b]{\textwidth}
         \centering
         \includegraphics[width=\textwidth]{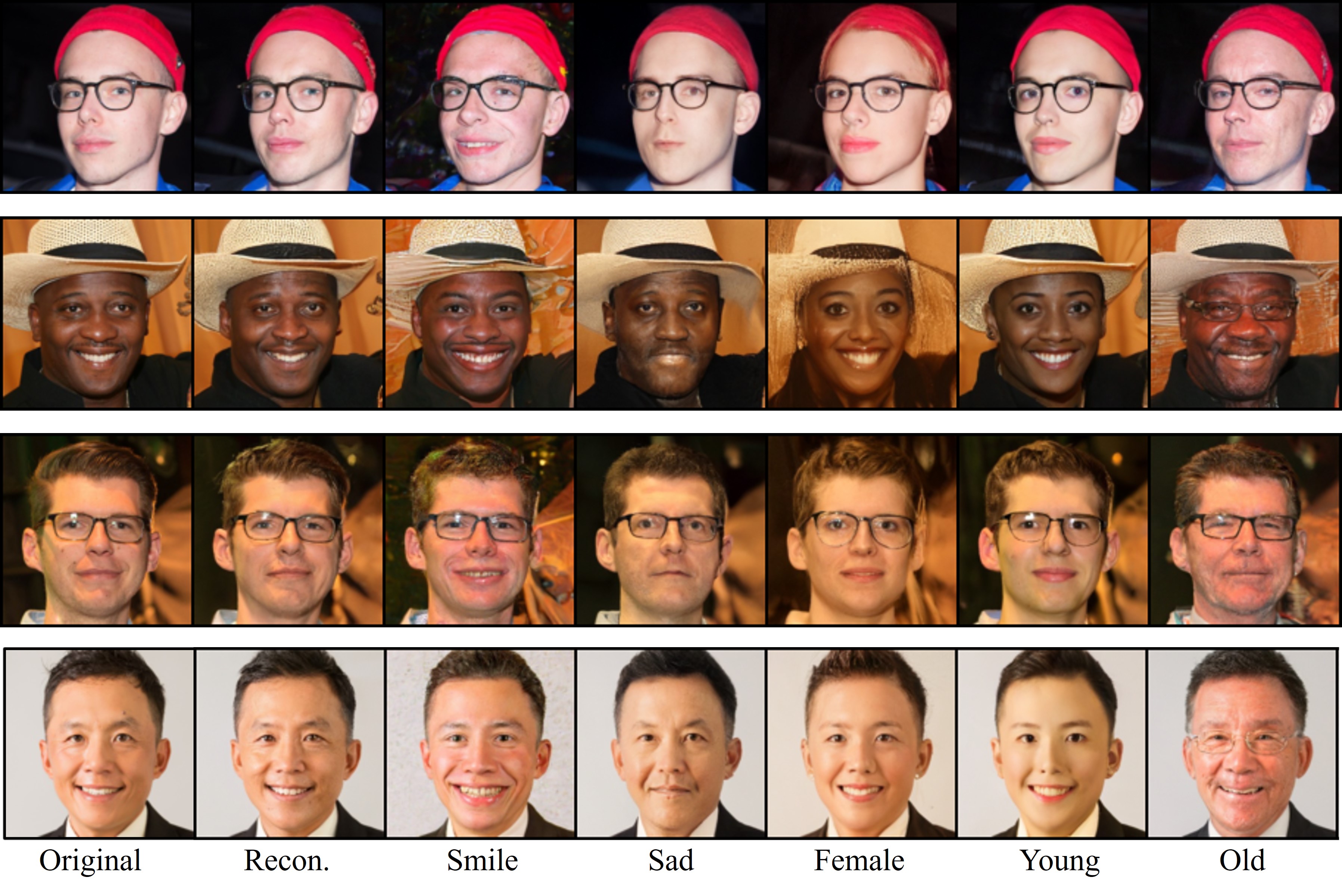}
         \caption{Multiple attribute manipulation on a single image.}
     \end{subfigure}
    \caption{Attribute manipulation using a linear classifier on FFHQ and CelebA-HQ.}
    \label{fig:app_att}
\end{figure}

\begin{figure}[]
     \centering
     \begin{subfigure}[b]{\textwidth}
         \centering
         \includegraphics[width=\textwidth]{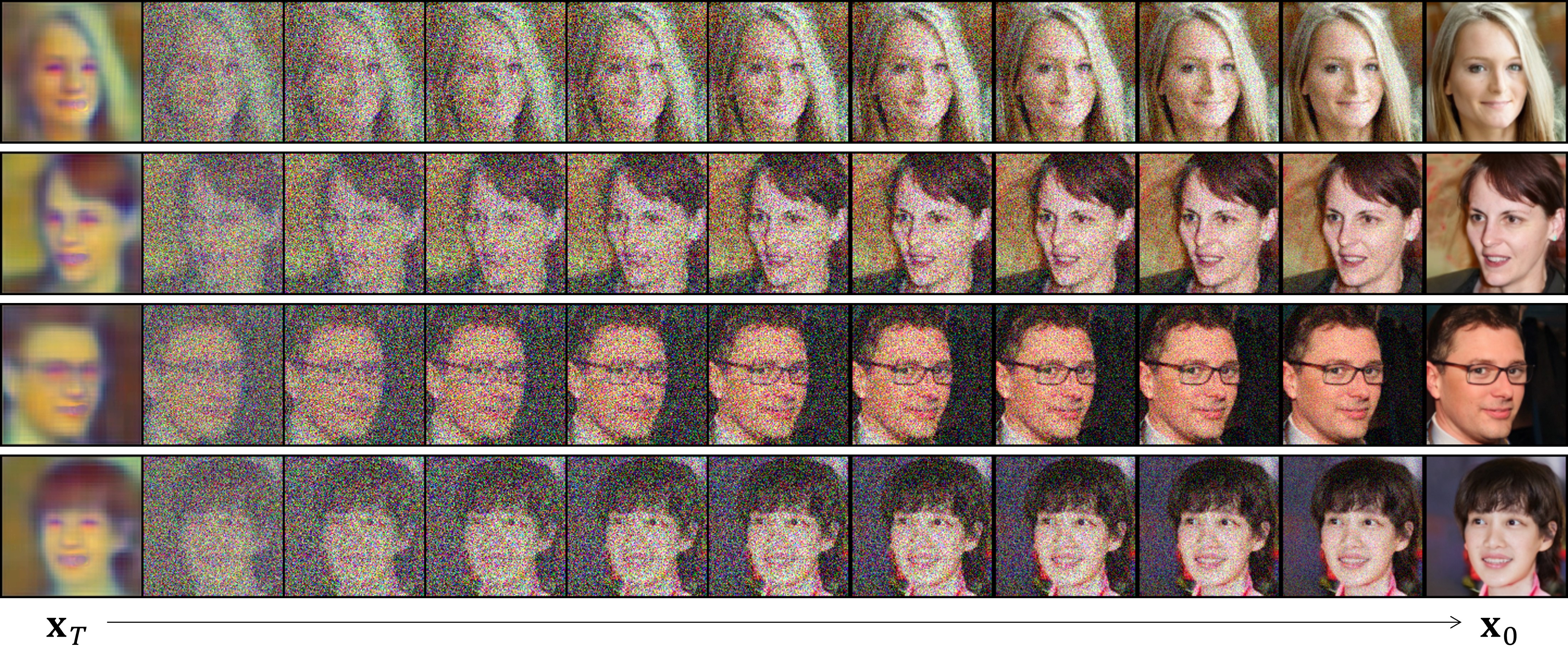}
         \caption{Sampling trajectory of DBAE trained on FFHQ.}
         \vspace{5mm}
         \label{fig:trajectory_ffhq}
     \end{subfigure}
     \begin{subfigure}[b]{\textwidth}
         \centering
         \includegraphics[width=\textwidth]{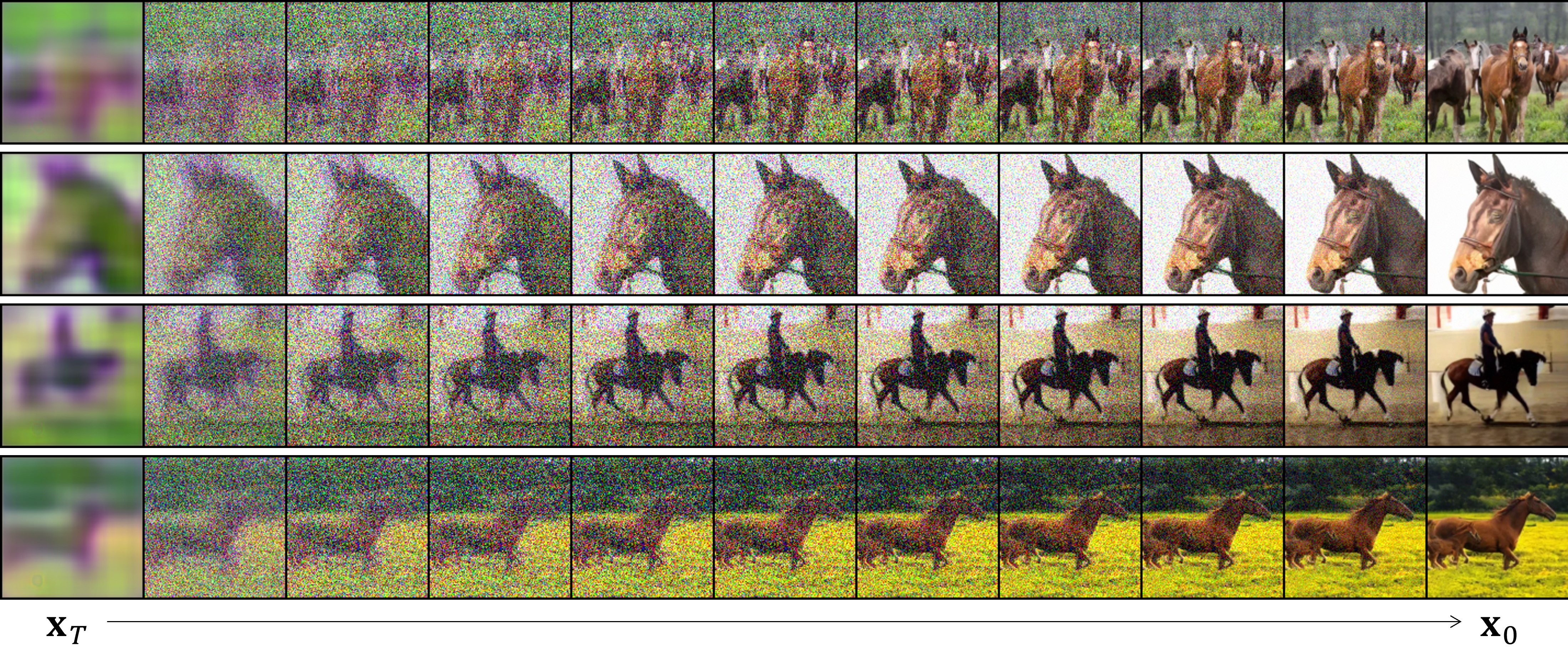}
         \caption{Sampling trajectory of DBAE trained on Horse.}
         \vspace{5mm}
         \label{fig:trajectory_horse}
     \end{subfigure}
     \vspace{5mm}
     \begin{subfigure}[b]{\textwidth}
         \centering
         \includegraphics[width=\textwidth]{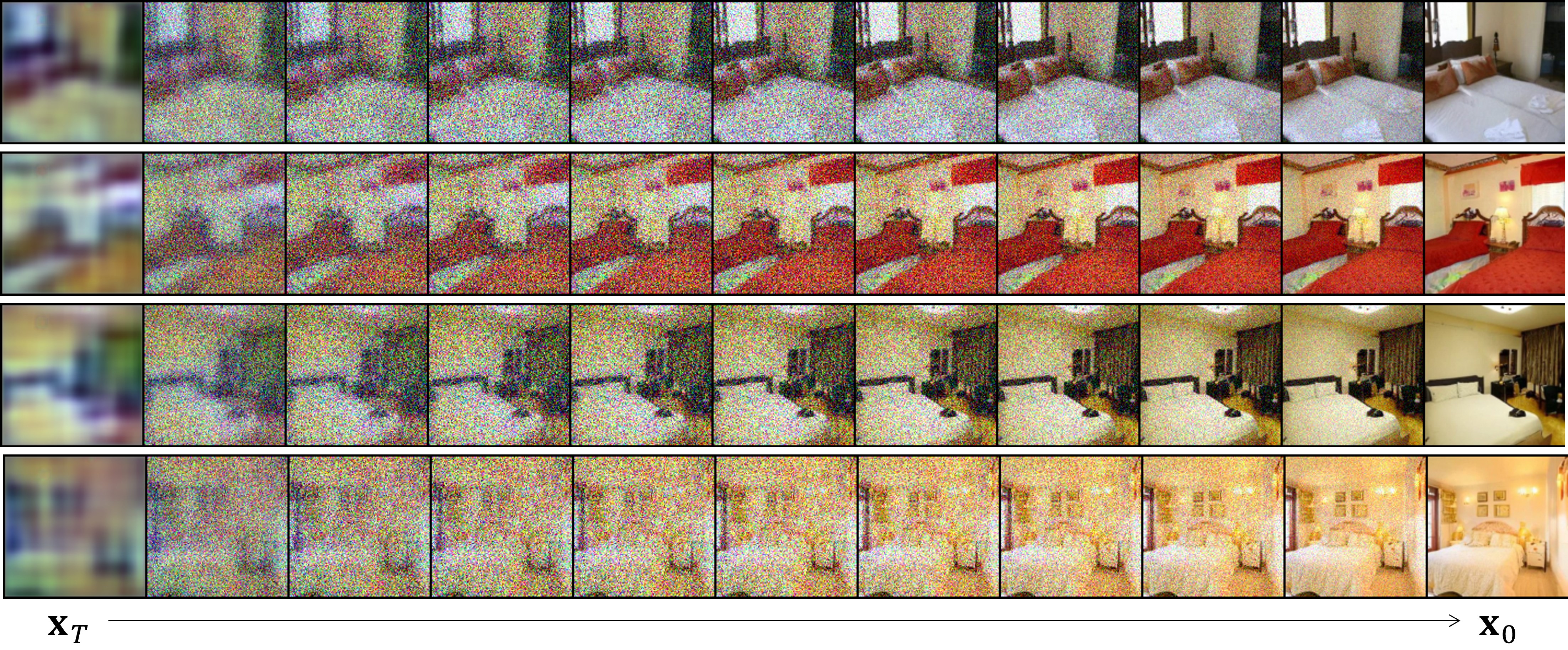}
         \caption{Sampling trajectory of DBAE trained on Bedroom.}
         \label{fig:trajectory_bedroom}
     \end{subfigure}
    \caption{Stochastic sampling trajectory of DBAE trained on various datasets.}
    \label{fig:trajectory_whole}
\end{figure}

\begin{figure}[]
     \centering
     \begin{subfigure}[b]{\textwidth}
         \centering
         \includegraphics[width=\textwidth]{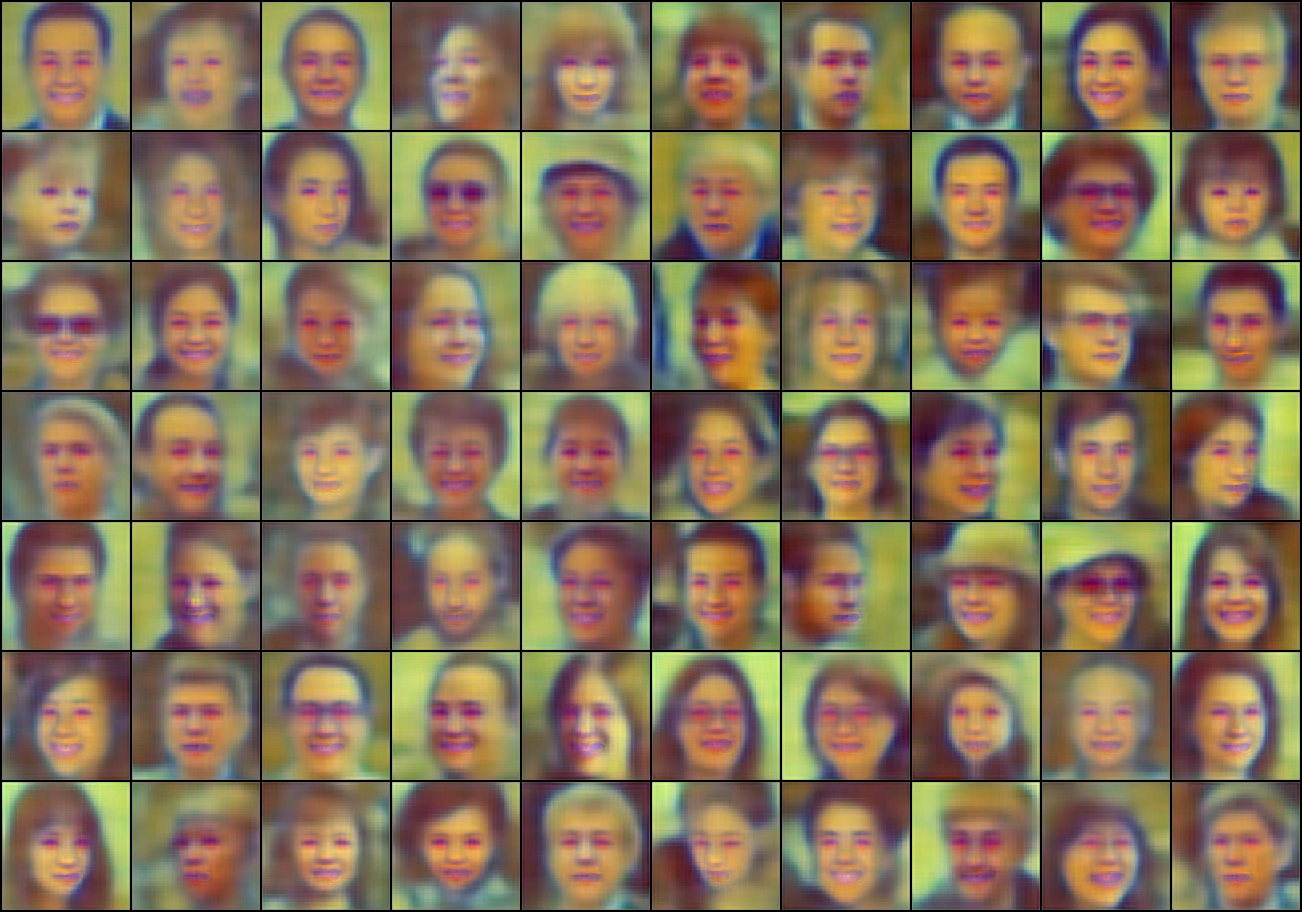}
         \caption{Generated endpoints $\rvx_T$}
     \end{subfigure}
     \begin{subfigure}[b]{\textwidth}
         \centering
         \includegraphics[width=\textwidth]{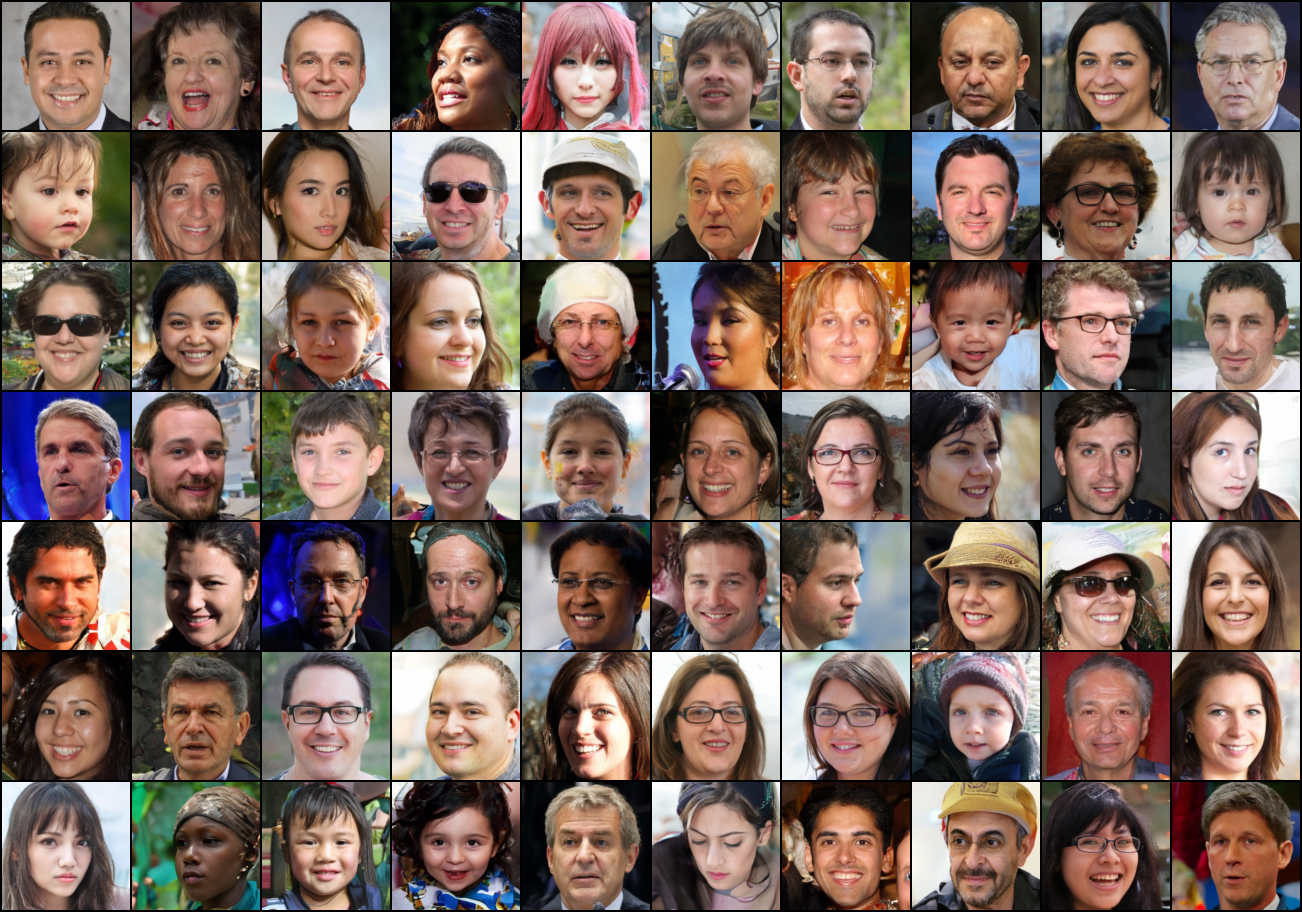}
         \caption{Generated images $\rvx_0$}
     \end{subfigure}
    \caption{Uncurated generated samples with corresponding endpoints from DBAE trained on FFHQ with unconditional generation.}
    \label{fig:uncurated_ffhq}
\end{figure}

\begin{figure}[]
     \centering
     \begin{subfigure}[b]{\textwidth}
         \centering
         \includegraphics[width=\textwidth]{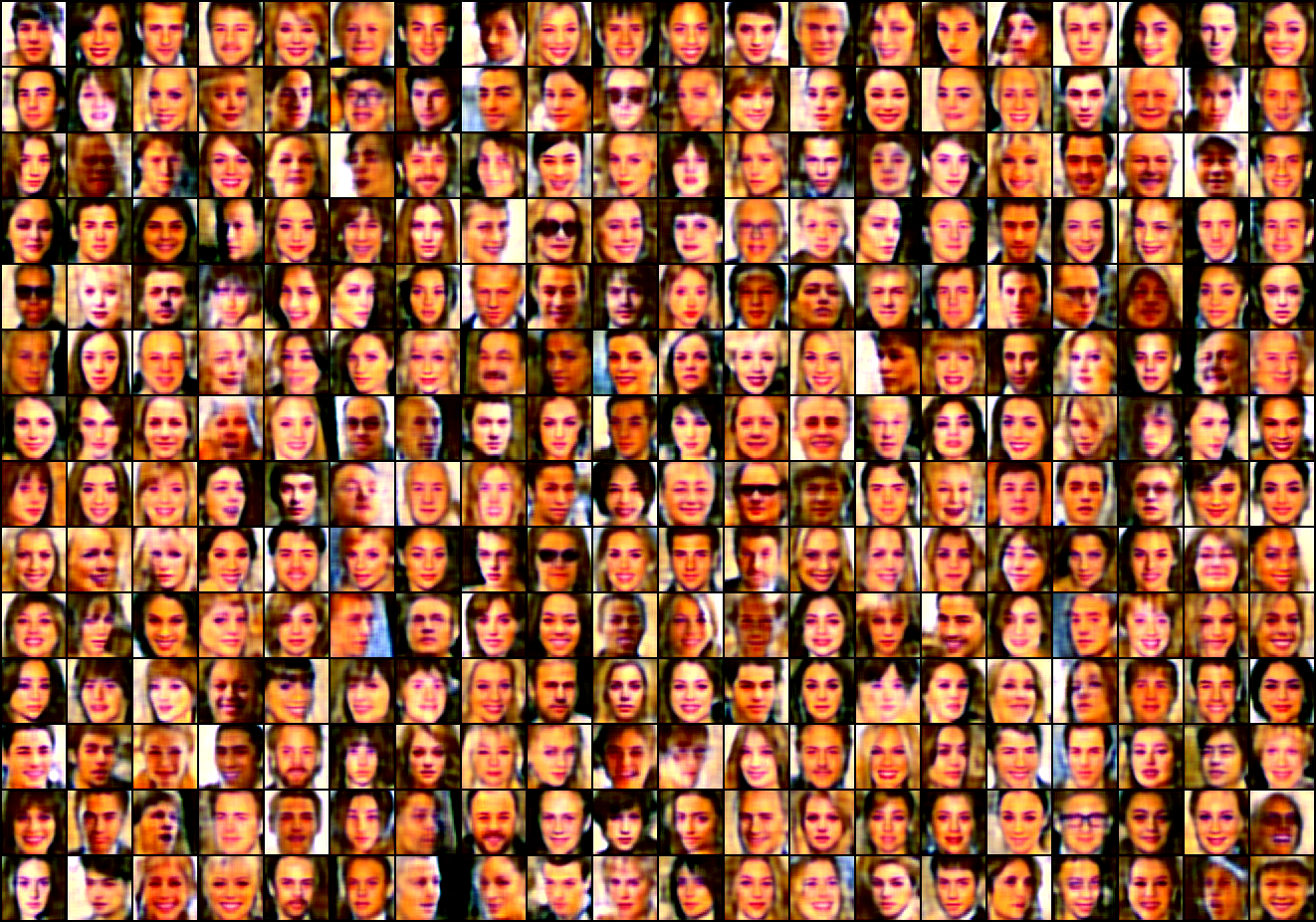}
         \caption{Generated endpoints $\rvx_T$}
     \end{subfigure}
     \begin{subfigure}[b]{\textwidth}
         \centering
         \includegraphics[width=\textwidth]{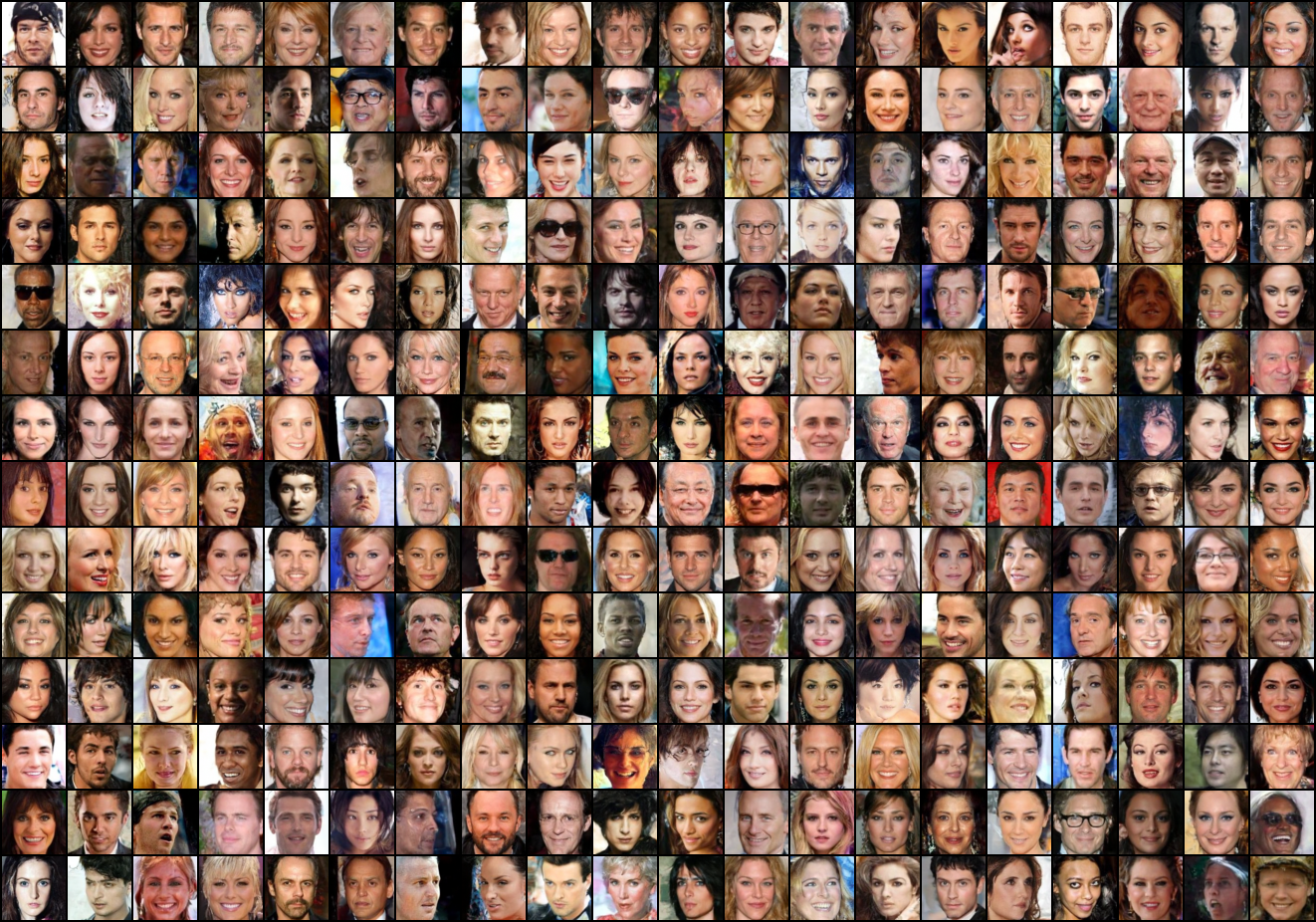}
         \caption{Generated images $\rvx_0$}
     \end{subfigure}
    \caption{Uncurated generated samples with corresponding endpoints from DBAE trained on CelebA with unconditional generation.}
    \label{fig:uncurated_celeba}
\end{figure}

\end{document}